\documentclass{article}

\usepackage{ml4cps}

\usepackage[utf8]{inputenc} 
\usepackage[T1]{fontenc}    
\usepackage{lmodern}        
\usepackage{amsmath}        
\usepackage{amsfonts}       
\usepackage{nicefrac}       
\usepackage{microtype}      
\usepackage{hyperref}       
\usepackage{url}            
\usepackage{booktabs}       
\usepackage{graphicx}       
\usepackage{enumitem}       
\usepackage{natbib}         
\usepackage{doi}            
\usepackage{lipsum}         
\usepackage{cleveref}       
\usepackage{multirow}
\usepackage{tikz}
\usepackage{xcolor}
\usepackage{booktabs} 
\usepackage{fancyhdr}
\usepackage{listings}
\usepackage{titlesec}
\usepackage{tabularx}
\usepackage{multirow}
\usepackage{graphicx}
\usepackage{mathrsfs}
\usepackage{amsbsy}
\usepackage{physics}
\usepackage{threeparttable}
\usepackage{subcaption} 

\usepackage{amsthm, amsmath, amssymb}

\newtheorem{theorem}{Theorem}[section]

\titleformat{\section}[block]{\normalfont\Large\bfseries}{\thesection}{1em}{}
\titlespacing*{\section}{0pt}{*3}{*1}

\definecolor{lightgray}{rgb}{0.95, 0.95, 0.95} 
\definecolor{gray}{rgb}{0.4, 0.4, 0.4}         
\definecolor{blue}{rgb}{0, 0, 0.8}             
\definecolor{purple}{rgb}{0.58, 0, 0.82}       

\lstdefinestyle{mystyle}{
    backgroundcolor=\color{lightgray},   
    commentstyle=\color{gray},           
    keywordstyle=\color{blue},           
    numberstyle=\tiny\color{gray},       
    stringstyle=\color{purple},          
    basicstyle=\ttfamily\footnotesize,     
    breakatwhitespace=false,             
    breaklines=true,                     
    captionpos=b,                        
    keepspaces=true,                     
    numbers=left,                        
    numbersep=5pt,                       
    showspaces=false,                    
    showstringspaces=false,              
    showtabs=false,                      
    tabsize=2                            
}

\title{Hybrid Quantum-Classical PINNs for Scientific Computing: A Multi-GPU Open-Source Framework}

\date{}

\newif\ifblinded
\blindedfalse   

\newif\ifuniqueAffiliation
\uniqueAffiliationtrue

\ifblinded
    \author{}
\else
    \ifuniqueAffiliation
    \author{
        Ziv Chen\footnotemark[1]\thanks{The Andrew and Erna Viterbi Faculty of Electrical \& Computer Engineering, Technion Israel Institute of Technology, Haifa, Israel.}\thanks{Helen Diller Quantum Center, Technion Israel Institute of Technology, Haifa, Israel.} \\
        \texttt{ziv.chen@campus.technion.ac.il} \\
        \And
        Hemanth Chandravamsi\footnotemark[1]\footnotemark[2]\thanks{Corresponding author.} \\
        \texttt{hemanth@campus.technion.ac.il} \\
        \And
        Shimon Pisnoy\thanks{Equal contribution.}\thanks{Faculty of Mechanical Engineering, Technion Israel Institute of Technology, Haifa, Israel.} \\
        \texttt{shimonpi@campus.technion.ac.il} \\
        \And
        Aaron Goldgewert\thanks{Faculty of Electrical and Computer Engineering, Cornell University, Ithaca, NY, USA.} \\
        \texttt{ajg322@cornell.edu} \\
        \And
        Gal Shaviner\footnotemark[2] \\
        \texttt{gal.shaviner@campus.technion.ac.il} \\
        \And
        Boris Shrager\footnotemark[2] \\
        \texttt{boris.sh@campus.technion.ac.il} \\
        \And
        Steven H. Frankel\footnotemark[2]\footnotemark[5] \\
        \texttt{frankel@me.technion.ac.il}
    }
    \else
    \usepackage{authblk}
    
    \newbox{\orcid}\sbox{\orcid}{\includegraphics[scale=0.06]{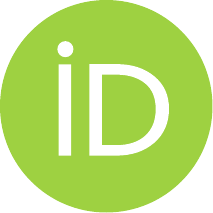}} 
    \author[1]{%
        \href{https://orcid.org/0000-0000-0000-0000}{\usebox{\orcid}\hspace{1mm}First author}%
    }
    \author[1,2]{%
        \href{https://orcid.org/0000-0000-0000-0000}{\usebox{\orcid}\hspace{1mm}Second author}%
    }
    \affil[1]{Affiliation, Address}
    \affil[2]{Affiliation, Address}
    \fi
\fi

\ifblinded
\hypersetup{
pdftitle={contribution title},
pdfsubject={contribution subject},
pdfauthor={Anonymous},
pdfkeywords={PINNs, Activation function, Optimization},
}
\else
\hypersetup{
pdftitle={contribution title},
pdfsubject={contribution subject},
pdfauthor={First author, Second author},
pdfkeywords={PINNs, Activation function, Optimization},
}
\fi

\begin{document}
\setlength{\parskip}{0pt}
\setlength{\parindent}{1em}

\maketitle
    
\begin{abstract}

We present QPINNACLE, an open-source computational framework for physics-informed neural networks (PINNs) that integrates modern training strategies, multi-GPU acceleration, and hybrid quantum-classical architectures within a unified modular workflow. The framework enables systematic evaluation of PINN performance across benchmark problems including 1D hyperbolic conservation laws, incompressible flows, and electromagnetic wave propagation. It supports a range of architectural and training enhancements, including Fourier feature embeddings, random weight factorization, strict boundary condition enforcement, adaptive loss balancing, curriculum training, and second-order optimization strategies, with extensibility to additional methods. We provide a comprehensive benchmark study quantifying the impact of these methods on convergence, accuracy, and computational cost, and analyze distributed data parallel scaling in terms of runtime and memory efficiency. In addition, we extend the framework to hybrid quantum-classical PINNs and derive a formal estimate for circuit-evaluation complexity under parameter-shift differentiation. Results highlight the sensitivity of PINNs to architectural and training choices, confirm their high computational cost relative to classical solvers, and identify regimes where hybrid quantum models offer improved parameter efficiency. QPINNACLE provides a foundation for benchmarking physics-informed learning methods and guiding future developments through quantitative assessment of their trade-offs.

\end{abstract}







\keywords{Physics informed neural networks \and Quantum PINNs \and Optimization}

\clearpage
\tableofcontents

\section{Introduction}

\subsection{Physics-informed neural networks}

Many computational physics workloads reduce to forward or inverse problems constrained by differential equations. Classical discretization methods offer reliability and mature error analysis, but they also impose methodological overhead in those workflows that require differentiable mappings, inclusion of sparse measurements alongside governing laws, or separate standalone simulations when geometries or problem parameters change. Physics-informed neural networks (PINNs) provide an alternative computational strategy: a neural network $u_{\theta}(\mathbf{x},t)$ approximates the solution field and is trained by penalizing violations of the governing equations and constraints at collocation points \cite{lagaris1998artificial,raissi2019pinns}.

In PINNs, the solution is approximated by a parametric function $u_{\theta}(\mathbf{x},t)$. Residuals of the governing equations and constraints are evaluated at collocation points using automatic differentiation, and training proceeds by minimizing a composite objective that penalizes violations of the PDE, boundary conditions, and optional observational data. This construction delivers several attractive properties. First, PINNs are mesh-free in the sense that the cost function is evaluated at a set of scattered collocation points, without relying on a predefined spatial discretization or connectivity typical of grid-based methods. Second, the learned surrogate is differentiable with respect to inputs and parameters, which supports inverse problems \cite{cai2021physics}, system identification \cite{wang2025physics}, and gradient-based design loops \cite{sun2020surrogate} without the need to implement discrete adjoints. Third, data assimilation can be integrated by appending measurement-mismatch terms to the same training objective, enabling hybrid regimes in which both observations and physics constrain the state estimation \cite{zaki2025data,karniadakis2021piml}. These advantages explain the broad adoption of PINNs across transport, wave propagation, electromagnetics, and fluid mechanics. 


\subsection{Limitations and failure modes}

Although the underlying problem is mathematically well-posed, training PINNs to achieve high accuracy remains computationally demanding even with current state-of-the-art approaches \cite{wang2025simulating}. Moreover, the nonconvex optimization landscape implies that convergence to a global minimum is generally not guaranteed, and error bounds are a priori non-deterministic. A primary challenge in training PINNs is that the optimization seeks a high-dimensional continuous function that satisfies multiple constraints over an extended spatiotemporal domain. The use of soft constraint enforcement through PDE residual penalties leads to poorly conditioned optimization problems \cite{krishnapriyan2021failuremodes}, particularly for nonlinear systems and problems with broader Fourier scales. Moreover, the training algorithm is a constrained optimization problem that is commonly reformulated as an unconstrained penalty objective. In practice, the governing equations and boundary conditions are enforced through weighted residual terms added to the loss function rather than as exact constraints, which introduces sensitivity to penalty weights and degrades the optimization process \cite{wang2021understanding}. The resulting loss landscape is often ill-conditioned, with competing gradients arising from residual, boundary, and data terms. Empirically, this manifests as training stagnation, convergence to inaccurate solutions \cite{du2023state}, or strong dependence on hyperparameter choices \cite{wang2021understanding,wang2022ntk}.

Several mechanisms contribute to these limitations. Neural networks commonly exhibit spectral bias, whereby low-frequency components are learned at a faster rate than high-frequency components. This behavior degrades performance for problems featuring sharp gradients and multiscale structures \cite{rahaman2019spectralbias, chandravamsi2025spectral}. Boundary and interface constraints add further stiffness: even when the PDE residual can be reduced, the boundary losses may dominate parameter updates or remain underfit, depending on weighting \cite{wang2021understanding}. Time-dependent systems introduce additional imbalance because early-time errors can propagate forward and dominate the optimization dynamics if not controlled \cite{wang2024causal}. Finally, the computational cost and memory requirements are substantially larger than those of traditional numerical methods \cite{du2023state, shaviner2025pinns}. Each training step evaluates the differential operator via automatic differentiation, often requiring first- and second-order derivatives and multiple passes through the network. In practice, for many PDEs, the floating point operation count required to reach a given accuracy is far larger than that of a classical solver, even before accounting for the cost of hyperparameter searches and sensitivity to random initialization \cite{wang2021understanding,krishnapriyan2021failuremodes}. Furthermore, PINNs do not inherently encode temporal causality, since the solution is learned over the entire space-time domain simultaneously. This leads to a rapid loss of accuracy when extrapolating to the space-time domain beyond the training window \cite{bonfanti2024generalization}.



\subsection{Methods to improve convergence and accuracy}

A growing body of work has examined the optimization challenges inherent to PINN training and proposed a range of mitigation strategies to improve convergence and accuracy. A comprehensive overview of these developments is provided in \cite{toscano2025pinns}. The most effective strategies can be grouped by the mechanisms they address.

\begin{enumerate}[leftmargin=2em]
\item \textbf{Architectural modifications.}
\begin{itemize}
\item \textit{Fourier feature embeddings.} Mapping inputs to sinusoidal bases before the first hidden layer broadens the spectral support of the network activations across the network layers \cite{chandravamsi2025spectral}, improving the approximation of high-wavenumber components and mitigating spectral bias, which is essential for multiscale problems \cite{rahimi2007random,tancik2020fourierfeatures}.
\item \textit{Residual and adaptive network architectures.} Residual connections combined with learnable adaptive weighting of residual blocks improve gradient flow and reduce optimization stiffness, as demonstrated by PirateNets \cite{wang2024piratenets, wang2025simulating} for multiscale physics-informed learning.
\item \textit{Model and domain decomposition and multi-network formulations.} Decomposing the physical domain into subdomains and training coupled neural networks with interface constraints reduces optimization stiffness and improves scalability for high Reynolds number flows \cite{jagtap2020xpinns, roy2024adaptive}.
\item \textit{Hard boundary constraints.} Dirichlet and periodic constraints can be enforced exactly by constructing the network output to satisfy boundary or initial conditions by design, reducing the number of competing loss terms \cite{berrone2023enforcing, lu2021physics, dong2021}.
\end{itemize}

\item \textbf{Initialization.}
\begin{itemize}
\item \textit{Weight reparameterizations.} Reparameterizing network weights to decouple magnitude and direction improves conditioning and gradient propagation, an idea formalized through weight normalization \cite{salimans2016weightnorm} and later adapted to PINNs via random weight factorization to mitigate gradient pathologies \cite{wang2021understanding}.
\end{itemize}

\item \textbf{Loss balancing strategies.}
\begin{itemize}
\item \textit{Residual-based weighting.} These methods adapt loss weights using residual magnitudes or residual statistics to emphasize poorly satisfied constraints during training, including residual-based attention and related pointwise weighting schemes \cite{wang2021understanding,mcclenny2023self}.
\item \textit{Gradient-based weighting.} These methods adapt loss weights using gradient magnitudes or relative training rates with respect to network parameters, including gradient normalization \cite{chen2018gradnorm} and learning-rate-annealing-based strategies \cite{wang2021understanding}.
\end{itemize}

\item \textbf{Sampling and curricula.}
\begin{itemize}
\item \textit{Residual-based adaptive refinement.} Iteratively adding or redistributing collocation points based on residual magnitude focuses computation where the solution is prone to high fitting error \cite{wu2023sampling, mao2023physics}.
\item \textit{Curriculum strategies.} Training can be stabilized by progressively increasing problem difficulty, for example, through scheduled learning rate decay, gradual ramping of loss coefficients, or continuation in parameters such as the Reynolds number, which guides optimization toward physically consistent solution regimes \cite{bengio2009curriculum,krishnapriyan2021failuremodes}.
\item \textit{Causality-aware training for time-dependent problems.} Temporal domain segmentation and reweighting those time segments to respect temporal causality reduces backward contamination from future-time residuals and can stabilize long-horizon dynamics \cite{penwarden2023causal, wang2024causal}.
\end{itemize}

\item \textbf{Optimization algorithms.}
\begin{itemize}
\item \textit{First-order optimizers.} Adaptive stochastic methods such as Adam remain standard for early training due to robustness to noisy gradients and large parameter spaces \cite{kingma2014adam}.
\item \textit{Quasi-Newton and second-order methods.} L-BFGS is widely used as a refinement stage once the iterate enters a favorable region of parameter space \cite{liu1989lbfgs}. More recent curvature-approximating methods based on Nystrom ideas offer additional options for large-scale settings \cite{singh2021nysnewton,rathore2024pinnlosslandscape}.
\end{itemize}
\end{enumerate}

Neither individual methods nor their combinations guarantee convergence or accuracy. In practice, convergence typically requires a coherent combination: a representation that can express the target field, a constraint handling strategy that avoids unnecessary penalty stiffness, a loss weighting scheme that prevents gradient domination, and an optimizer schedule that transitions from exploration to rapid local convergence.

\subsection{Quantum PINNs}

Recent advances in quantum computing have drawn growing interest due to its potential to accelerate scientific machine learning. Quantum PINNs (QPINNs) extend the classical PINNs paradigm by embedding parametrized quantum circuits (PQCs) within the neural network ansatz, either partially or fully replacing classical components. Existing approaches can be broadly categorized into three main classes. The first approach is the direct application of classical PINNs to quantum-mechanical equations, where the network architecture is classical and only the underlying physical system is quantum~\cite{norambuena2024physics}. This is not the focus of this work. The second approach employs predominantly quantum QPINNs with at most two classical layers, one as the input layer and the other as the output layer~(section 4 in \cite{trahan2024quantum}). The third is quantum-classical hybrid PINNs, where a major part of the network is classical~(section 5 in \cite{trahan2024quantum}, \cite{chen2025quantum}). By leveraging this approach, previous studies have demonstrated significant reductions in learnable parameters and improvements in accuracy (ranging from 19\% (for both~\cite{chen2025quantum}) to 58\% and 59\% (respectively~\cite{trahan2024quantum})), depending on the specific case. However, these gains come with nontrivial practical and computational challenges.

In particular, QPINNs are challenging to simulate on quantum hardware and do not permit storage of intermediate states or conventional backpropagation, because measurement collapses the quantum state. Other methods are needed to differentiate the quantum layers to obtain derivatives for parameter updates and the derivatives of the output fields. To differentiate a PQC on actual quantum hardware (QPU), the current widespread method is \textit{parameter-shift rule}~\cite{mitarai2018quantum, schuld2019evaluating}, which is an analytic computation of the derivatives (opposed to finite difference methods, which are an approximation). The parameter-shift rule methodology requires multiple circuit evaluations per derivative. Even for a gradient evaluation at a \textbf{single collocation point} in a \textbf{single epoch}, \textbf{excluding shot count}, the number of circuit evaluations scales with derivative order and the number of trainable parameters. This overhead motivates a careful quantification of how circuit evaluations scale with the order of required derivatives and the number of trainable parameters. To quantify this overhead, we derive a formal estimate for the circuit-evaluation complexity of QPINNs under parameter-shift differentiation. The result shows that the number of circuit evaluations grows exponentially with the order of derivatives required by the PDE and linearly with the number of trainable parameters. The full derivation is presented in Section~\ref{subsec:qpinns}. 


\subsection{Multi-GPU strategies for PINNs}

The computational algorithm of PINNs is dominated by repeated forward evaluations of the network and gradients of the cost with respect to all model parameters via automatic differentiation. While this workload is well-suited to accelerators, as the problem size increases (number of parameters, dimensionality, and collocation point size), the stress on memory bandwidth and device memory due to the storage of intermediate activations and higher-order derivative graphs increases. The memory footprint and computational cost, therefore, grow with the number of parameters, the number of collocation points, and the order of derivatives required by the PDE. For large collocation sets or coupled systems, a single device can become a bottleneck in both memory capacity and wall-clock time. In such settings, distributing the computation across multiple GPUs is often necessary to fit the problem in memory and to maintain practical training times.

The simplest scaling route to multi-GPU execution of PINNs is data parallelism via colocated points, also referred to as \textit{distributed data parallelism (DDP)}. Since the residual and boundary losses decompose as sums over points, each GPU can process a disjoint mini-batch of collocation points and compute the corresponding local gradients. After completing the forward and backward passes on each device, the gradients are synchronized across GPUs through an all-reduce operation, and the averaged gradients are then used to update the replicated model parameters consistently on all devices. Distributed data parallel implementations in modern deep learning systems have been shown to provide this pattern with relatively little code overhead \cite{paszke2019pytorch}. The approach scales the effective batch size with the number of devices and increases the maximum feasible number of collocation points by distributing memory usage across GPUs. A key limitation of this approach is that it does not increase the model's representation capacity, since each GPU maintains a full replica of the network. This can become restrictive on training when very large models or highly detailed solutions are being optimized for.

In practice, the efficiency of DDP primarily hinges on controlling the communication overhead. PINN training often uses large networks and expensive backward passes, so gradient synchronization can be overlapped with backpropagation. Performance also depends on how collocation points are generated and batched. Spatially structured batches can improve cache locality and reduce variance in residual magnitudes, while adaptive sampling can concentrate compute where it matters, reducing the total number of points needed for a given accuracy \cite{lu2021deepxde,wu2023sampling}. Memory overload can be reduced through \textit{mixed precision} and careful management of automatic differentiation graphs, especially for second derivatives \cite{micikevicius2018mixedprecision}. Mixed precision is not universally safe for PINNs because residual terms can be sensitive to roundoff and scaling, but it can be effective when combined with nondimensionalization and loss balancing \cite{micikevicius2017mixed, xue2022novel}. Additional engineering tactics include gradient checkpointing, micro-batching, and separating forward passes for different loss components when memory is the limiting factor.

A complementary scaling route is \textit{model and domain parallelism}. Domain decomposition methods split the domain into subregions, train separate networks for each subregion, and enforce interface continuity via additional constraints \cite{jagtap2020xpinns}. The tradeoff is an expanded set of interface losses and coordination costs. On the other hand, modern large language models rely on hybrid parallelism strategies that combine data parallelism, tensor model parallelism~\cite{shoeybi2019megatron}, and pipeline parallelism~\cite{huang2019gpipe, narayanan2021efficient}. Tensor parallelism partitions weight matrices across devices so that a single layer is computed collectively. Pipeline parallelism distributes consecutive layers across devices and overlaps computation with communication. More recently, mixture-of-experts architectures activate only a subset of parameters per input, enabling sparse model scaling with manageable communication costs~\cite{fedus2022switch, lepikhin2020gshard}. These strategies are effective for transformer architectures because dense linear layers with regular structure and uniform memory access patterns dominate the computation. In contrast, PINNs require repeated higher-order automatic differentiation, irregular derivative graphs, and tight coupling between outputs and spatial derivatives, which limits the benefits of tensor and pipeline parallelism. As a result, while model parallel strategies from LLM training are conceptually transferable, their efficiency gains are less straightforward in the PINN setting and often require substantial customization.

\subsection{This work: QPINNACLE}
Existing frameworks such as DeepXDE \cite{lu2021deepxde}, Modulus \cite{hennigh2021nvidia}, and SciANN \cite{haghighat2021sciann} offer PINN implementations with varying levels of flexibility and scalability. However, hybrid quantum-classical extensions remain limited \cite{hao2024pinnacle}. The present work features an in-house-developed PyTorch-based library, \textit{TorQ - Tensor Operations for Research of Quantum systems}~\cite{torq}, which adopts a hybrid quantum-classical strategy and demonstrates its advantages on a simulated quantum hardware running on a classical computing machine. The framework also integrates convergence-enhancing techniques to the hybrid quantum-classical architectures within a unified modular workflow. Distributed data parallelism is used to analyze scaling behavior under memory and communication constraints~\cite{paszke2019pytorch}, and the results quantify both predictive accuracy and computational limitations. The main contributions of this work are:
\begin{itemize}
\item A modular open-source framework for PINNs that integrates convergence-enhancing strategies, multi-GPU execution, and hybrid quantum-classical models.
\item A formal complexity analysis of quantum PINNs based on parameter-shift differentiation.
\item Reproducible benchmark study across representative PDEs, quantifying the effect of architectural and training choices on convergence and accuracy.
\item Analysis of distributed data parallel PINN training, including scaling behavior in runtime and memory usage.
\end{itemize}

\section{Physics Informed Neural Networks: overview and convergence enhancing methods}

This section offers a \textit{from-scratch} overview of the key components in building a PINNs model. It begins with the formulation of PDEs, followed by non-dimensionalization, the design of the network architecture, the definition of loss functions, and a standard training methodology for a vanilla PINNs model. Subsequently, an overview of various convergence-enhancing techniques, primarily based on the methods outlined by Wang et al. \cite{wang2023expertsguide}, is presented to improve PINN training performance.

\subsection{Problem definition, PDE formulation, and non-dimensionalization} \label{PDEformulation}
To fully define a physical problem, it is necessary to specify the spatio-temporal domain \((\mathbf{x}, t)\), the variables of interest, the governing equations, as well as the initial and boundary conditions. Let $\textbf{x} \in \Omega$ represent the spatial domain and $t \in [0, T]$ the temporal domain. The generic form of a partial differential equation (PDE), along with the initial and boundary conditions employed in Physics-Informed Neural Networks (PINNs), can be mathematically expressed as follows:

\begin{equation}
    \text{PDE:} \quad \underbrace{\mathcal{L}_t\left[u(\mathbf{x},t)\right]}_{\text{time derivatives}} + \underbrace{\mathcal{L}_\mathbf{x}\left[u(\mathbf{x},t)\right]}_{\text{spatial derivatives}} = \underbrace{f\left(\mathbf{x},t\right)}_{\text{source term}}, \quad \mathbf{x} \in \Omega, \, t \in [0, T],
\end{equation}

\begin{equation}
\begin{aligned}
    \text{Initial conditions (IC):}& \quad u(\mathbf{x}, t = 0) = g(x), \quad &\mathbf{x} \in \Omega, \\
    \text{Boundary conditions (BC):}& \quad B\left[u(\mathbf{x}_b, t)\right] = 0, \quad & \mathbf{x}_b \in \partial \Omega,  t \in [0, T].
\end{aligned}
\end{equation}

Here, $\mathcal{L}_t[\cdot]$ and $\mathcal{L}_\mathbf{x}[\cdot]$ denote a linear/nonlinear differential operator in time and space, respectively. $B[\cdot]$ represents the boundary conditions, such as Dirichlet, Neumann, or Robin. 

\textit{Non-dimensionalization.} The equations and conditions often involve variables with vastly different scales, for example, spatial coordinates \(\mathbf{x}\) may represent distances in kilometers, while time \(t\) can range from milliseconds to several seconds, leading to large input values. Such disparity in scales can complicate the training process of the neural network by introducing numerical instability or causing gradients involved in the back-propagation to vanish or explode during optimization.

To address this, the domain and governing equations are typically non-dimensionalized. Non-dimensionalizing the equations helps standardize the problem by reducing the scale of the variables to a more manageable scale that the network can handle. This process enhances the network's ability to learn and generalize by focusing on the relative importance of different terms, making the problem more manageable. Moreover, non-dimensionalization improves numerical stability during training, helping the network avoid issues associated with handling very large or very small numbers. This often leads to faster convergence.

\subsection{Network architecture}

\begin{figure}[t!]
    \centering
    \includegraphics[width=0.4\linewidth]{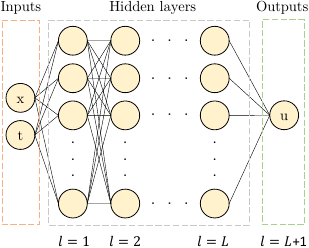}
    \caption{Schematic of a multi-layer perceptron mapping two inputs ($x,t$) to one output $u$ (solution to PDE), with its model parameters.}
    \label{fig:MLP}
\end{figure}


To establish a functional relationship between the inputs \((\mathbf{x}, t)\) and outputs $u$, a fully connected network can be employed. Selecting the rigwell-suited torchitecture is critical for solving complex PDEs effectively. A cornerstone of this architectural framework is the Multi-Layer Perceptron (MLP), known for its universal approximation capabilities and therefore suitable for approximating the latent functions that characterize solutions to PDEs. The network output depends on the inputs \((\mathbf{x}, t)\) and model parameters $\theta$ that are weights and biases involved in the network. Therefore, the functional relationship between the inputs and outputs is written as follows:

\begin{equation}
    \text{MLP:} \quad u = f(\mathbf{x},t, \theta)
\end{equation}

Let us consider an MLP with one input layer, $L$ hidden layers, and one output layer. Each layer in the MLP computes values from the previous one using a set formula:

\begin{equation}
    \text{First `$L$' layers:} \quad f^{(l)}(x) = W^{(l)} \cdot g^{(l-1)}(x) + b^{(l)}, \quad l = 1, 2, \dots, L,
\end{equation}

\begin{equation}
\begin{aligned}
    \text{Applying the activation function:} \quad g^{(l)}(x) = \sigma \left( f^{(l)}(x) \right), \quad l = 1, 2, \dots, L,
\end{aligned}
\end{equation}

\begin{equation}
    \text{Final layer (no activation):} \quad f_\theta^{(l)}(x) = W^{(L+1)} \cdot g^{(L)}(x) + b^{(L+1)}.
\end{equation}

Here, $W^{(l)}$
and $b^{(l)}$ represents the weight matrix and bias vector for the $l$-th layer, respectively, and $\sigma$ denotes an element-wise activation function. To ensure a continuously differentiable neural representation,  a hyperbolic tangent (Tanh) activation function is usually recommended in the literature.

\subsection{Loss function} \label{sec:losses}
The MLP's default random parameters are unlikely to satisfy the governing equations for obvious reasons. The question at hand is how to update the random model parameters to satisfy the governing equations, initial conditions, and boundary conditions. Firstly, to mathematically estimate how far the current network is from producing a correct solution, the error, or the `loss', is estimated. To illustrate the definition of the loss function, let us consider Burgers' equation on the domain \(x \in [0,1]\) and \(t \in [0,1]\), with the governing equations, initial conditions, and boundary conditions written as follows:

\begin{align}
    \text{Governing Equation:} & \quad \frac{\partial u}{\partial t} + u \frac{\partial u}{\partial x} = 0,  \quad x \in [0,1], t \in [0,1]\\
    \text{Boundary Conditions:} & \quad u(x=0,t) = u(x=1,t), \\
    \text{Initial Condition:} & \quad u(x,0) = u_0(x).
\end{align}

In PINNs, the physical laws and constraints define the loss function. A commonly used function for estimating loss is the `mean squared error' (MSE). Using the current model parameters, \textit{forward pass} is performed to obtain the model output for various input combinations of $x$ and $t$. Using MSE, various loss components involved in training PINNs are computed as follows:

\begin{align}
    \mathcal{L}_{pde}(\theta) &= \frac{1}{N_p} \sum_{i=1}^{N_p} \left[ \left( \frac{\partial u_i(x_i,t_i)}{\partial t} \right)_{\text{autoGrad}} + u_i(x_i,t_i) \left( \frac{\partial u_i(x_i,t_i)}{\partial x} \right)_{\text{autoGrad}} \right]^2,\\
    \mathcal{L}_{bc}(\theta) &= \frac{1}{N_{bc}} \sum_{i=1}^{N_{bc}} \left( u(x=0,t) - u(x=1,t) \right)^2, \\
    \mathcal{L}_{ic}(\theta) &= \frac{1}{N_{ic}} \sum_{i=1}^{N_{ic}} \left( u_{ic,i} - u_0(x_i) \right)^2.
\end{align}

where: 
\(\theta\) represents the parameters of the neural network.
\(N_p\) is the number of collocation points used to evaluate the PDE residual,
\(N_{bc}\) is the number of boundary condition points,
\(N_{ic}\) is the number of initial condition points,
\(u_i(x_i,t_i)\) represents the predicted values at different points in the domain,
\(u_{bc,i}\) represents the values at boundary condition points, and
\(u_{ic,i}\) represents the values at initial condition points. The total loss is computed by summing up the individual losses:

\begin{equation}
    \mathcal{L}_{total}(\theta) = \mathcal{L}_{pde}(\theta) + \mathcal{L}_{bc}(\theta) + \mathcal{L}_{ic}(\theta)
\end{equation}

The derivatives involved in estimating the residuals above are computed using the autograd functions available in PyTorch and other machine learning frameworks. By minimizing \(\mathcal{L}_{total}(\theta)\), the network parameters are updated to satisfy best the governing equations, initial conditions, and boundary conditions.

The next step is to determine how to update the model parameters to minimize the total loss $\mathcal{L}_{total}(\theta)$. The loss function typically defines a nonconvex optimization landscape, and training amounts to locating a suitable minimum. In PINNs, this is commonly addressed using gradient-based optimization methods such as gradient descent, Adam, and L-BFGS. These methods compute gradients of the loss with respect to the model parameters $\theta$ and update the parameters in directions that reduce the loss. Mathematically, for a gradient-based optimizer, the parameters are updated iteratively as follows:

\[
\theta^{(k+1)} = \theta^{(k)} - \eta \nabla_{\theta} \mathcal{L}_{total}(\theta)
\]

where:
\begin{itemize}
    \item \(\theta^{(k)}\) represents the model parameters at the \(k\)-th epoch,
    \item \(\eta\) is the learning rate or step size, which is a user-specified hyperparameter,
    \item \(\nabla_{\theta} \mathcal{L}_{total}(\theta)\) is the gradient of the loss function with respect to the parameters, computed using automatic differentiation (autograd).
\end{itemize}

Each optimizer employs a distinct strategy for selecting update directions and step sizes based on accumulated gradient information. Adaptive first-order methods, such as Adam, rescale parameter-wise updates using estimates of the first and second moments of gradients, thereby improving robustness during the early stages of training. In contrast, quasi-Newton methods such as L-BFGS construct low-rank approximations of the second-order curvature information, enabling faster local convergence once the optimization enters a favorable region of the parameter space. When applied appropriately, these optimization schemes enable progressive refinement of the network parameters toward solutions that satisfy the governing equations, boundary conditions, and initial conditions within prescribed accuracy levels.

\subsection{PINN training model}

\begin{figure}[t!]
    \centering
    \includegraphics[width=\linewidth]{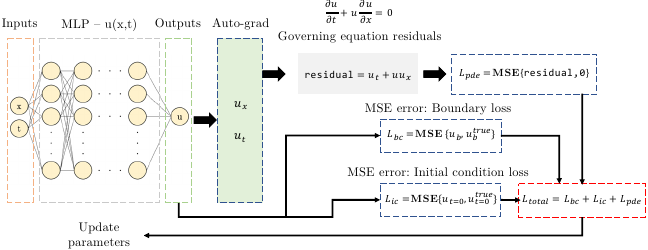}
    \caption{Schematic of a basic PINNs model for solving the Burgers' equation. Here, $u_b$ and $u_b^{\text{true}}$ denote the network-estimated and true boundary values, respectively, while $u_{t=0}$ and $u_{t=0}^{\text{true}}$ represent the network-estimated and true initial values at $t = 0$. The function MSE represents the mean square error.}
    \label{fig:vanilla}
\end{figure}

By combining an MLP-based network architecture, an MSE-based loss function, and a gradient-based optimizer, a basic PINNs model is fully specified. A schematic of this vanilla PINNs setup is shown in Fig.~\ref{fig:vanilla}. Although a fully connected neural network can approximate a function in the limit of sufficiently many parameters and the underlying optimization problem is mathematically well-posed, such PINN models are notoriously difficult to train and often exhibit poor convergence on moderately simple problems, such as solving a 1D advection equation. This difficulty arises from limitations in capturing the intricate features of high-dimensional, strongly nonlinear PDEs, as well as from the increasingly challenging nonconvex optimization landscape. As problem complexity grows, effective training requires more advanced strategies to address these optimization challenges. The next section reviews mathematical techniques from the literature that aim to improve convergence and training stability.

\subsection{Quantum neural network (QNN) architecture} \label{sec:qnn_arch}
\begin{figure}[t!]
    \centering
    \includegraphics[width=\linewidth]{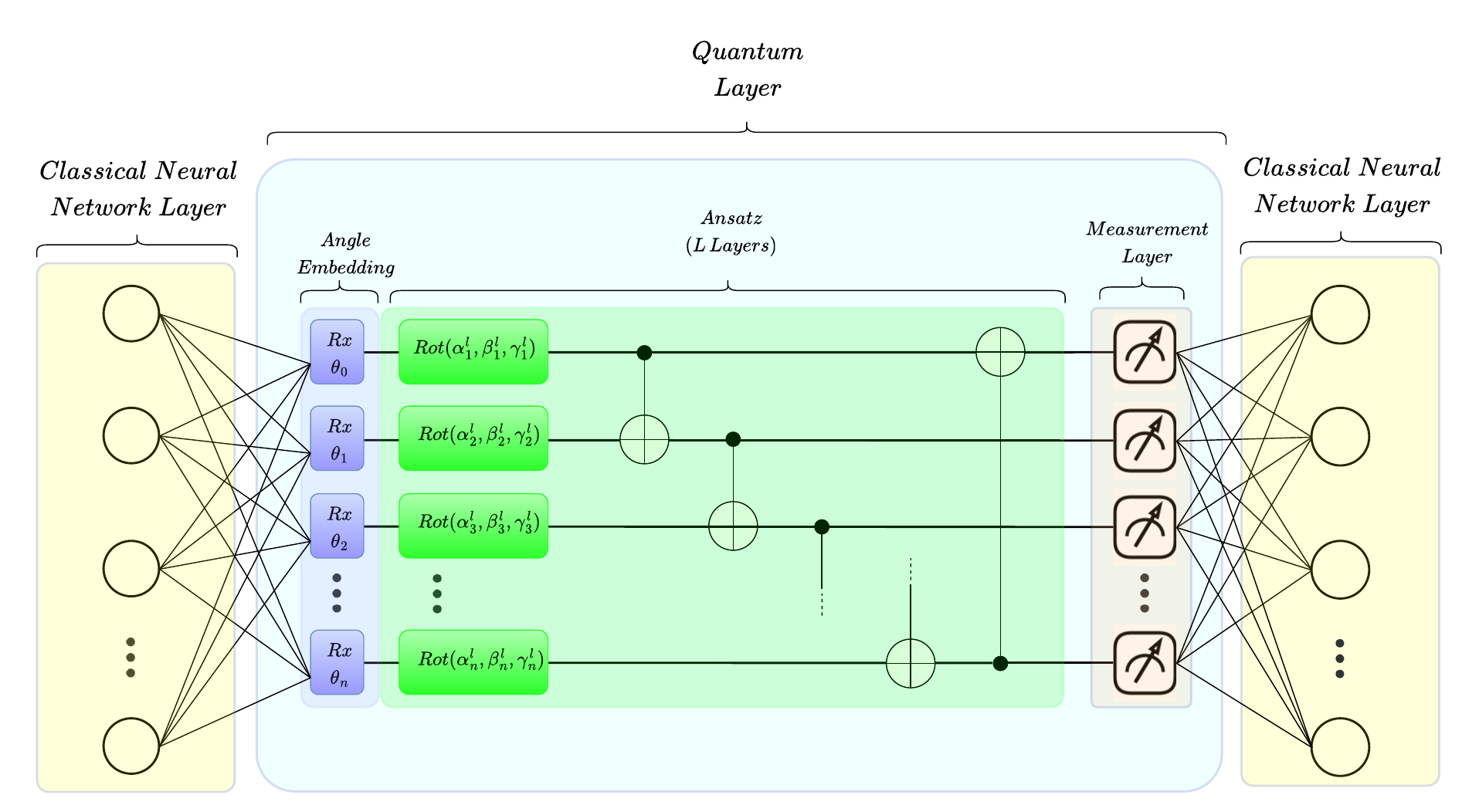}
    \caption{Schematic of the parametrized quantum circuit (PQC) integration between two classical neural network layers. The preceding and subsequent layers adjust dimensions for the quantum layer's qubits (each qubit acting as a neuron).}
    \label{fig:pqc_arch}
\end{figure}
As their name suggests, quantum neural networks (QNNs) treat a parametrized quantum circuit (PQC) as a neural network layer. As seen in Fig.~\ref{fig:pqc_arch}, a PQC can be integrated between two neural network (NN) layers, and specifically PINN layers in this case, by using the following components:
\begin{itemize}
    \item \textbf{Data encoding:} To be an integral part of the NN, the results of the previous layer need a way to be propagated into the PQC downstream, so a data encoding (many times called data embedding) method is used. There are several such methods, and the most common are:
    \begin{itemize}
        \item \textbf{Angle embedding:} This method maps classical inputs to quantum states by assigning each qubit a rotation angle derived from the preceding layer’s activations. This approach is straightforward to implement and integrates naturally with neural network pipelines. However, the encoding capacity scales linearly with the number of qubits, $\mathcal{O}(n)$, which limits representational efficiency compared to alternative schemes. In addition, the periodicity of rotation gates with period $2\pi$ requires careful input scaling to avoid aliasing effects. Despite these limitations, angle embedding remains the most commonly used encoding strategy in QPINNs~\cite{mitarai2018quantum}.
        \item \textbf{Amplitude embedding:} By using amplitude embedding, the quantum circuit encodes the input data to the amplitude of each possible quantum state, thus gaining an exponential scaling ($\mathcal{O}(2^n)$) in the qubit amount. The main issue with this method in the general case is that it requires arbitrary state preparation, which is inefficient and requires $\mathcal{O}(2^n)$ quantum gates, unless the encoded data has a structure that can be exploited for a more efficient encoding~\cite{lloyd2013quantum}.
    \end{itemize}
    \item \textbf{Ansatz:} An ansatz is the structure of the computational part of the circuit, and can be thought of as a family of circuits. In a PQC, the ansatz contains parameterized gates that depend on the learned parameters, along with fixed gates in many cases. An ansatz's shape can depend on the number of qubits and the number of ansatz layers that 'multiply' the same ansatz shape, but with different parameters; both of those can be considered hyperparameters. 
    \item \textbf{Measurement:} To obtain classical results and their propagation to the next classical layer, measurement of quantum observables~\cite{nielsen2010quantum} is conducted at the end of the circuit. The quantum state before measurement ($\ket{\psi}$) is of an exponential size in the number of qubits $n$, i.e., $\ket{\psi} \in \mathcal{H}^{2^n}$, where $\mathcal{H}$ is a Hilbert space. However, it should be noted that to obtain the exact state $\ket{\psi}$, full state tomography~\cite{james2001measurement} is needed, which requires $\mathcal{O}(2^n)$ circuit runs with different observable measurements, thereby diminishing any possible quantum speedup. Therefore, only observables that would capture the needed information from the quantum state should be used.
\end{itemize}

The PQC is differentiable with respect to both its trainable parameters and its embedded input angles, enabling end-to-end training with the same objective used for the classical NN.

\subsection{Enhancing PINNs optimization}

To address challenges of convergence in PINNs, several advanced techniques summarized by Wang et al. \cite{wang2023expertsguide} are implemented in QPINNACLE. Table~\ref{tab:pinn_convergence_methods} provides a summary of various techniques from the literature to improve the training performance of PINNs.


\begin{table}[h!]
\centering
\caption{Overview of convergence enhancement strategies implemented in QPINNACLE, compiled from the expert guide on training physics-informed neural networks by Wang et al. \cite{wang2023expertsguide}. The first four approaches modify the network architecture, whereas the remaining methods act on the loss formulation or the training procedure.}
\label{tab:pinn_convergence_methods}
\renewcommand{\arraystretch}{1.2}
\setlength{\tabcolsep}{6pt}

\begin{tabularx}{\textwidth}{l X l}
\hline
\textbf{Method} & \textbf{Description} & \textbf{Reference(s)} \\
\hline

Random Fourier features 
& Mitigates spectral bias of the network to learn high-wavenumber features of the network output(s). The accuracy and convergence are faster. 
& \begin{tabular}[t]{@{}l@{}}
Rahimi et al.~\cite{rahimi2007random} \\
Tancik et al.~\cite{tancik2020fourierfeatures}
\end{tabular} \\

Random weight factorization 
& Initializes network weights with random values extracted from a scaled normal distribution to keep the activation and gradient variance balanced across layers. This prevents both very small and very large gradients, which can disrupt learning in deep networks. 
& Wang et al. \cite{wang2022random} \\

Strict boundary conditions 
& Alters the network architecture to enforce strict boundary conditions, eliminating the need to compute boundary-condition losses. 
& Dong et al. \cite{dong2021} \\

Periodic activation function 
& Uses a sinusoidal activation to help learn high-frequency features in the solution faster, similar to the Random Fourier Features method. 
&  \begin{tabular}[t]{@{}l@{}}
Sitzmann et al. \cite{sitzmann2020implicit} \\
Liu et al.~\cite{liu2024finer}
\end{tabular} \\

Loss balancing 
& Dynamically adjusts the weights of different loss components based on their sensitivity to network parameters, preventing training bias and ensuring balanced learning across all loss components. 
& \begin{tabular}[t]{@{}l@{}}
Wang et al. \cite{wang2021understanding} \\
Xiang et al. \cite{xiang2022self}
\end{tabular} \\

Combined use of optimizers 
& After an initial training phase with a first-order optimizer such as Adam, which uses only gradient information, the optimization algorithm is switched to a second-order optimizer like L-BFGS or NysNewton, which approximates the curvature of the loss function, to accelerate convergence. 
& \begin{tabular}[t]{@{}l@{}}
Gu et al. \cite{gu2023nysnewton} \\
Noceda and Wright \cite{nocedal2006numerical}
\end{tabular} \\

Curriculum training 
& Gradually increases the complexity of training to ensure accurate convergence. For example, in fluid flow problems, the solution for a high Reynolds number is achieved by first training the model on lower Reynolds numbers, progressively increasing the Reynolds number to the target value. 
& Krishnapriyan et al. \cite{krishnapriyan2021failuremodes} \\

Temporal causality 
& Divides the temporal domain into multiple segments to assign different weights to corresponding losses, allowing the network to learn from past states while preserving the integrity of temporal dynamics. 
&  \begin{tabular}[t]{@{}l@{}}
Wang et al. \cite{wang2024respecting} \\
Penwarden et al. \cite{penwarden2023unified}
\end{tabular} \\


\hline
\end{tabularx}
\end{table}

\subsubsection{Random Fourier features} \label{sec:RFF}
Random Fourier feature (RFF) embedding is a technique that allows MLPs to represent functions with higher-frequency content \cite{rahimi2007random, rahaman2019spectralbias}. It aims to address the spectral bias observed in PINNs by mapping input variables to a randomized Fourier feature space, thereby enhancing the representation of high-frequency components in the target solution.
\begin{equation}
    \gamma(x) = \left[ \cos(Bx), \, \sin(Bx) \right],
\end{equation}

where $B$ is a matrix sampled according to a Gaussian distribution $\mathcal{N}(0,\sigma^2)$, enhancing the network's ability to approximate complex and sharp solution features. Where $\sigma$ is a hyperparameter chosen based on the expected intricate features in the solution, it typically lies in the range of [1,10] for applications considered in this manuscript. For a simple use case with two inputs $(x,t)$, the mapping is depicted pictorially in Figure \ref{fig:rff}.

\begin{figure}[h!]
    \centering
    \includegraphics[width=0.8\linewidth]{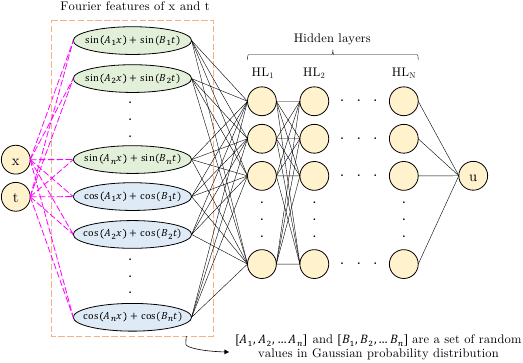}
    \caption{Schematic of the training model incorporating a \textit{Random Fourier Features (RFF)} layer between the input layer and the first hidden layer, $\text{HL}_1$. The RFF layer has a width twice that of the first hidden layer $\text{HL}_1$. The coefficients $A_i$ and $B_i$ are randomly sampled from a Gaussian probability distribution. The mapping of inputs \( x \) and \( t \) to random Fourier features is solely to avoid spectral bias and does not involve multiplying or adding any weights or biases.}
    \label{fig:rff}
\end{figure}

\subsubsection{Random weight factorization} \label{sec:RWF}

Random Weight Factorization is an initialization method for customizing the weights associated with each linear mapping in the PINN network. The method is based on a scale factor and a direction vector, which are used to multiply the network weights. This factorization allows the model to independently adjust the magnitudes and directions of its weight vectors, leading to improved training dynamics and model accuracy. The weights are then combined in a structured manner:

\begin{equation}
    W^{(l)}=\operatorname{diag}\left(\exp \left(s^{(l)}\right)\right) \cdot v^{(l)}, \quad l=1,2, \ldots, L+1
\end{equation}

where $W^{(l)}$ represents the weight matrix, $\text{diag}\left(\exp(s^{(l)})\right)$ represents a diagonal matrix containing scale factors, and $v^{(l)}$ is a matrix of direction vectors for layer $l$. For each weight matrix, we initialize the scale vector factors where $s^{(l)}$ is sampled from a multivariate normal distribution $\mathcal{N}(\mu, \sigma^2 I)$. This ensures the scale factors cover a broad spectrum of magnitudes, preventing zeros or small values, which is vital for maintaining training stability and effectiveness. Exponential parameterization is critical for averting performance decline to that of a conventional MLP and for avoiding instability during training. This strategic decomposition ensures that each neuron in the network can adapt its response with greater flexibility, leading to enhanced learning dynamics and better convergence properties.

\begin{figure}[h!]
    \centering
    \includegraphics[width=\linewidth]{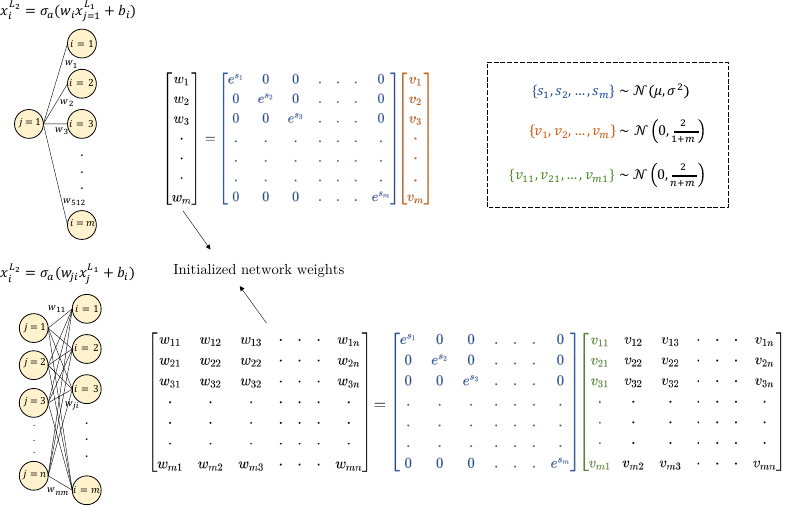}
    \caption{Schematic of two-layer perceptron and the matrix multiplication required for calculating network weights using the \textit{random weight factorization} technique. The top and bottom rows both show two-layer perceptrons: the top with one input feature and $m$ output features, and the bottom with $n$ input features and $m$ output features.}
    \label{fig:rwf2}
\end{figure}

\begin{figure}[h!]
    \centering
    \includegraphics[width=0.9\linewidth]{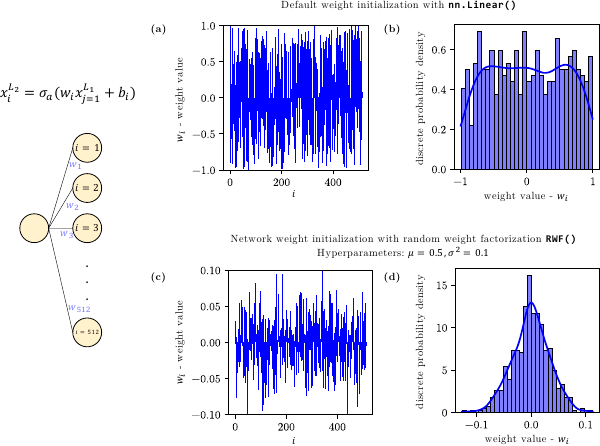}
    \caption{Visualization of weights during network initialization and their corresponding discrete probability distribution for a two-layer perceptron with one input feature and 512 output features. The top row corresponds to the default initialization method in the PyTorch library, while the bottom row corresponds to the \textit{random weight factorization} technique. } 
    \label{fig:rwf1}
\end{figure}

\subsubsection{Imposing strict periodic boundary conditions}

\begin{figure}[h!]
    \centering
    \includegraphics[width=0.95\linewidth]{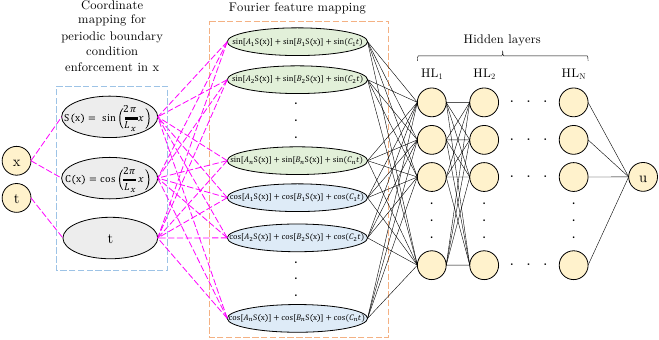}
    \caption{Schematic network model showing the imposition of strict periodic boundary condition in $x$ direction with a period length equal to $L_x$. The input $x$ is mapped to a Fourier space to enforce periodicity of length $L_x$; no weights or biases are involved in this mapping.}
    \label{fig:periodic}
\end{figure}

In the standard PINN formulation described in Sec.~\ref{sec:losses}, periodic boundary conditions are imposed weakly by adding a boundary penalty term to the loss function. This enforces equality of the solution and its spatial derivatives at the domain boundaries through optimization. An alternative approach is to impose periodicity directly within the network architecture. Instead of penalizing boundary mismatches, the input coordinates are mapped to a periodic representation so that the network output satisfies periodic boundary conditions by construction~\cite{dong2021}. This eliminates the need for an explicit boundary loss term. For periodic conditions in space and time, the input variables are transformed using Fourier components,
\begin{equation}
    w(x,t) =
    \left[
    \cos\!\left(\frac{2\pi}{P_x}x\right),
    \sin\!\left(\frac{2\pi}{P_x}x\right),
    \cos\!\left(\frac{2\pi}{P_t}t\right),
    \sin\!\left(\frac{2\pi}{P_t}t\right)
    \right],
\end{equation}
where $P_x$ denotes the spatial period and $P_t$ is a trainable parameter initialized to the temporal domain length. The neural network is then evaluated as $u_\theta(x,t)=\mathcal{N}_\theta(w(x,t))$.

Because sine and cosine functions are periodic, the resulting representation satisfies periodic boundary conditions identically, including equality of derivatives at the boundaries. As a result, the boundary loss term can be removed from the objective function. This also avoids unbalanced gradients between residual and boundary terms during optimization and improves training stability in problems where periodic consistency is essential. For Dirichlet boundary conditions, strict enforcement can be achieved by modifying the network output so that it satisfies the prescribed boundary values exactly, typically by multiplying the network output by a function that vanishes at the boundary and adding a term that encodes the boundary data \cite{berrone2023enforcing, du2021evolutional}.

By embedding boundary conditions directly into the architecture, PINNs can satisfy physical constraints exactly while simplifying the loss formulation and reducing the optimization burden.
\subsubsection{Loss balancing}  \label{sec:loss_balancing}
Efficient training of PINNs requires careful management of various loss components to prevent any single term from dominating the optimization process. This challenge is addressed through dynamic loss balancing, which adaptively adjusts the weights of different loss components during training based on their relative magnitudes and gradients. The general form of the composite loss function used in PINNs is expressed as:
\begin{equation}
    L(\theta)=\lambda_{i c} L_{i c}(\theta)+\lambda_{b c} L_{b c}(\theta)+\lambda_{p d e} L_{p d e}(\theta)
\end{equation}
where $\lambda_{i c}$,  $\lambda_{b c}$, and $\lambda_{p d e}$ are the respective weights that are adaptively adjusted to maintain balance in the 
training process. Initially, global weights are computed from the norm of the gradients of each weighted 
loss component:

\begin{equation}
    \begin{aligned}
    & \hat{\lambda}_{i c}=\frac{\left\|\nabla_\theta L_{i c}(\theta)\right\|+\left\|\nabla_\theta L_{b c}(\theta)\right\|+\left\|\nabla_\theta L_{p d e}(\theta)\right\|}{\left\|\nabla_\theta L_{i c}(\theta)\right\|} \\
    & \hat{\lambda}_{b c}=\frac{\left\|\nabla_\theta L_{i c}(\theta)\right\|+\left\|\nabla_\theta L_{b c}(\theta)\right\|+\left\|\nabla_\theta L_{p d e}(\theta)\right\|}{\left\|\nabla_\theta L_{b c}(\theta)\right\|} \\
    & \hat{\lambda}_{p d e}=\frac{\left\|\nabla_\theta L_{i c}(\theta)\right\|+\left\|\nabla_\theta L_{b c}(\theta)\right\|+\left\|\nabla_\theta L_{p d e}(\theta)\right\|}{\left\|\nabla_\theta L_{p d e}(\theta)\right\|}
    \end{aligned}
\end{equation}

Subsequently, these weights are updated using a moving average formula with $\alpha$ = 0.9:

\begin{equation}
    \lambda_{\text {new }}=\alpha \lambda_{\text {old }}+(1-\alpha) \hat{\lambda}_{\text {new }}, \quad \lambda=\left(\lambda_{i c}, \lambda_{b c}, \lambda_{\text {pde }}\right)
\end{equation}

\subsubsection{Optimizers}
Training physics-informed neural networks requires optimization methods that can handle nonconvex loss landscapes arising from the combined cost function that accounts for physics residual, boundary/initial conditions, and data misfit \cite{wang2021understanding}. In practice, for simple low-dimensional problems (for instance, problems involving one space and one time dimension), a two-stage strategy based on first-order and quasi-Newton methods is often effective for achieving extremely small errors \cite{wang2025discovery}.

In the initial stage, the Adam optimizer \cite{kingma2014adam} is employed. Let $\theta$ denote the vector of network parameters and let $g_k = \nabla_{\theta} \mathcal{L}(\theta_k)$ be the gradient of the PINN loss at iteration $k$. Adam updates biased first and second moment estimates,
\begin{equation}
m_k = \beta_1 m_{k-1} + (1-\beta_1) g_k, 
\qquad
v_k = \beta_2 v_{k-1} + (1-\beta_2) g_k^2,
\end{equation}
with bias-corrected forms
\begin{equation}
\hat m_k = \frac{m_k}{1-\beta_1^k}, 
\qquad
\hat v_k = \frac{v_k}{1-\beta_2^k}.
\end{equation}
The parameter update is then
\begin{equation}
\theta_{k+1} = \theta_k - \alpha \frac{\hat m_k}{\sqrt{\hat v_k} + \varepsilon},
\end{equation}
where $\alpha$ is the step size (learning rate) and $\varepsilon=10^{-8}$ ensures numerical stability. $[\beta_1, \beta_2]$ Unlike standard stochastic gradient descent (SGD), which applies the same learning rate uniformly to all parameters, Adam adaptively smoothens and rescales the gradient for each update. The first moment term $\hat m_k$ introduces a momentum effect that smooths noisy gradient directions, while the second moment term $\hat v_k$ normalizes the step size according to the magnitude of recent gradients. This per-parameter normalization reduces sensitivity to poorly scaled gradients and mitigates imbalance between loss components. In PINNs, where gradients arising from PDE residuals, boundary terms, and data terms can differ substantially in scale and conditioning, such adaptive scaling typically leads to faster and more stable reduction of the composite loss during early training compared to vanilla SGD.

For refinement, optimization can be switched to a second-order optimizer such as the L-BFGS method \cite{liu1989lbfgs}. L-BFGS constructs an approximation $H_k$ of the inverse Hessian using a limited history of parameter and gradient differences,
\begin{equation}
s_k = \theta_{k+1} - \theta_k, 
\qquad
y_k = g_{k+1} - g_k,
\end{equation}
and updates $H_k$ so that it satisfies the secant condition
\begin{equation}
H_{k+1} y_k = s_k,
\end{equation}
which enforces consistency with local curvature information. The search direction is then computed as
\begin{equation}
p_k = - H_k g_k.
\end{equation}
The parameters are updated via a line search,
\begin{equation}
\theta_{k+1} = \theta_k + \lambda_k p_k,
\end{equation}
with $\lambda_k$ chosen to satisfy standard descent conditions. This quasi-Newton phase improves local convergence and enforces closer satisfaction of the governing equations, complementing the exploratory phase driven by Adam.

\subsubsection{Curriculum training}

Curriculum training refers to a progressive optimization strategy in which the learning task is ordered from simpler to more complex configurations \cite{bengio2009curriculum}. In the context of fluid flow, a typical approach is to train a PINN first on a reduced form of the governing equations before introducing additional physical effects. 

For instance, consider the incompressible Navier-Stokes equations
\begin{equation}
\partial_t \mathbf{u} + (\mathbf{u}\cdot\nabla)\mathbf{u} + \nabla p - \nu \Delta \mathbf{u} = \mathbf{f},
\qquad
\nabla \cdot \mathbf{u} = 0,
\end{equation}
where $\mathbf{u}$ denotes the velocity field, $p$ the pressure, and $\nu$ the kinematic viscosity. A curriculum strategy may begin by neglecting the nonlinear convective term and training the network on the Stokes system
\begin{equation}
\partial_t \mathbf{u} + \nabla p - \nu \Delta \mathbf{u} = \mathbf{f},
\qquad
\nabla \cdot \mathbf{u} = 0,
\end{equation}
which is easier to optimize due to the absence of a nonlinear term in this reduced form. After convergence, the nonlinear term $(\mathbf{u}\cdot\nabla)\mathbf{u}$ is introduced and the training is continued using the full equations.

Within physics-informed neural networks, this approach can also be realized by progressively increasing the Reynolds number, or the number of collocation points, or by adjusting the loss weights associated with different residual terms during training. Such staged learning is shown to improve stability and convergence when solving nonlinear flow problems \cite{krishnapriyan2021failuremodes, wang2023expertsguide}.

Let the PINN loss be written as
\begin{equation}
\mathcal{L}(\theta; \lambda) = \lambda_{\mathrm{pde}} \, \mathcal{L}_{\mathrm{pde}}(\theta)
+ \lambda_{\mathrm{bc}} \, \mathcal{L}_{\mathrm{bc}}(\theta)
+ \lambda_{\mathrm{ic}} \, \mathcal{L}_{\mathrm{ic}}(\theta),
\end{equation}
where $\theta$ denotes the network parameters and $\lambda = (\lambda_{\mathrm{pde}}, \lambda_{\mathrm{bc}}, \lambda_{\mathrm{ic}})$ are weighting coefficients. In curriculum training, these weights are updated according to a predefined schedule
\begin{equation}
\lambda^{(k+1)} = \Phi\bigl(\lambda^{(k)}\bigr),
\end{equation}
so that the optimization initially emphasizes simpler constraints, such as boundary or initial conditions, and gradually increases the contribution of the full PDE residual.

An alternative formulation relies on adaptive sampling. Let $\Omega_k \subset \Omega$ denote the set of collocation points at training stage $k$, with $|\Omega_k| < |\Omega_{k+1}|$. The loss at stage $k$ is then
\begin{equation}
\mathcal{L}^{(k)}(\theta) = \frac{1}{|\Omega_k|} \sum_{x \in \Omega_k} \bigl\| \mathcal{R}(u_\theta(x)) \bigr\|^2,
\end{equation}
where $\mathcal{R}$ denotes the differential operator of the PDE. Increasing $|\Omega_k|$ refines the enforcement of the governing equations as training proceeds. This staged approach stabilizes optimization, reduces sensitivity to poor initializations, and improves convergence when solving complex multiphysics or multiscale problems \cite{wang2021understanding}.

\subsubsection{Learning rate scheduler}
Optimizing PINNs benefits significantly from a learning rate scheduler that dynamically adjusts the learning rate based on training progress. Typically, methods such as exponential decay or adaptive learning rate adjustments are employed. The scheduler reduces the learning rate as the training progresses, preventing overshooting in the later stages of training and helping to fine-tune the network parameters effectively.

\subsubsection{Temporal causality}
To address the essential aspect of respecting temporal causality in time-dependent problems, a structured training approach is utilized. This approach segments the temporal domain, training the network to solve the PDE across these segments sequentially. Such a method ensures that the model adheres to the natural progression of time, integrating the causality of physical processes into the learning algorithm. Consequently, the predictions for any given time step are based solely on the information from past states, 
maintaining the integrity of the temporal dynamics.

Initially, the temporal weights ${\omega}_{i=1}^M$ are set to 1. Then, they are updated for each segment $i$, the weight $\omega_i$ is updated based on the cumulative loss terms from the previous segments:
\begin{equation}
    \omega_i=\exp \left(-\epsilon \sum_{k=1}^{i-1} L_{p d e}^k(\theta)\right), \quad \text { for } i=2,3, \ldots, M.
\end{equation}
This equation implies that the weight for a given segment $i$ decreases exponentially with the sum of the PDE loss terms from all preceding segments. Here, $\epsilon$ is a scaling factor that controls the rate of exponential decay.

Finally, the weighted loss is calculated as:
\begin{equation}
    L_{p d e}(\theta)=\frac{1}{M} \sum_{i=1}^M \omega_i L_{p d e}^i(\theta).
\end{equation}
By incorporating dynamically updated weights $\omega_i$, this loss function ensures that segments with higher prior errors contribute more to the overall loss, guiding the model to improve its predictions in those segments.

\section{QPINNACLE: outline and key features}


This section provides an overview of the QPINNACLE repository's structure and key features. The codebase will be made publicly available upon publication. The implementation is primarily built using the PyTorch \cite{paszke2019pytorch} library and has been validated on a range of hardware platforms, including Apple silicon, x86 CPUs, and NVIDIA GPU servers equipped with A100, A6000, and L40s GPUs.

\subsection{Repository outline}

The repository is organized into six modules, each progressively more complex and introducing various convergence-enhancing techniques discussed in the previous section to address the challenges of training PINNs. Starting with a simple vanilla neural network in Module 1, we gradually incorporate enhancements to improve convergence. The impact of modifications to network architecture (RFF, activation functions), initialization (RWF), and training strategies (scheduler, Adam+LBFGS, loss balancing) is explored throughout each module. The contents and details of each module are detailed below:


\begin{enumerate}
    \item \textbf{Module-1: Vanilla PINNs}
    \begin{enumerate}
        \item 1D Diffusion equation
        \item 1D Unsteady diffusion equation
        \item Allen-Cahn equation
    \end{enumerate}

    \item \textbf{Module-2: Enhancing performance by altering network architecture and initialization}
    \begin{enumerate}
        \item Allen-Cahn equation with Random Fourier Features (RFF)
        \item Lid-driven cavity at Re=100 and 400 with Fourier Feature Features (RFF)
        \item Lid-driven cavity at Re=400 with RFF and RWF
        \item Allen-Cahn equation and Lid-driven cavity with periodic activation function
    \end{enumerate}

    \item \textbf{Module-3: Strict boundary conditions, loss balancing, and LBFGS optimizer switch}
    \begin{enumerate}
        \item Imposing strict periodic boundary conditions
        \begin{enumerate}
            \item Advection equation
            \item Allen-Cahn equation
        \end{enumerate}
        
        \item Adam to LBFGS optimizer switch
        \begin{enumerate}
            \item Allen-Cahn equation: loss history 
            \item Lid-driven cavity: loss history
        \end{enumerate}
        
        \item Lid-Driven cavity with `loss balancing' and `curriculum training'
        \begin{enumerate}
            \item Implementing loss Balancing
            \item Training lid-driven cavity at Re=1000 with curriculum training
        \end{enumerate}
    \end{enumerate}

    \item \textbf{Module-4: Shock-containing problems}
    \begin{enumerate}
        \item 1-D Burgers equation
        \item 1-D Euler equations: Sod and Lax shock Tube problem
    \end{enumerate}

    \item \textbf{Module-5: Multi-GPU training with Distributed Data Parallel (DDP) framework}
    \begin{enumerate}
        \item Multi-GPU implementation for Allen-Cahn equation
        \item Multi-GPU implementation for the lid-driven cavity problem
    \end{enumerate}
\end{enumerate}

\textcolor{blue}{\textit{Module-1:}} We begin with a step-by-step guide to building a basic vanilla PINNs neural network to solve the 1D steady-state diffusion equation. It includes defining the model, configuring optimization parameters, and training the network. Given the simplicity of the differential equation and boundary conditions in module 1(a) and 1(b), we demonstrate how easily the solution converges using a vanilla network. In section 1(c), we attempt to solve the Allen-Cahn equation, showing how the vanilla network struggles with this more complex system, underscoring the limitations of a basic vanilla PINNs network for training such differential equations.

\textcolor{blue}{\textit{Module-2:}} In this module, we incorporate Random Fourier Features (RFF) to overcome the training challenges encountered by the vanilla neural network in solving the Allen-Cahn equation in Module 1(c). After successfully applying RFF to the Allen-Cahn equation, we extend its use to the lid-driven cavity problem at Re=100 and 400. While RFF converges for Re=100, it struggles at Re=400, even with increased network size and training time, suggesting the need for further network enhancements. To address this, we include Random Weight Factorization (RWF), a network weight initialization technique, alongside RFF, resolving the issue. Additionally, in Module 2(d), we demonstrate the superior performance of the less commonly used periodic activation function in PINNs.

\textcolor{blue}{\textit{Module-3:}} In continuation of Module-2, we further enhance the training performance of PINNs through loss balancing, strict boundary conditions, and an Adam to LBFGS optimizer switch. Strict periodic boundary conditions are implemented to solve the advection and Allen-Cahn equations, effectively eliminating the boundary condition loss component from the overall loss calculation, thereby reducing the optimization burden on the network during training. Next, we demonstrate the rapid convergence achieved by switching from the Adam optimizer to LBFGS after partial convergence with Adam. Finally, we explore advanced training techniques, such as loss balancing and curriculum training, applied to the lid-driven cavity at Re=1000, highlighting how these strategies can address challenges in complex fluid dynamics simulations.

\textcolor{blue}{\textit{Module-4:}} This module builds on the techniques developed in the previous modules to tackle problems involving sharp discontinuities. Specifically, we solve the 1D Burgers' and the Euler equations. We emphasize that, beyond the techniques already covered, specialized treatment near discontinuities is necessary to ensure accurate solutions.

\textcolor{blue}{\textit{Module-5:}} In this module, apply the techniques from previous modules to blood flow in stenosis. We begin by demonstrating how to use sparse training data to accelerate convergence, addressing the challenges of limited data in medical simulations. Next, we investigate the impact of various activation functions on solution accuracy, comparing their effectiveness in computing the wall shear-stress profile. Finally, we perform an ablation study to assess the impact of RFF, RWF, and activation function on the solution convergence.

\textcolor{blue}{\textit{Module-6:}} The last module focuses on scaling PINNs training using the Distributed Data Parallel (DDP) framework for multi-GPU environments. We demonstrate the implementation of multi-GPU training for the Allen-Cahn equation and the lid-driven cavity problem, highlighting improvements in training efficiency and model performance.

The current structure and contents of this repository are subject to change with future updates. Users are encouraged to report any issues, bugs, or concerns regarding the clarity of the repository's contents. Since each notebook in this repository is tailored to specific problems, it does not serve as a general-purpose code for arbitrary applications. Users are encouraged to select and adapt the notebook that best fits their problem. Future updates will include a general-purpose PINNs code that can accept governing equations, geometry, hyperparameters, and training options directly.

\begin{itemize}
    \item \textit{Advanced convergence enhancing techniques:} The notebooks introduce techniques like RFF, RWF, and periodic activation functions for improved convergence, with guidance for advanced users.

    \item \textit{Optimal training strategies:} The repository employs training methods such as curriculum training, Adam+LBFGS, loss balancing, and learning rate scheduling for efficient convergence.

    \item \textit{Multi-GPU capability:} Supports multi-GPU training via Distributed Data Parallel (DDP) for handling large-scale problems.

    \item \textit{Simple code structure:} The code repository is intentionally designed to be beginner-friendly with easy code readability. We also prioritized the code structure to make it easier to add new modules and to enable faster, simpler future development. Accelerates the learning curve for the new students working in this domain. The simple code structure and limited memory allocation also make the code run efficiently.
\end{itemize}

The code repository is intentionally designed to be beginner-friendly and easy to read. The increasing levels of complexity from Module 1 to Module 6 allow for easy extension through the addition of new modules, as well as a faster learning curve for new students in the domain. The accompanying notebooks provide detailed explanations and provide easy experimentation with model parameters.


\section{Benchmark results of QPINNACLE}
We evaluate QPINNACLE on a set of benchmark PDEs chosen to represent distinct classes of numerical and optimization challenges encountered during PINN training. The linear advection example provides a controlled setting in which the exact solution corresponds to a pure translation, allowing errors to be interpreted directly in terms of phase shift and amplitude. It also enables the study of sensitivity to consistent coordinate scaling and the effects of enforcing periodicity. The Allen-Cahn equation describes stiff reaction-diffusion dynamics with thin moving interfaces. The inviscid Burgers equation exhibits gradient blow-up and shock formation, which challenge smooth neural approximations and can lead to overly diffused solutions unless sufficient resolution and appropriate boundary constraints are used. The steady lid-driven cavity problem evaluates coupled incompressible Navier-Stokes constraints over increasing Reynolds number, where boundary layers and secondary vortices yield multiscale spatial features and a more ill-conditioned optimization problem that can converge to incorrect solution branches. The stenosis benchmark addresses near-wall hemodynamics with sparse interior supervision. The Sod shock tube and the 2D Riemann problem test nonlinear conservation laws with discontinuities and wave interactions, focusing on whether the learned solution reproduces shock and contact structures without explicit shock-capturing operators. The 2D Maxwell pulse benchmark tests multidimensional wave propagation over longer time intervals in a periodic domain, where oscillatory solutions require accurate phase evolution and adequate representation of high-frequency components.

\subsection{Advection equation} \label{sec:advection}
The governing equation, together with the associated initial and boundary conditions for the linear advection problem, is given by
\begin{equation}
\begin{aligned}
    \text{PDE:} \quad & u_t + c u_x = 0, \quad && x \in [0, 2\pi], \; t \in [0, 1], \\
    \text{IC:} \quad & u(x, 0) = \sin(x), \quad && x \in [0, 2\pi], \\
    \text{BC:} \quad & u(0, t) = u(2\pi, t), \quad && t \in [0, 1], \\
    & u_x(0, t) = u_x(2\pi, t), \quad && t \in [0, 1].
\end{aligned}
\label{eqn:adv}
\end{equation}
The initial condition corresponds to a sinusoidal profile, and the PDE governs its periodic transport across the domain over the time interval \( T = 1 \), with wave speed \( c \). For moderate wave speeds, such as \( c = 10 \) and \( c = 40 \), the combination of Random Fourier Features (RFF) (Sec.~\ref{sec:RFF}) and loss balancing (Sec.~\ref{sec:loss_balancing}) is sufficient to obtain converged solutions. However, for larger wave speeds, such as \( c = 80 \), strict enforcement of periodic boundary conditions becomes necessary to maintain stability and accuracy. In the absence of such enforcement, the solution \( u(x,t) \) tends to decay toward zero.

The network's robustness was further evaluated across alternative spatial domains and collocation-point sampling methods. We consider the domain sizes of \( x \in [0,2] \) and \( x \in [-\pi,\pi] \). The corresponding initial conditions were chosen as \( u(x,0)=\sin(\pi x) \) and \( u(x,0)=-\sin(x) \), respectively, which are mathematically equivalent to the original problem (Eqn.~\ref{eqn:adv}). When the spatial domain is not expressed in terms of \( \pi \), consistency requires modifying the governing equation to \( u_t + \frac{c}{\pi} u_x = 0 \). If this rescaling is not applied, the network fails to learn the transported wave and instead converges to the trivial solution.

\begin{figure}[t!]
    \centering
    \includegraphics[width=\linewidth]{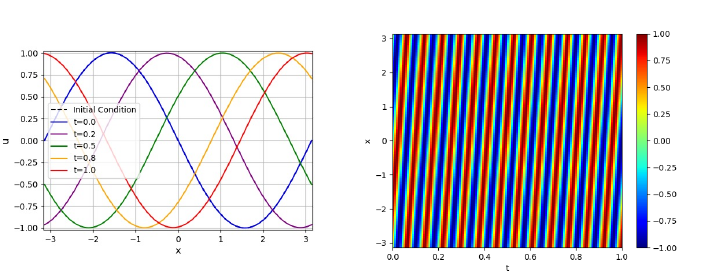}
    \caption{Temporal evolution of the solution for the advection/convection equation with the initial condition. The plot demonstrates the wave's propagation through the domain over time.}
    \label{fig:advection_diffusion}
\end{figure}

\begin{table}[h!]
\centering
\caption{Comparison of equation loss, initial condition (IC) loss, boundary condition (BC) loss, and relative $L^2$ error for different collocation point distributions. The analytical solution was used to compute the relative $L^2$ error.}
\begin{tabular}{lcccc}
\toprule
 & \textbf{Eqn. Loss} & \textbf{IC Loss} & \textbf{BC Loss} & \textbf{Rel. $L^2$ Error} \\
\midrule
Uniform, $x \in [0,2\pi]$ 
& $3.00 \times 10^{-4}$ 
& $6.20 \times 10^{-10}$ 
& $1.50 \times 10^{-13}$ 
& $1.15 \times 10^{-5}$ \\

LHS, $x \in [0,2\pi]$ 
& $3.22 \times 10^{-4}$ 
& $3.56 \times 10^{-10}$ 
& $1.58 \times 10^{-13}$ 
& $5.81 \times 10^{-7}$ \\

LHS, $x \in [0,2]$ 
& $2.96 \times 10^{-4}$ 
& $4.36 \times 10^{-10}$ 
& $1.02 \times 10^{-12}$ 
& $5.16 \times 10^{-7}$ \\

Wang et al.~\cite{wang2021understanding} 
& - 
& -
& -
& $5.84 \times 10^{-4}$ \\
\bottomrule
\end{tabular}
\label{tab:advection_results}
\end{table}

The effect of collocation point generation was investigated by comparing uniformly spaced grids with grids generated using the Latin Hypercube Sampling (LHS) method. Randomized resampling of collocation points at each epoch did not yield improved accuracy relative to fixed uniform grids. A quantitative comparison of the relative \( L_2 \) error for the different sampling strategies is provided in Table~\ref{tab:advection_results}. Among the tested configurations, the uniform grid achieved the lowest error. Using the present implementation, the resulting relative error is nearly two orders of magnitude smaller than that reported in the expert guide for the same number of collocation points and network architecture.

\subsection{Allen-Cahn equation}

The governing equation, initial condition, and boundary conditions for the Allen-Cahn problem are as follows:
\begin{equation}
\begin{aligned}
    \text{PDE:} \quad & u_t - 0.0001 u_{xx} + 5u^3 - 5u = 0, \quad && x \in [-1, 1], \, t \in [0, 1] \\
    \text{IC:} \quad & u(x, t=0) = x^2 \cos(\pi x), \quad && x \in [-1, 1] \\
    \text{BC:} \quad & u(x=-1, t) = u(x=1, t), \quad && t \in [0, 1] \\
    & u_x(x=-1, t) = u_x(x=1, t), \quad && t \in [0, 1]
\end{aligned}
\end{equation}

The Allen–Cahn equation (general form, \( u_t = \epsilon^2 u_{xx} - f(u) \)), is considered on a bounded spatial domain with appropriate initial and boundary conditions. The initial condition is prescribed as \( u(x,0) = x^2 \cos(\pi x) \), and the equation governs its temporal evolution, leading to the development and motion of diffuse interfaces, as shown in Fig.~\ref{fig:allen_cahn}. Since no closed-form analytical solution is available for this configuration, a high-resolution finite-difference method solution computed on a fine $1024^2$ grid is used as the reference. The results demonstrate the network’s ability to resolve the coupled diffusion and nonlinear reaction dynamics over time.

\begin{figure}[t!]
    \centering
    \includegraphics[width=\linewidth]{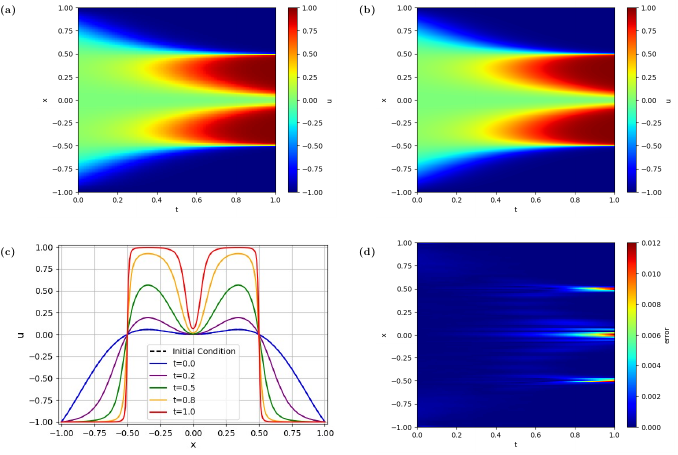}
    \caption{Temporal evolution of the solution for the Allen-Cahn equation. The plot demonstrates the phase separation and interface dynamics through the domain $x \in [-1, 1]$ over time.}
    \label{fig:allen_cahn}
\end{figure}

Similar to the exercise performed in Sec.~\ref{sec:advection}, the effect of collocation point generation was evaluated by comparing uniformly spaced grids with grids generated using the LHS. Table~\ref{table:resultsAC} presents a comparison of the relative \( L^2 \) error norms for the different sampling strategies. Both approaches yielded comparable accuracy when benchmarked against the numerical reference solution. The temporal causality mechanism was also assessed, but it did not produce a measurable reduction in error. Notably, the relative \( L^2 \) error reported by Wang et al.~\cite{wang2021understanding} is approximately two orders of magnitude smaller than the values obtained here, with discrepancies in the present results primarily concentrated near phase boundaries at later times. To further investigate resolution effects, an additional simulation with \(256 \times 256\) collocation points was conducted, yielding similar error levels.

\begin{table}[t!]
\centering
\begin{threeparttable}
\caption{Comparison of Eq. Loss, IC Loss, BC Loss, and Relative \( L^2 \) Error for different collocation point distributions for the advection equation.}
\begin{tabular}{lcccc}
\toprule
 & \textbf{Eq. Loss} & \textbf{IC Loss} & \textbf{BC Loss} & \textbf{Rel. \( L^2 \) Error} \\ 
\midrule
Uniform & \( 3.90 \times 10^{-7} \) & \( 2.86 \times 10^{-10} \) & \( 3.55 \times 10^{-11} \) & \( 2.58 \times 10^{-3} \) \\ 
1D LHS & \( 9.67 \times 10^{-7} \) & \( 3.29 \times 10^{-9} \) & \( 1.25 \times 10^{-10} \) & \( 3.97 \times 10^{-3} \) \\ 
2D LHS & \( 3.37 \times 10^{-7} \) & \( 2.17 \times 10^{-9} \) & \( 1.66 \times 10^{-10} \) & \( 1.19 \times 10^{-3} \) \\ 
Uniform, Temp. Causality & \( 1.26 \times 10^{-7} \) & \( 1.65 \times 10^{-10} \) & \( 3.67 \times 10^{-11} \) & \( 1.92 \times 10^{-3} \) \\ 
Uniform, 256 x 256 & \( 4.01 \times 10^{-7} \) & \( 6.99 \times 10^{-10} \) & \( 7.03 \times 10^{-11} \) & \( 2.61 \times 10^{-3} \) \\ 
Wang et al. (2023)\cite{wang2023expertsguide} & - & - & - & \( 5.37 \times 10^{-5} \) \\ 
\bottomrule
\label{table:resultsAC}
\end{tabular}
\end{threeparttable}

\end{table}

\subsection{Inviscid Burgers equation}
The one-dimensional inviscid Burgers equation is considered in the form
\begin{equation}
\begin{aligned}
    \text{PDE:} \quad & u_t + u u_x = 0, \quad && x \in [0,2], \; t \in [0,1], \\
    \text{IC:} \quad & u(x,0) = \sin(\pi x), \quad && x \in [0,2], \\
    \text{BC:} \quad & u(0,t) = u(2,t) = 0, \quad && t \in [0,1].
\end{aligned}
\end{equation}
This problem exhibits strong nonlinear wave steepening and the formation of a sharp gradient that evolves into a discontinuity, posing significant challenges for PINN optimization. To improve training stability, temporal causality-aware training~\cite{wang2024causal} was evaluated in combination with Random Fourier Features (Sec.~\ref{sec:RFF}), random weight factorization (Sec.~\ref{sec:RWF}), and loss balancing strategies (Sec.~\ref{sec:loss_balancing}). Strict boundary enforcement was applied in the spatial domain by constraining the network to satisfy identical values and gradients at both boundaries, thereby promoting periodic consistency while simultaneously adding a Dirichlet loss term to enforce the zero boundary values. In contrast, strict constraints were not imposed in time; instead, the temporal domain was rescaled to the interval \([-1,1]\). This normalization is consistent with the spatial treatment and aligns with the range of sine and cosine functions over a period of \(2\pi\).

As in the advection case, the spatial coordinate may be reformulated in terms of \( \pi \); however, defining the domain as \( [0,2\pi] \) or using other \( \pi \)-scaled coordinates resulted in slower convergence, demonstrating the sensitivity of PINNs to domain representation. On the domain \( [0,2] \), the network successfully resolved the steep gradient using a relatively coarse collocation grid. The temporal causality-aware training did not yield improved results at either domain size. The influence of collocation point generation was also evaluated by comparing uniform sampling with LHS. The LHS approach exhibited convergence difficulties, with the equation loss stagnating on a plateau. A comparison of the relative \( L^2 \) error norms for all tested configurations is presented in Table~\ref{tab:burgers}.

\begin{figure}[h!]
    \centering
    \includegraphics[width=\linewidth]{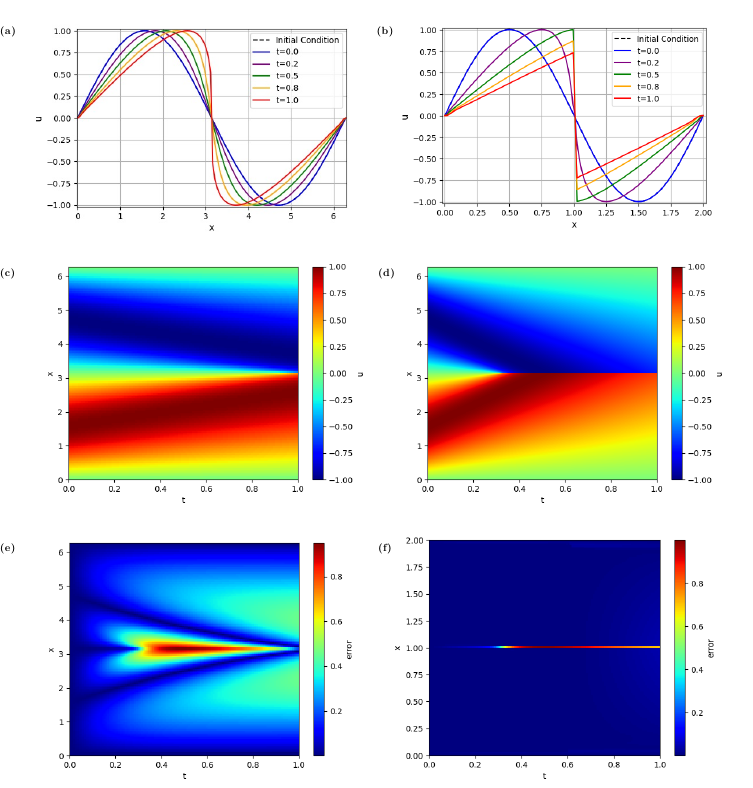}
    \caption{PINNs solution of Burger's equation studied using two domain sizes. 1st column: (a,c,e) $x \in [0,2\pi]$ and initial condition $u(x, t = 0) = \sin (\pi x)$, and second column (b,d,f) $x \in [0,2\pi]$ and initial condition $u(x, t = 0) = \sin (\pi x)$. 1st row: Solution at different time steps. 2nd row: Contour plot of solution evolution. 3rd row: The respective absolute error compared with a high-resolution Finite difference numerical solution.}
    \label{fig:burgers_2d}
\end{figure}





\begin{table}[t]
\centering
\caption{Comparison of equation loss, initial condition (IC) loss, boundary condition (BC) loss, and relative L\(^2\) error for different collocation point distributions. The FDM solution was used to calculate the relative L\(^2\) error.}
\begin{tabular}{lcccc}
\toprule
\textbf{Method} & \textbf{Eq. Loss} & \textbf{IC Loss} & \textbf{BC Loss} & \textbf{Rel. L\(^2\) Error} \\ \midrule
Uniform, $x \in [0, 2\pi]$ & $1.55 \times 10^{-7}$ & $1.36 \times 10^{-10}$ & $2.03 \times 10^{-11}$ & $4.22 \times 10^{-1}$ \\ 
Uniform, $x \in [0, 2]$ & $5.96 \times 10^{-7}$ & $4.10 \times 10^{-10}$ & $1.22 \times 10^{-11}$ & $1.05 \times 10^{-1}$ \\ 
1D LHS & - & - & - & - \\ 
2D LHS & $2.07 \times 10^{-1}$ & $2.08 \times 10^{-4}$ & $2.47 \times 10^{-11}$ & $3.10 \times 10^{-1}$ \\ 
Uniform, temporal causality & $1.17 \times 10^{-7}$ & $3.50 \times 10^{-10}$ & $3.14 \times 10^{-11}$ & $1.05 \times 10^{-1}$ \\ 
2D LHS, temporal causality & $7.27 \times 10^{-2}$ & $4.44 \times 10^{-5}$ & $6.31 \times 10^{-10}$ & $1.92 \times 10^{-1}$ \\ 
Uniform, 256 $\times$ 256 & $9.42 \times 10^{-7}$ & $2.10 \times 10^{-9}$ & $3.01 \times 10^{-12}$ & $1.34 \times 10^{-2}$ \\ 
Uniform, 256 $\times$ 256, temporal causality & $5.45 \times 10^{-7}$ & $1.47 \times 10^{-9}$ & $2.87 \times 10^{-11}$ & $7.29 \times 10^{-2}$ \\ \bottomrule
\end{tabular}
\label{tab:burgers}
\end{table}

\subsection{Lid-driven cavity}
We consider the two-dimensional incompressible lid-driven cavity flow governed by the nondimensional Navier-Stokes equations on the unit square $(x,y)\in[0,1]\times[0,1]$. 
\begin{align}
    u_x + v_y &= 0, \\
    u u_x + v u_y &= -p_x + \frac{1}{Re}\left(u_{xx} + u_{yy}\right), \\
    u v_x + v v_y &= -p_y + \frac{1}{Re}\left(v_{xx} + v_{yy}\right),
\end{align}
where $(u,v)$ denotes the velocity components and $p$ denotes pressure. The Reynolds number $Re$ controls the ratio of inertial to viscous effects. The steady formulation of the above governing equations is considered for the low Reynolds numbers considered in this study, $Re\in\{100,400,1000,3200\}$. The steady assumption eliminates temporal dependence in the governing equations. This reduces the training cost, but it can increase optimization difficulty as $Re$ increases. At higher $Re$, the cavity flow exhibits sharper gradients and stronger recirculation, and time-dependent dynamics may arise in the physical problem; nevertheless, the steady benchmark remains the standard reference for method comparison. No-slip boundary conditions are imposed on all walls, with a unit tangential lid velocity at the top boundary:
\begin{align}
    u(0,y) &= 0, \quad u(1,y)=0, \quad u(x,0)=0, \\
    u(x,1) &= 1, \\
    v(0,y) &= 0, \quad v(1,y)=0, \quad v(x,0)=0, \quad v(x,1)=0.
\end{align}
These constraints are enforced through penalty terms in the training objective.

\textbf{Re = [100,400].} For $Re=100$ and $Re=400$, the learned solutions match the Ghia et al.\ centerline profiles closely, as shown in Fig.~\ref{fig:compare_ldc_100_400}. Here, the profiles correspond to $u(x=0.5,y)$ and $v(x,y=0.5)$, which are standard diagnostics for the cavity benchmark. For $Re=1000$ and $Re=3200$, baseline training can converge to solutions that deviate from the benchmark profiles. To improve convergence at higher $Re$, we use curriculum training, increase the collocation resolution to $256\times256$, and sample collocation points using Latin hypercube sampling (LHS). Curriculum training progresses from lower to higher Reynolds numbers to provide a better initialization for the more challenging regimes. Increasing collocation density improves the spatial coverage of regions with strong gradients. LHS improves space-filling properties relative to purely grid-based or naive random sampling, thereby reducing sensitivity to point placement during optimization.

\textbf{Re = [1000, 3200].} For $Re=1000$, Fig.~\ref{fig:compare_ldc_1000} contrasts a baseline result and the improved result after applying these modifications. The improved model recovers the expected centerline profiles and yields a qualitatively consistent $u$-velocity field. For $Re=3200$, Fig.~\ref{fig:compare_ldc_3200} compares our centerline profiles with the DNS data of Ghia et al. \cite{ghia1982highre} and with the profiles reported by Wang et al.\ \cite{wang2021understanding}. The proposed training strategy yields closer agreement with the DNS data over most of the centerline, including near-wall regions where gradients are largest.

\begin{figure}[t!]
    \centering
    \includegraphics[width=\linewidth]{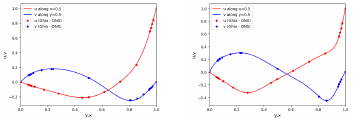}
    \caption{Lid-driven cavity (No curriculum training) centerline velocity profiles for $Re=100$ (left) and $Re=400$ (right). Solid lines denote the present predictions for $u(x=0.5,y)$ and $v(x,y=0.5)$; markers denote the DNS data of Ghia et al.\ \cite{ghia1982highre}.}
    \label{fig:compare_ldc_100_400}
\end{figure}

\begin{figure}[t!]
    \centering
    \includegraphics[width=\linewidth]{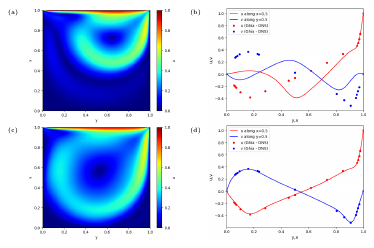}
    \caption{Lid-driven cavity results at $Re=1000$. Panels (a,b) show a baseline model: (a) contour of the horizontal velocity component $u$, and (b) centerline profiles compared with Ghia et al.\ \cite{ghia1982highre}. Panels (c,d) show the improved model obtained with curriculum training, $256\times256$ collocation points, and LHS: (c) contour of $u$, and (d) corresponding centerline profiles $u(x=0.5,y)$ and $v(x,y=0.5)$ compared with DNS.}
    \label{fig:compare_ldc_1000}
\end{figure}

\begin{figure}[t!]
    \centering
    \includegraphics[width=0.6\linewidth]{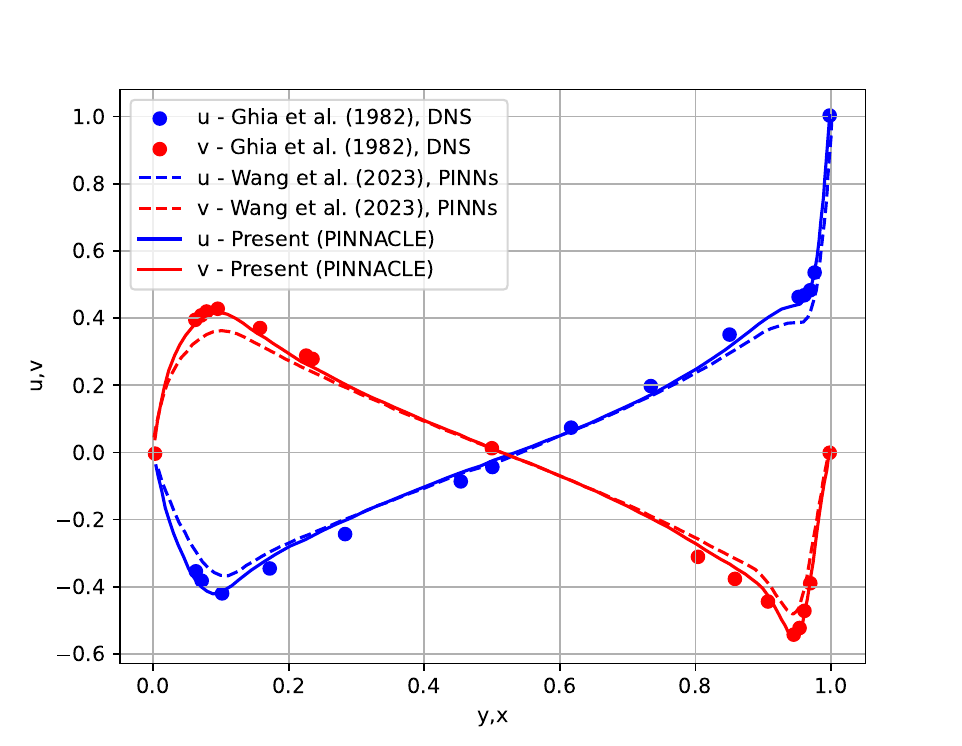}
    \caption{Lid-driven cavity centerline velocity profiles at $Re=3200$. Solid lines denote the present predictions for $u(x=0.5,y)$ and $v(x,y=0.5)$. DNS data from Ghia et al. \cite{ghia1982highre} are shown as markers. Dashed lines reproduce the profiles reported by Wang et al. \cite{wang2021understanding} for comparison.}
    \label{fig:compare_ldc_3200}
\end{figure}

\subsection{2D blood flow in a stenosis}
\label{sec:stenosis}

We next benchmark QPINNACLE on a steady, incompressible laminar flow through an idealized 2D stenosis, and validate the learned solution against a high-fidelity CFD reference. This benchmark is adapted from the near-wall hemodynamics study of Arzani \textit{et al.}~\cite{arzani2021nearwall}, and is representative of flows where clinically relevant wall quantities (in particular wall shear stress, WSS) must be inferred from sparse in-domain velocity observations using physics constraints~\cite{raissi2019pinns}.

The computational domain is a 2D channel of length $L=2$ and height $H=0.3$, with a symmetric stenosis centered at $x\approx 1$, stenosis length $L_s=0.3$, and a minimum throat opening of $H_t=0.075$ (Fig.~\ref{fig:stenosis_points}). We consider a steady flow at $\mathrm{Re}=150$ (based on $u_{\max}$ and $H$), with constant density $\rho=1$ and dynamic viscosity $\mu=10^{-3}$.
At the inlet ($x=0$), we prescribe a parabolic profile
\begin{equation}
u_{\mathrm{in}}(y) = 4u_{\max}\Big(\frac{y}{H}\Big)\Big(1-\frac{y}{H}\Big), \qquad v_{\mathrm{in}}(y)=0,
\end{equation}
with $u_{\max}=0.5$.
On solid walls, we enforce the no-slip condition $u=v=0$.
At the outlet ($x=L$) we impose homogeneous axial gradients,
\begin{equation}
\frac{\partial u}{\partial x}=\frac{\partial v}{\partial x}=\frac{\partial p}{\partial x}=0.
\end{equation}
The computational fluid dynamics (CFD) reference fields $(u_{\mathrm{CFD}},v_{\mathrm{CFD}},p_{\mathrm{CFD}})$ used for validation are taken from~\cite{arzani2021nearwall}.

\textit{PINN setup with sparse supervision.}
We approximate $(u,v,p)$ with a fully connected neural network $(u_\theta(x,y),v_\theta(x,y),p_\theta(x,y))$ trained by minimizing a composite objective
\begin{equation}
\mathcal{L} = \mathcal{L}_{\mathrm{PDE}} + \lambda_{\mathrm{bc}}\,\mathcal{L}_{\mathrm{BC}} + \lambda_{\mathrm{data}}\,\mathcal{L}_{\mathrm{data}},
\end{equation}
where $\mathcal{L}_{\mathrm{PDE}}$ is the mean-squared residual of the steady incompressible Navier-Stokes equations at interior collocation points, $\mathcal{L}_{\mathrm{BC}}$ enforces the wall and inlet/outlet constraints at boundary points, and $\mathcal{L}_{\mathrm{data}}$ penalizes mismatch to sparse interior velocity measurements (``supervision'' points). Spatial derivatives are computed via automatic differentiation, following the standard PINN formulation~\cite{raissi2019pinns}. To improve representation of sharp spatial variations, we embed $(x,y)$ using a Fourier feature mapping before the MLP~\cite{tancik2020fourierfeatures}, consistent with common PINN best practices~\cite{wang2023expertsguide}.

In this benchmark, we use:
\begin{itemize}
  \item Architecture: 5 hidden layers, 128 neurons per layer, outputs $(u,v,p)$.
  \item Input embedding: Fourier feature mapping with Gaussian bandwidth parameter $\sigma=10$~\cite{tancik2020fourierfeatures}.
  \item Optimizer: Adam~\cite{kingma2014adam}, learning rate $10^{-3}$, trained for $1.5\times 10^4$ epochs.
  \item Collocation and boundary sets: $N_{\Omega}=4\times 10^4$ interior points, $N_{\Gamma_w}=1500$ wall points, and $N_{\Gamma_{\mathrm{in}}}=N_{\Gamma_{\mathrm{out}}}=200$ on the inlet and outlet.
  \item Sparse supervision: $N_{\mathrm{data}}=5$ interior velocity samples $(u_{\mathrm{CFD}},v_{\mathrm{CFD}})$ placed in the stenosis and near-wall region (Fig.~\ref{fig:stenosis_points}).
  \item Loss weights: $\lambda_{\mathrm{bc}}=20$ and $\lambda_{\mathrm{data}}=1$.
\end{itemize}

\begin{figure}[t]
  \centering
  \includegraphics[width=\linewidth]{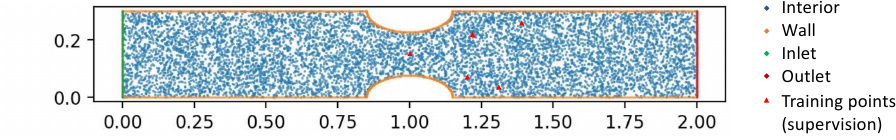}
  \caption{Collocation points sampled for the 2D stenosis benchmark.
  Interior collocation points (blue) enforce the PDE residual, boundary points enforce wall no-slip (orange) and inlet/outlet conditions (green/red), and five sparse interior velocity measurements (red triangles) provide supervision.
  This configuration matches the benchmark protocol used for near-wall hemodynamics reconstruction from sparse data~\cite{arzani2021nearwall}.}
  \label{fig:stenosis_points}
\end{figure}

To quantify the error beyond pointwise velocity agreement, we focus on WSS along the bottom wall downstream of the stenosis, a key near-wall diagnostic in~\cite{arzani2021nearwall}. For a Newtonian fluid, the streamwise WSS on a horizontal wall is
\begin{equation}
\tau_w(x) = \mu\left.\frac{\partial u_\theta}{\partial y}\right|_{y=0},
\end{equation}
computed directly from the learned velocity via automatic differentiation. We compare the normalized profile $\tau_w/\tau_{w,\max}$ against CFD on the interval $1.15 \le x \le 2$.
For a discrete set of wall evaluation points $\{x_i\}_{i=1}^N$, we report the mean absolute error (MAE) and mean relative error (MRE),
\begin{equation}
\mathrm{MAE}=\frac{1}{N}\sum_{i=1}^N \left| \hat{\tau}_i-\tau_i \right|,
\qquad
\mathrm{MRE}=\frac{1}{N}\sum_{i=1}^N \frac{\left| \hat{\tau}_i-\tau_i \right|}{\left|\tau_i\right|},
\end{equation}
where $\hat{\tau}_i$ is the PINN prediction and $\tau_i$ is the CFD reference (both using the same normalization).

Figure~\ref{fig:stenosis_benchmark} shows representative PINN predictions of the streamwise velocity field $u(x,y)$ and the corresponding absolute error relative to CFD, together with a quantitative WSS comparison.
Both activation choices recover the global flow features and the downstream recovery of the velocity field, while the largest discrepancies occur in the post-stenotic region, where the shear layer and recirculation dynamics are most demanding for sparse-data learning.
For the normalized bottom-wall WSS at $1.5\times 10^4$ epochs, we obtain MAE $\approx 3.96\times 10^{-2}$ and MRE $\approx 32.1\%$ (tanh), and MAE $\approx 3.58\times 10^{-2}$ and MRE $\approx 28.5\%$ (swish). Overall, the stenosis benchmark confirms that QPINNACLE can reproduce the CFD reference to a useful level using physics constraints and only five interior velocity measurements, with WSS trends captured at moderate relative error, consistent with the practical difficulty of near-wall inference emphasized in~\cite{arzani2021nearwall}.

\begin{figure}[t]
  \centering
  \includegraphics[width=\linewidth]{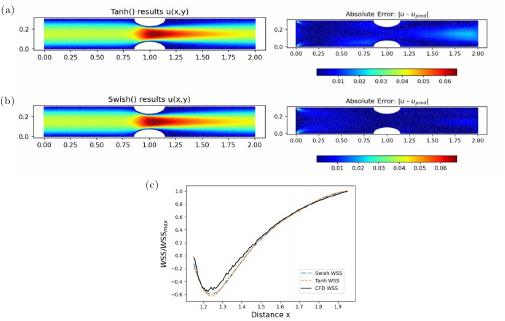}
  \caption{Stenosis benchmark validation against CFD.
  (a,b) Predicted streamwise velocity $u(x,y)$ and pointwise absolute error $\left|u_{\mathrm{CFD}}-u_{\theta}\right|$ over the full domain (shown for two activation functions to illustrate robustness of the benchmark outcome).
  (c) Normalized bottom-wall WSS profile $\tau_w/\tau_{w,\max}$ on $1.15 \le x \le 2$, comparing PINN predictions with the CFD reference.
  WSS is computed as $\tau_w(x)=\mu\,\partial u/\partial y|_{y=0}$ using automatic differentiation.}
  \label{fig:stenosis_benchmark}
\end{figure}

\subsection{Sod shock tube problem}
\label{sec:sod}

As a representative benchmark for nonlinear hyperbolic systems with discontinuities, we consider the one-dimensional Sod shock tube problem governed by the compressible Euler equations. This test is widely used to assess a method's ability to capture discontinuities without numerical oscillations and excessive dissipation \cite{sod1978shock,toro2009riemann}.

The conservative form of the one-dimensional Euler equations is
\begin{equation}
\frac{\partial}{\partial t}
\begin{pmatrix}
\rho \\
\rho u \\
\rho E
\end{pmatrix}
+
\frac{\partial}{\partial x}
\begin{pmatrix}
\rho u \\
\rho u^2 + p \\
u(\rho E+p)
\end{pmatrix}
= 0,
\end{equation}
where $\rho$ is the density, $u$ the velocity, $p$ the pressure, and $E=\frac{p}{(\gamma-1)\rho}+ \frac{u^2}{2}$ the total energy. An ideal gas equation of state is assumed. The initial condition consists of a Riemann problem with a discontinuity at $x=0.5$,
\begin{equation}
(\rho,u,p) =
\begin{cases}
(1.0,\,0.0,\,1.0), & x < 0.5, \\
(0.125,\,0.0,\,0.1), & x \ge 0.5,
\end{cases}
\end{equation}
and zero-gradient Neumann boundary conditions are enforced on all solution variables on either side of the computational domain.

The solution $(\rho,u,p)(x,t)$ is approximated using a fully connected neural network trained by minimizing the Euler residuals at interior collocation points.
A $128 \times 128$ set of space-time collocation points is used.
To mitigate spectral bias and improve representation of steep gradients, random Fourier feature embeddings are applied at the input layer.
Training employs gradient-based optimization with adaptive loss balancing, followed by a switch from Adam to L-BFGS for refinement, consistent with the overall QPINNACLE benchmark protocol \cite{raissi2019pinns,wang2023expertsguide}.

Figure~\ref{fig:sod_results} compares the PINN predictions against a reference finite-volume CFD solution at a fixed time.
The density, velocity, and pressure profiles closely match the reference solution, capturing the shock location, contact discontinuity, and rarefaction fan without spurious oscillations.
Minor deviations are confined to narrow regions around discontinuities, which is expected given the smooth neural representation and the absence of explicit shock-capturing terms.
Overall, this benchmark demonstrates that the PINN framework can recover the correct weak solution structure of the Euler equations under a standard shock tube configuration.

\begin{figure}[t]
  \centering
  \includegraphics[width=\linewidth]{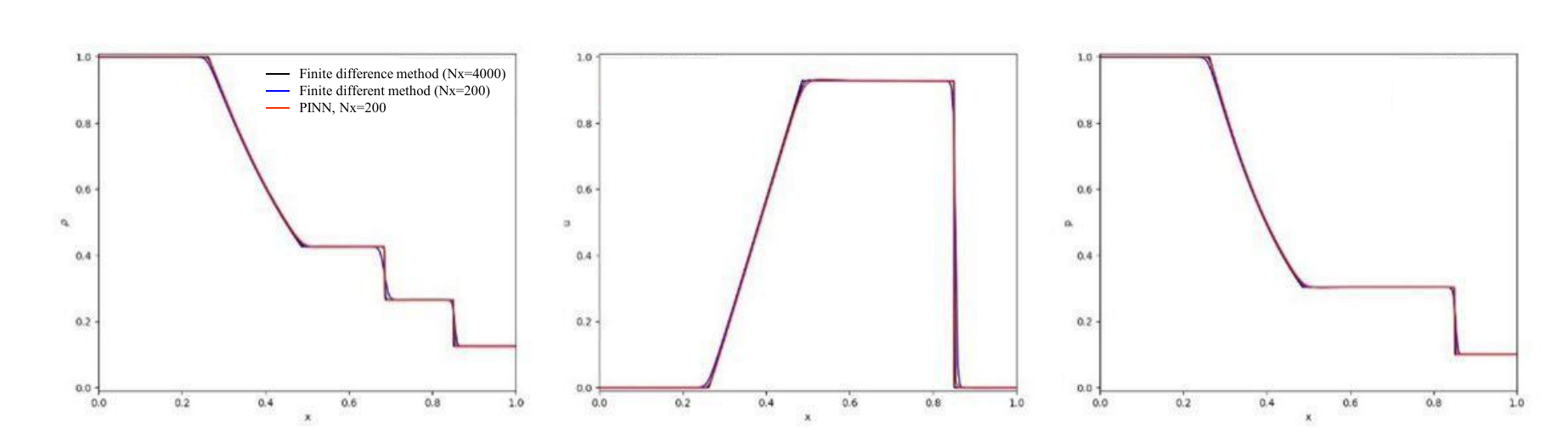}
  \caption{Sod shock tube benchmark.
  Comparison of PINN predictions with a finite-volume CFD reference for density $\rho$, velocity $u$, and pressure $p$ at a fixed time.
  The PINN solution reproduces the shock, contact discontinuity, and rarefaction fan with good agreement.
  Small discrepancies are localized near discontinuities, reflecting the smooth function approximation inherent to neural representations.}
  \label{fig:sod_results}
\end{figure}

\subsection{2D Riemann problem}
\label{sec:riemann2d_cfg6}

To further assess the capability of QPINNACLE on nonlinear hyperbolic systems with interacting discontinuities, we consider the two-dimensional compressible Euler equations under the sixth configuration of the classical 2D Riemann problem. This benchmark is known to generate complex shock interactions, slip lines, and vortex-type structures, and is commonly used to evaluate multidimensional solvers for conservation laws \cite{schulz1993classification,kurganov2001semidiscrete,toro2009riemann}.

The governing equations are the two-dimensional compressible Euler equations written in conservative form,
\begin{equation}
\frac{\partial}{\partial t}
\begin{pmatrix}
\rho \\
\rho u \\
\rho v \\
\rho E
\end{pmatrix}
+
\frac{\partial}{\partial x}
\begin{pmatrix}
\rho u \\
\rho u^2 + p \\
\rho u v \\
u(rho E+p)
\end{pmatrix}
+
\frac{\partial}{\partial y}
\begin{pmatrix}
\rho v \\
\rho u v \\
\rho v^2 + p \\
v(\rho E+p)
\end{pmatrix}
= 0,
\end{equation}
where the variable names follow from Sec.~\ref{sec:sod} with $(u,v)$ as $(x,y)$ velocity components, and $E=\frac{p}{(\gamma-1)\rho}+ \frac{u^2+v^2}{2}$ is the total energy. An ideal gas equation of state is assumed.

The computational domain is the unit square $(x,y)\in[0,1]\times[0,1]$, divided into four quadrants with piecewise constant initial states,
\begin{equation}
(\rho,u,v,p)=
\begin{cases}
(1.0,\,0.75,\,-0.5,\,1.0), & x>0.5,\ y>0.5,\\
(2.0,\,0.75,\,0.5,\,1.0),  & x<0.5,\ y>0.5,\\
(1.0,\,-0.75,\,0.5,\,1.0), & x<0.5,\ y<0.5,\\
(3.0,\,-0.75,\,-0.5,\,1.0),& x>0.5,\ y<0.5.
\end{cases}
\end{equation}
This configuration corresponds to configuration~6 in the classification of Schulz-Rinne \textit{et al.}~\cite{schulz1993classification}. Periodic boundary conditions are enforced on all domain boundaries, allowing the wave interactions to evolve without external forcing.

The solution fields $(\rho,u,v,p)(x,y,t)$ are approximated using a fully connected neural network trained by minimizing the Euler residuals at interior collocation points.
A uniform set of $64\times64\times64$ space-time collocation points is used.
Random Fourier feature embeddings are applied at the input layer to mitigate spectral bias, together with random weight factorization and adaptive loss balancing.
Strict periodic boundary conditions are enforced directly through the network input mapping.

Figure~\ref{fig:riemann2d_cfg6} compares the PINN predictions of density and $u$-velocity against a reference finite-volume CFD solution obtained with an MP5 scheme \cite{suresh1997accurate}.
The PINN solution reproduces the main multidimensional features of configuration~6, including the curved shock structures and the shear-driven roll-up near the quadrant interfaces.
As expected for a smooth neural representation, the largest discrepancies are localized near sharp discontinuities, while the overall wave topology and interaction patterns remain consistent with the CFD reference.
This benchmark demonstrates that QPINNACLE can recover the essential dynamics of a challenging two-dimensional Riemann problem without explicit shock-capturing terms.

\begin{figure}[t]
  \centering
  \includegraphics[width=0.75\linewidth]{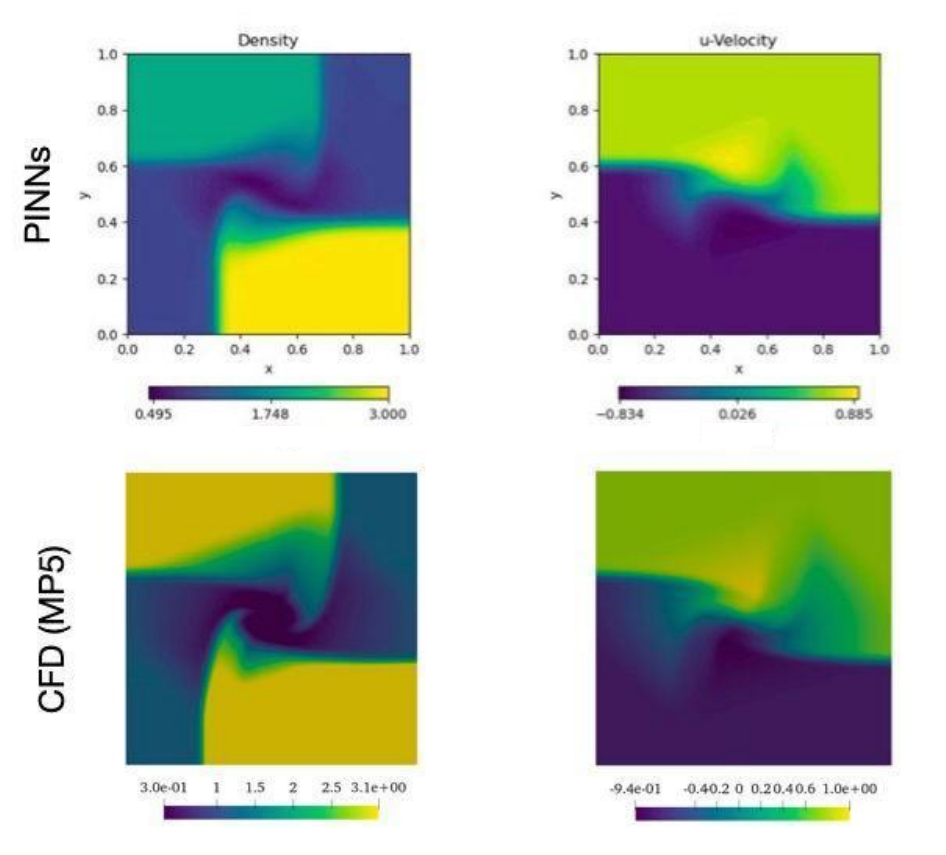}
  \caption{Two-dimensional Riemann problem, configuration~6.
  Comparison of PINN and CFD (computed using the MP5 reconstruction \cite{suresh1997accurate} and HLLC Riemann solver \cite{toro2009riemann}) solutions for density $\rho$ and $u$-velocity at a fixed time.
  The PINN results capture the dominant shock interactions and shear structures observed in the reference solution, with notable deviations confined to narrow regions near discontinuities.}
  \label{fig:riemann2d_cfg6}
\end{figure}

\subsection{Maxwell’s equations: 2-D Gaussian pulse in a periodic box}
\label{sec:maxwell_pulse_box}

We next demonstrate the capability of QPINNACLE on an unsteady two-dimensional electromagnetic wave propagation problem governed by Maxwell’s equations. Specifically, we consider the evolution of a two-dimensional Gaussian electric pulse in a spatially periodic domain, following the benchmark problem presented in Section 4.2 of \cite{wang2023expertsguide}. This test case is particularly well-suited for assessing training stability and solution quality in PINNs, as it involves coupled hyperbolic dynamics, multidimensional wave propagation, and long-time integration.

In two spatial dimensions and in the absence of sources, Maxwell’s equations for a transverse electric configuration can be written as
\begin{align}
\frac{\partial E_z}{\partial t} &=
\frac{1}{\varepsilon}
\left(
\frac{\partial H_y}{\partial x}
-
\frac{\partial H_x}{\partial y}
\right), \label{eq:maxwell_2d_ez}\\
\frac{\partial H_x}{\partial t} &=
-\frac{1}{\mu}
\frac{\partial E_z}{\partial y}, \label{eq:maxwell_2d_hx}\\
\frac{\partial H_y}{\partial t} &=
\frac{1}{\mu}
\frac{\partial E_z}{\partial x}. \label{eq:maxwell_2d_hy}
\end{align}
Throughout this study, free-space conditions are assumed, with normalized material parameters $\varepsilon = \mu = 1$.

The computational domain is defined as $(x,y)\in[-1,1]\times[-1,1]$, with the temporal interval $t\in[0,1.5]$. Periodicity is enforced in both spatial directions directly through the network architecture, thereby eliminating the need for an explicit boundary loss term. The initial conditions consist of a centered Gaussian pulse in the electric field and zero magnetic fields,
\begin{align}
E_z(x,y,0) &= \exp\!\left(-25(x^2+y^2)\right), \label{eq:ic_ez}\\
H_x(x,y,0) &= 0, \\
H_y(x,y,0) &= 0.
\end{align}
The PINN is trained by minimizing a composite loss comprising the initial condition loss and the physics residual corresponding to Eqs.~\eqref{eq:maxwell_2d_ez}-\eqref{eq:maxwell_2d_hy}.

To quantify the impact of architectural and training choices, an ablation study is conducted by selectively removing Random Fourier Features (RFF), strict spatio-temporal periodicity, and temporal causality enforcement. The relative $L_2$ error with respect to a high-resolution Padé reference solution is reported in Table~\ref{tab:maxwell_ablation}. The full configuration consistently yields the lowest error across network sizes, while removing individual components leads to systematic degradation in accuracy. In particular, the absence of RFF results in a marked increase in error, highlighting their importance for representing oscillatory electromagnetic fields.

\begin{table}[h!]
\centering
\caption{Ablation study for the 2-D Gaussian pulse in a periodic box.}
\label{tab:maxwell_ablation}
\begin{tabular}{cccccc}
\hline
RFF & Periodicity $(x,y)$ & Periodicity $(t)$ & Causality & $L_2$ error (64) & $L_2$ error (128) \\
\hline
$\checkmark$ & $\checkmark$ & $\checkmark$ & $\checkmark$ & 0.08471 & 0.04218 \\
$\checkmark$ & $\checkmark$ & $\checkmark$ & $\times$     & 0.07853 & 0.04272 \\
$\checkmark$ & $\checkmark$ & $\times$     & $\times$     & 0.11841 & 0.04895 \\
$\times$     & $\checkmark$ & $\checkmark$ & $\checkmark$ & 0.19232 & 0.09558 \\
$\checkmark$ & $\times$     & $\times$     & $\times$     & 0.51355 & 0.49885 \\
\hline
\end{tabular}
\end{table}

\begin{figure}[t]
\centering
\includegraphics[width=\linewidth]{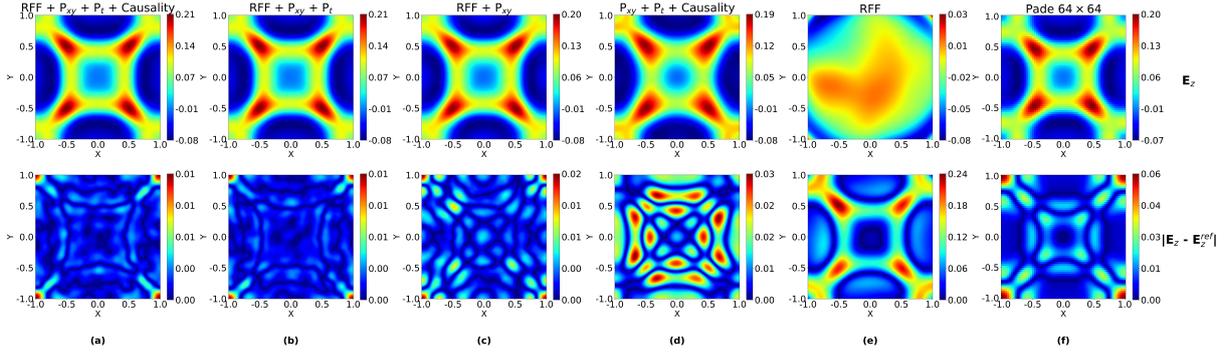}
\caption{Electric field $E_z$ (top row) and absolute error (bottom row) at $t=1.5$ for the 2-D Gaussian pulse in a periodic box, computed using different PINN configurations and a Padé reference solution.}
\label{fig:maxwell_pulse_box}
\end{figure}

Figure~\ref{fig:maxwell_pulse_box} shows the spatial distribution of the electric field $E_z$ at $t=1.5$ together with the corresponding absolute error for different PINN configurations, ordered from best to worst performance. The full model and the variant without causality enforcement produce visually indistinguishable field solutions with low and spatially uniform error. As additional components are removed, the solution quality deteriorates, with pronounced wave-front distortion and loss of symmetry. For reference, the error obtained using a low-resolution Padé scheme is also shown, demonstrating that poorly configured PINNs can perform comparably to under-resolved classical solvers.

This test case highlights the sensitivity of PINN training for unsteady Maxwell problems to architectural choices. While strict periodicity and temporal causality improve robustness, Random Fourier Features play a dominant role in accurately capturing the spatial structure of propagating electromagnetic waves.

\subsection{QPINNs - Quantum physics-informed neural networks}
\label{subsec:qpinns}

\subsubsection{Solving Maxwell's equations using QPINNs}
We assess how replacing a classical component of the PINN with a parametrized quantum circuit (PQC) affects training dynamics and solution quality for the unsteady two-dimensional Maxwell system introduced in Subsec.~\ref{sec:maxwell_pulse_box}. The classical PINN serves as the baseline.

The QPINN retains the standard physics-informed training procedure (PDE residual and initial condition losses), but replaces the penultimate classical layer with a PQC head. The classical trunk maps $(x,y,t)$ to a latent vector, which is embedded into the PQC via angle encoding. Expectation values of Pauli-$Z$ measurements are then mapped through a final linear layer to the outputs $(E_z, H_x, H_y)$.

A key quantum-specific design choice is the mapping of latent activations $a\in[-1,1]$ to rotation angles. Under single-qubit $R_x(\theta)$ embedding with Pauli-$Z$ measurement, the model output is trigonometric in $\theta$, and the scaling controls both frequency representation and measurement distribution.

We consider the following scalings:
\begin{equation}
\begin{aligned}
    scale_{none}(a) &= a, \quad 
    scale_{\pi}(a) = a\pi, \quad 
    scale_{bias}(a) = \tfrac{a+1}{2}\pi, \\
    scale_{asin}(a) &= \arcsin(a)+\tfrac{\pi}{2}, \quad 
    scale_{acos}(a) = \arccos(a)
\end{aligned}
\end{equation}
These mappings differ in range and in the distribution of angles over the Bloch sphere.

We evaluate the impact of PQC architecture by comparing several ansatz families, including mid-depth entangling layers (Basic and Strongly Entangling), cross-mesh constructions, and a no-entanglement baseline. All ans\"atze share a common structure: angle embedding, $L$ layers of single-qubit rotations, and a fixed entangling pattern, followed by Pauli-$Z$ measurements. Differences in performance, therefore, reflect the interaction structure induced by entanglement rather than changes in the surrounding classical network.

\paragraph{Training instability and energy regularization.}
A key distinction between QPINN and classical PINN behavior is the emergence of a failure mode in vacuum simulations. Many QPINN runs converge to a trivial solution that matches the initial condition but decays to near-zero fields for $t>0$. We refer to this behavior as a black-hole (BH) loss landscape.

To mitigate this effect, we introduce an additional penalty based on the Poynting theorem, enforcing global energy conservation. This term is critical for stable QPINN training. With it, collapse is avoided and physically meaningful solutions are recovered. Without it, convergence often fails despite low residual loss. In contrast, adding the same term is detrimental to the optimization of classical PINNs.

\paragraph{Ablation study.}
We perform a systematic ablation over three components: angle scaling, ansatz choice, and inclusion of the energy conservation term. Each configuration is evaluated across multiple runs, yielding a total of 300 experiments (Fig.~\ref{fig:qpinn_results}).

Three main observations emerge:
\begin{enumerate}
\item \textbf{Input scaling:} Poor scaling significantly degrades performance. For example, $scale_{\pi}$ yields over 60\% higher relative $L_2$ error compared to $scale_{asin}$, indicating that encoding geometry strongly influences model behavior.
\item \textbf{Ansatz choice:} Mid-depth entangling architectures provide the best accuracy, while no-entanglement and certain connectivity patterns underperform.
\item \textbf{Energy regularization:} With the energy constraint, several QPINN configurations outperform the classical baseline while using fewer parameters (approximately 19\% reduction). The best configuration improves the relative $L_2$ error by up to $\sim 19\%$.
\end{enumerate}

\begin{figure}
    \centering
    \includegraphics[width=\linewidth]{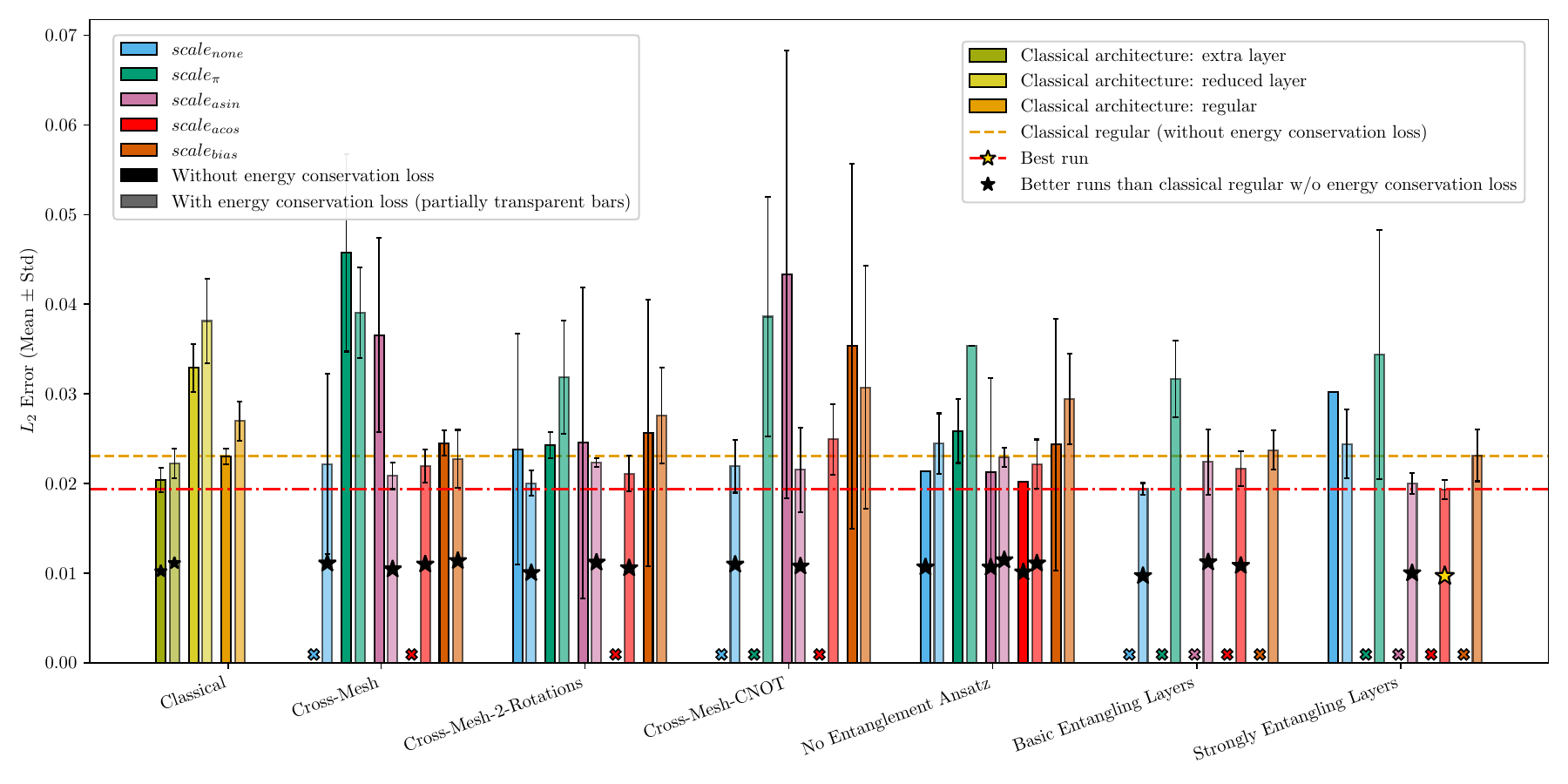}
    \caption{$L_2$ errors for all combinations of ansatz, angle scaling, and energy conservation loss.}
    \label{fig:qpinn_results}
\end{figure}

These observations suggest that QPINN performance on unsteady 2D Maxwell dynamics is shaped primarily by whether global conservation is enforced to mitigate BH. It is also affected by the angle-encoding scaling, which determines the effective harmonic basis observed in Pauli-$Z$ measurements, as well as by the ansatz-induced interaction structure. This empirical behavior comes with a substantial computational overhead. Each evaluation of the PDE residual requires repeated circuit executions due to parameter-shift differentiation, which quickly dominates the training cost. We formalize this scaling precisely in the next subsection.

\subsubsection{Circuit-evaluation complexity}
The computational burden observed in the previous experiments is driven by the cost of parameter-shift differentiation. We now formalize how this cost scales with the embedding dimension, PDE order, and number of trainable parameters.

\begin{theorem}[Parameter-shift circuit-evaluation complexity for QPINN PDE losses]
Fix $\mathbf{x}\in\mathbb{R}^d$ and consider a scalar quantum model output
\[
f(\mathbf{x};\theta)=\langle O\rangle_{U(\boldsymbol{\phi}(\mathbf{x}),\theta)},
\qquad \boldsymbol{\phi}(\mathbf{x})\in\mathbb{R}^{Q},\quad \theta\in\mathbb{R}^{P},
\]
where $\boldsymbol{\phi}$ is an angle-embedding map (one gate parameter per qubit) and $\theta$ are trainable ansatz
parameters.

Assume the following.
\begin{enumerate}
\item[(A1)] Shift rule: Each circuit parameter (each component of $\boldsymbol{\phi}$ and $\theta$) admits the
standard parameter-shift rule for first derivatives. Higher-order mixed partials are computed by iterated
parameter-shift, and circuit evaluations are not reused across distinct mixed partials.
\item[(A2)] PDE order: A PDE residual $\mathcal{R}(\mathbf{x};\theta)$ depends on coordinate derivatives of $f$
up to total order $K$.
\item[(A3)] Reduction to $\boldsymbol{\phi}$-partials: There exist finite index sets
$\mathcal{A}_k\subseteq\{\alpha\in\mathbb{N}_0^{Q}:\ |\alpha|=k\}$, $k=1,\dots,K$, such that for fixed $\mathbf{x}$
\[
\mathcal{R}(\mathbf{x};\theta)=\Psi\!\left(\mathbf{x},\theta,\{\partial_{\phi}^{\alpha} f(\mathbf{x};\theta)\}_{\alpha\in\mathcal{A}_1\cup\cdots\cup\mathcal{A}_K}\right),
\]
for some (classical) function $\Psi$. Define $S_k:=|\mathcal{A}_k|$.
\end{enumerate}
Then, the number of circuit evaluations required to compute the pointwise loss value
$L(\mathbf{x};\theta):=\ell(\mathcal{R}(\mathbf{x};\theta))$ is
\begin{equation}
N_{\mathrm{loss}}(\mathbf{x})
\;=\;
1+\sum_{k=1}^{K} S_k\,2^{k}.
\label{eq:Nloss}
\end{equation}
Moreover, if $\nabla_\theta L(\mathbf{x};\theta)$ is computed by parameter-shift by evaluating the full loss at
$\theta_p\pm s$ for each $p\in\{1,\dots,P\}$, then the number of circuit evaluations required for one update step at
$\mathbf{x}$ (loss plus all $P$ trainable gradients) is
\begin{equation}
N_{\mathrm{step}}(\mathbf{x})
\;=\;
(1+2P)\,N_{\mathrm{loss}}(\mathbf{x}).
\label{eq:Nstep}
\end{equation}
Finally, for each $k\le K$,
\begin{equation}
0\le S_k \le \binom{Q+k-1}{k}.
\label{eq:Sk_bound}
\end{equation}
\end{theorem}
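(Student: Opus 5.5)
The plan is a direct counting argument in three parts, following the three displayed claims. First, I fix the accounting convention implied by (A1): a single evaluation of $f$ at a given parameter vector costs one circuit run. For a first derivative, the parameter-shift rule gives $\partial_{\phi_j} f = c\,[f(\boldsymbol{\phi}+s e_j)-f(\boldsymbol{\phi}-s e_j)]$, so it costs $2$ evaluations. I then show by induction on $k$ that iterated parameter-shift for a mixed partial $\partial_\phi^\alpha f$ with $|\alpha|=k$ costs exactly $2^k$ evaluations. Write $\partial_\phi^\alpha = \partial_{\phi_j}\partial_\phi^{\alpha-e_j}$ for some $j$ with $\alpha_j\ge 1$. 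Applying the shift rule to the outer derivative turns $\partial_\phi^\alpha f$ into a linear combination of two copies of $\partial_\phi^{\alpha-e_j} f$, evaluated at $\boldsymbol{\phi}\pm s e_j$. Each copy costs $2^{k-1}$ by the induction hypothesis. Since (A1) forbids reuse, the total is $2\cdot 2^{k-1}=2^k$. Repeated indices, i.e.\ $\alpha_j\ge 2$, are handled the same way, because the shift rule applies to any circuit with the same gate structure.

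Second, for \eqref{eq:Nloss} I invoke (A3). Evaluating $\mathcal{R}$, and hence $L=\ell(\mathcal{R})$, requires the value $f$ itself plus the $\phi$-partials indexed by $\mathcal{A}_1\cup\dots\cup\mathcal{A}_K$; $\Psi$ and $\ell$ are classical and cost no circuit runs. The value $f$ costs one evaluation. The sets $\mathcal{A}_k$ are disjoint because they have different orders. Summing the per-partial costs and again using no reuse gives $1+\sum_k S_k 2^k$. I will note explicitly that the "$1$" is counted on the assumption that $\Psi$ may depend on $f$ itself; this is the reading the formula encodes. Assumption (A2) only guarantees that $K$ is the maximal order needed, so the sum terminates at $K$.

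Third, for \eqref{eq:Nstep}, the stated gradient procedure evaluates the full loss $L$ at $\theta\pm s e_p$ for each $p=1,\dots,P$, plus once at $\theta$ itself for the loss value. Each full-loss evaluation costs $N_{\mathrm{loss}}(\mathbf{x})$ by the previous step, since shifting $\theta_p$ does not change the set of required $\phi$-partials. That gives $(1+2P)N_{\mathrm{loss}}$. For \eqref{eq:Sk_bound}, nonnegativity is trivial. The upper bound follows because $\mathcal{A}_k$ is a subset of the multi-indices in $\mathbb{N}_0^Q$ of total degree $k$, and stars and bars counts these as $\binom{Q+k-1}{k}$.

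The main obstacle is not the computation but the precise interpretation of the cost model. I must make explicit that (A1)'s no-reuse clause makes the count exact, an equality rather than an upper bound: shifted evaluations arising from different partials, or from $\alpha$ with repeated indices, are counted separately even when they coincide. I must also be careful that the iterated shift rule remains valid after a previous shift, which holds because shifting a gate angle yields a circuit of the same form.
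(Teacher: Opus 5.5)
Your proposal is correct and follows the paper's proof essentially step for step: one evaluation for the forward value, $2^k$ evaluations per order-$k$ partial from iterated shifts (your induction makes precise the paper's remark that each shift-differentiation doubles the count), $(1+2P)$ full-loss evaluations for the update step, and the multiset count of multi-indices for the bound on $S_k$. Your side remarks are accurate clarifications of conventions the paper uses implicitly: the leading $1$ presumes the value $f$ itself is needed even though (A3) lists only partials of order $k\ge 1$, and exactness requires the no-reuse convention to cover coinciding shifted circuits within a single partial with repeated indices.
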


\begin{proof}
The forward value $f(\mathbf{x};\theta)$ requires one circuit evaluation, yielding the leading $1$ in
\eqref{eq:Nloss}. Fix $k\in\{1,\dots,K\}$ and $\alpha\in\mathcal{A}_k$ with $|\alpha|=k$. Under (A1),
each differentiation via parameter-shift doubles the number of required circuit evaluations; hence
$\partial_\phi^\alpha f(\mathbf{x};\theta)$ requires $2^k$ circuit evaluations. Since $L(\mathbf{x};\theta)$ depends on
the collection $\{\partial_\phi^\alpha f(\mathbf{x};\theta)\}_{\alpha\in\mathcal{A}_1\cup\cdots\cup\mathcal{A}_K}$
by (A3), the loss value is obtained with
$1+\sum_{k=1}^K |\mathcal{A}_k|\,2^k = 1+\sum_{k=1}^K S_k\,2^k$ circuit evaluations, proving \eqref{eq:Nloss}.
(Any coordinate derivatives required by the PDE are incorporated into $\Psi$ through classical chain-rule coefficients
involving derivatives of $\boldsymbol{\phi}(\mathbf{x})$, and thus do not introduce additional circuit evaluations.)

For the update step, parameter-shift computes each component $\partial L/\partial\theta_p$ from two loss evaluations at $\theta_p\pm s$, each costing $N_{\mathrm{loss}}(\mathbf{x})$ circuit evaluations. Summing over $p=1,\dots,P$ and including the unshifted loss yields \eqref{eq:Nstep}. The bound \eqref{eq:Sk_bound} follows because the number of distinct mixed partial derivatives of total order $k$ in $Q$ variables is the number of multi-indices $\alpha\in\mathbb{N}_0^Q$ with $|\alpha|=k$, which equals the multiset coefficient $\binom{Q+k-1}{k}$.
\end{proof}

\paragraph{Example (2D unsteady Maxwell, first-order PDE; $Q=7$, $P=84$).}
Revisiting the 2D unsteady Maxwell setup from Section~\ref{subsec:qpinns}, and using the same PQC architecture as in the considered ans\"atze, the residual involves only first-order coordinate derivatives (in $t,x,y$), hence
$K=1$. If the embedding angles $\{\phi_j(x,y,t)\}_{j=1}^Q$ depend on the coordinates, then all required coordinate
derivatives of $f$ can be expressed using the embedding-angle gradient $\{\partial f/\partial\phi_j\}_{j=1}^{Q}$ via
the chain rule. Consequently one may take $\mathcal{A}_1=\{e_1,\dots,e_Q\}$ (canonical basis multi-indices), so
$S_1=Q$ and \eqref{eq:Nloss} gives
\[
N_{\mathrm{loss}}=1+S_1\,2^1 = 1+2Q.
\]
With $Q=7$, $N_{\mathrm{loss}}=15$. Using \eqref{eq:Nstep} with $P=84$,
\[
N_{\mathrm{step}}=(1+2P)\,N_{\mathrm{loss}}=(1+168)\cdot 15 = 2535.
\]

This result highlights a fundamental limitation of QPINNs. The exponential dependence on derivative order and the linear dependence on the number of trainable parameters lead to a rapid increase in computational cost, even for moderately sized problems. In practice, this scaling restricts QPINNs to low-dimensional settings or requires careful architectural and training design.


\section{GPU acceleration}

This section reports QPINNACLE's performance on single- and multi-GPU systems. We quantify GPU versus CPU speedup, multi-GPU scaling behavior, VRAM usage, and the distribution of wall-clock time across major computational components. Results are shown for NVIDIA A100, A6000, and L40S GPUs.

\subsection{GPU vs. CPU speedup comparison}

All experiments show a clear reduction in wall-clock time when moving from CPU to GPU execution. GPU execution enables larger batch sizes and higher collocation point counts within fixed memory limits, which is not feasible on CPUs for the tested configurations. Detailed CPU-versus-GPU timing data are provided in the supplementary material.

\subsection{Multi-GPU performance}\label{subsec:gpu_perf}

QPINNACLE was evaluated on multi-GPU systems using the lid-driven cavity problem as a representative benchmark. Experiments were conducted on clusters comprising eight NVIDIA L40S GPUs, two A100 GPUs, and two A6000 GPUs. All multi-GPU runs employed Distributed Data Parallel (DDP) training.

\subsubsection{Wall-clock time and VRAM occupancy}

Figure~\ref{fig:gpu_scaling} reports the wall-clock time required for 100 training epochs and the per-GPU VRAM usage for a fixed problem size of 40{,}000 collocation points. Scaling from one to two GPUs yields near-linear speedup across all hardware configurations, with this behavior maintained up to four GPUs. Beyond this point, performance begins to saturate, and a slight increase in wall-clock time is observed on the L40S system. This degradation is attributed to the communication and synchronization overhead associated with gradient aggregation.

In contrast, memory usage scales favorably. The per-GPU VRAM footprint decreases as the number of devices increases, with diminishing returns beyond four GPUs. On the L40S system, scaling to eight GPUs reduces per-device memory usage to approximately one-eighth of the single-GPU requirement, thereby enabling substantially larger collocation sets before memory exhaustion.

\begin{figure}[t!]
    \centering
    \begin{subfigure}{0.49\linewidth}
        \centering
        \includegraphics[width=\linewidth]{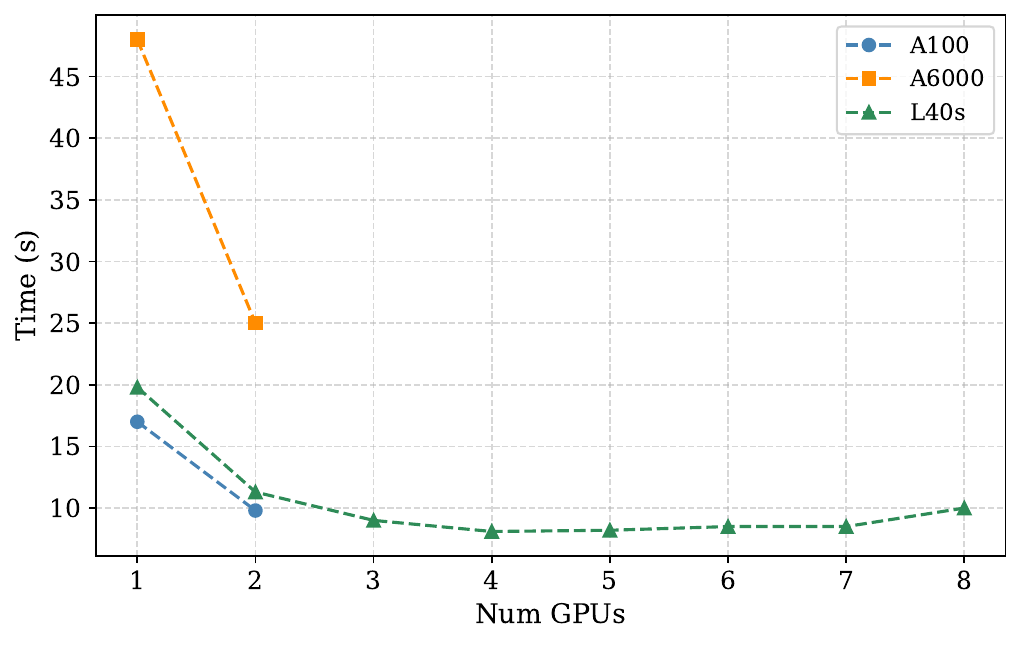}
        \caption{Wall-clock time per trial.}
    \end{subfigure}\hfill
    \begin{subfigure}{0.49\linewidth}
        \centering
        \includegraphics[width=\linewidth]{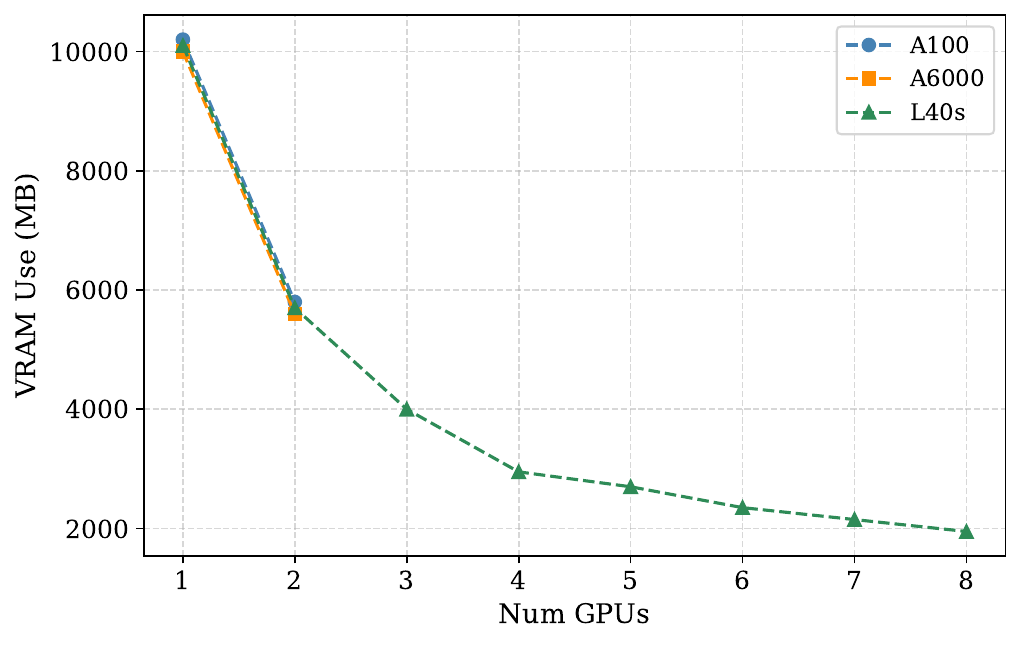}
        \caption{Per-GPU VRAM usage.}
    \end{subfigure}
    \caption{GPU scaling results for 100 epochs with 40{,}000 collocation points across A100, A6000, and L40s hardware.}
    \label{fig:gpu_scaling}
\end{figure}

\subsubsection{Memory capacity and runtime breakdown}

Figure~\ref{fig:capacity_profile} shows the maximum number of collocation points supported on the L40S cluster prior to memory exhaustion, together with a per-epoch runtime breakdown. The maximum achievable collocation count scales approximately linearly with the number of GPUs, enabling higher-resolution steady simulations and unsteady cases that exceed single-device memory limits.

The runtime profile was measured on the eight-GPU L40s system with 40{,}000 collocation points. Equation loss evaluation accounts for approximately 15\% of total runtime ($\sim$0.2 seconds per epoch), while boundary condition evaluation contributes around 6\% ($\sim$0.1 seconds). Inter-device synchronization represents roughly 2\% of the total runtime. Although modest in absolute terms, this overhead becomes a limiting factor for strong scaling, explaining the observed deviation from ideal speedup at higher GPU counts.

\begin{figure}[t!]
    \centering
    \includegraphics[width=\linewidth]{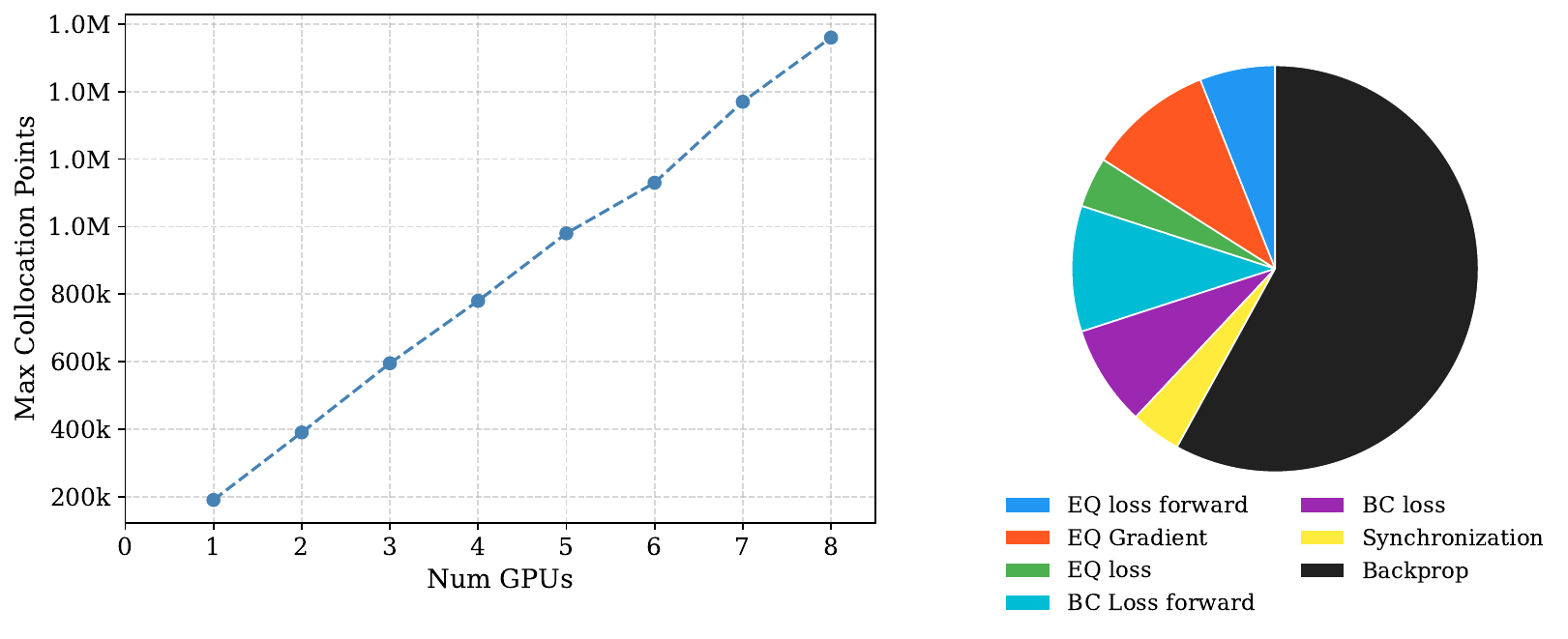}
    \caption{Memory capacity scaling and per-epoch runtime breakdown on the eight-GPU L40S system.}
    \label{fig:capacity_profile}
\end{figure}

\subsubsection{Limitations of Distributed Data Parallel}
DDP introduces two primary limitations in the present setting. First, communication overhead associated with gradient synchronization limits strong scaling efficiency as the number of GPUs increases, as reflected in Figure~\ref{fig:gpu_scaling}. While memory scaling remains favorable, runtime improvements diminish beyond moderate device counts. Second, DDP execution requires a multi-process launch configuration, which is not compatible with standard Jupyter notebook environments. Consequently, multi-GPU training must be executed using standalone Python scripts.

\textit{TorQ} evaluates parameterized quantum circuits (PQCs) as fully differentiable PyTorch programs on a single GPU, as shown in the flow chart in Figure~\ref{fig:torq_flowchart}.
For a batch of inputs $x \in \mathbb{R}^{B\times n}$, the simulator represents the quantum state as a complex batched
statevector $\psi \in \mathbb{C}^{B\times 2^n}$ and executes the circuit using dense linear algebra on CUDA.
Angle embedding is computed by generating per-qubit rotation vectors (for the default $\lvert 0\ldots0\rangle$
initialization) and forming their batched tensor product via a single \texttt{torch.einsum} contraction to obtain
$\psi(x)$ directly on GPU.
Each variational layer constructs a dense $2^n\times 2^n$ operator $U_\ell$ by (i) building a dense rotation wall using
Kronecker products of single-qubit gates (implemented via \texttt{einsum}) and (ii) composing it with a dense entangler
matrix $Ent$ (e.g., a precomputed CNOT ladder) using chained \texttt{torch.matmul}.
State evolution is then performed as one dense GEMM (General Matrix Multiplication) per layer, $\psi \leftarrow U_\ell\psi$.
Finally, analytic expectation values $\langle Z\rangle$ (Pauli-Z) are computed without sampling by reshaping probabilities and
contracting with per-qubit $Z$ axes using \texttt{einsum}; gradients are obtained by standard \texttt{torch.autograd}
backpropagation through the same CUDA tensor operations.

\textit{Comparison against PennyLane.}
To evaluate \textit{TorQ}, benchmark tests were conducted against several PennyLane's simulators. \textit{TorQ} was found to be over 50$\times$ faster than using PennyLane's \texttt{default.qubit} simulator with Torch integration (see Figure~\ref{fig:torq_benchmark}). Qiskit runs were slower than PennyLane's. \textit{TorQ} 
also had significantly better memory usage: the largest grid of collocation points that could be executed without memory overflow was $87^3$ on \textit{TorQ}, versus $43^3$ on PennyLane's \texttt{default.qubit}. 

\textit{Limitations and scaling.}
The current implementation explicitly materializes dense operators, so memory scales as
$\mathcal{O}(B\,2^n)$ for the batched statevector and $\mathcal{O}(4^n)$ for each dense $U_\ell$ (and similarly for
precomputed entanglers such as $Ent$).
Dense matrix multiplications and scales dominate runtime as $\mathcal{O}(B\,4^n\,L)$ for $L$ layers, which limits
practical simulations to moderate qubit counts and motivates choosing $n$ and $B$ to fit GPU memory.
\textit{TorQ} targets ideal statevector simulation with analytic (shot-free) expectation values; hardware noise models and shot-based sampling are not part of the current execution path.
Moreover, execution is single-GPU (no distributed multi-GPU parallelism, discussed in Subsec.~\ref{subsec:gpu_perf}), and backpropagation through long circuits can
increase activation/graph memory pressure, potentially requiring smaller batches or gradient checkpointing in future work.

\begin{figure}[t!]
\centering
\includegraphics[width=1.0\linewidth]{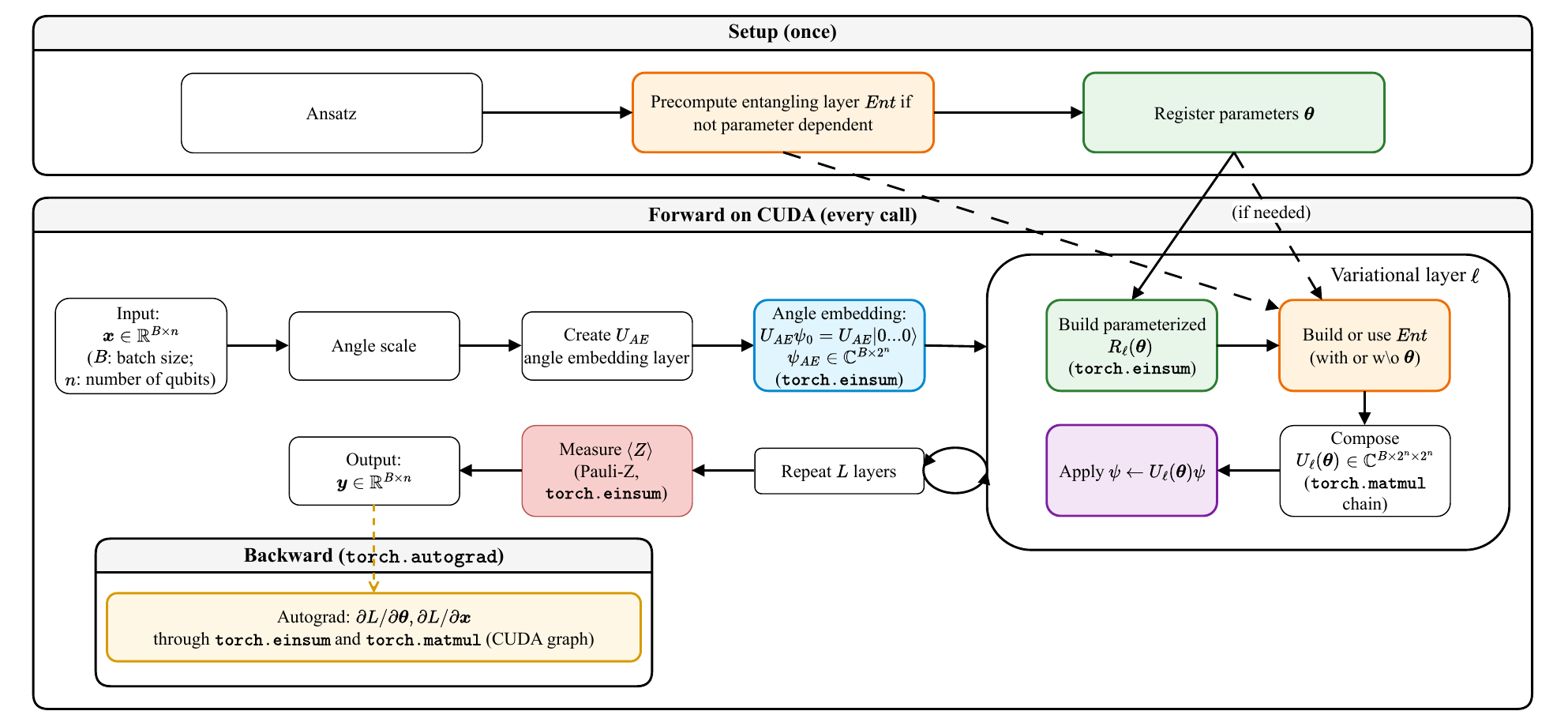}
\caption{\textit{TorQ} GPU execution pipeline.
Setup precomputes dense entanglers $Ent$ if they are not parameter-dependent, and registers trainable parameters $\theta$ on CUDA.
In the forward pass, inputs $x$ are angle-scaled and embedded by a batched Kronecker product implemented with \texttt{torch.einsum} to form the statevector $\psi \in \mathbb{C}^{B\times 2^n}$.
Each layer builds a dense rotation wall $R_\ell$ (via \texttt{einsum}), composes a dense operator $U_\ell$ with the precomputed entangler $Ent$ (via \texttt{torch.matmul}), and updates the state using a dense GEMM $\psi \leftarrow U_\ell\psi$.
Expectation values $\langle Z\rangle$ are computed analytically via \texttt{einsum}, and gradients propagate through the same CUDA computation graph via \texttt{torch.autograd}.}
\label{fig:torq_flowchart}
\end{figure}

\begin{figure}[t!]
\centering
\includegraphics[width=0.8\linewidth]{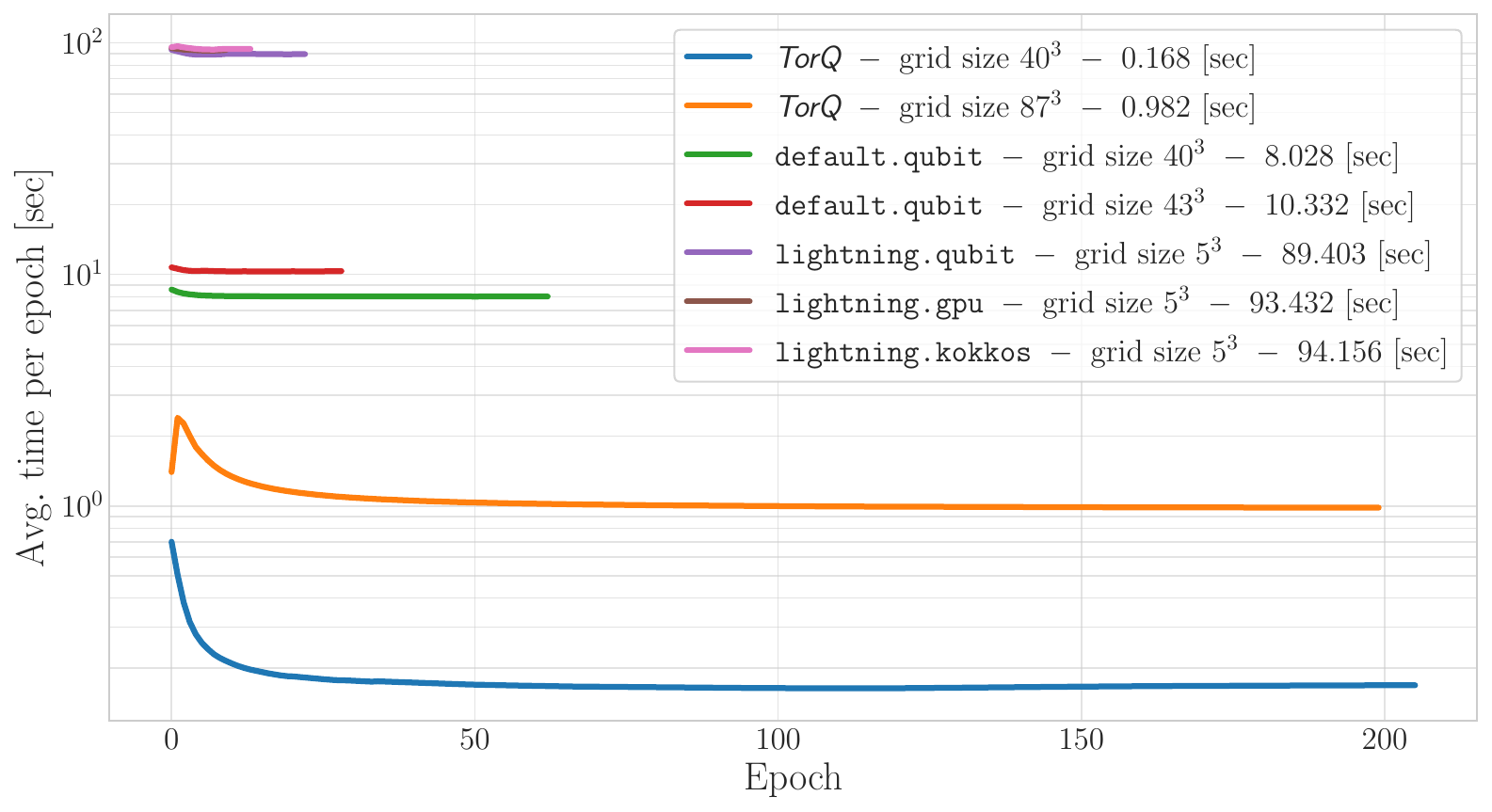}
\caption{Average running time per epoch comparison between \textit{TorQ} and several PennyLane simulators. The results show significant advantages for \textit{TorQ} across both runtime and memory usage. The value in the legend is the average epoch runtime at the end of the run.}
\label{fig:torq_benchmark}
\end{figure}

\section{Summary}

This work presented QPINNACLE, an open-source framework for physics-informed neural networks featuring convergence-enhancing techniques, quantum libraries, and multi-GPU execution. The benchmark suite spans linear advection, hyperbolic conservation laws, incompressible flows, and electromagnetic wave propagation, which can also be executed using a quantum simulator. The results demonstrate that successful training requires careful coordination of representation, constraint enforcement, loss weighting, and curriculum schedules. No single configuration was effective across all problems, and each benchmark exposed distinct numerical and optimization challenges.

A dominant limitation observed throughout the study is the high computational cost relative to classical solvers. Key findings from the benchmark results include:
\begin{itemize}
    \item The Fourier feature mappings and random-weight factorization techniques were crucial for capturing high-frequency components and improving training stability. They significantly enhanced the network's ability to model complex PDEs. 
    \item Dynamic loss balancing effectively managed the optimization landscape, preventing domination by specific loss components and ensuring balanced training.
    \item Enforcing boundary conditions directly within the network architecture proved essential, especially for problems with periodic conditions, simplifying the training process and improving computational efficiency.
    \item Curriculum training was vital for progressively exposing the network to increasingly challenging flow regimes, particularly in high-Re scenarios, thereby improving convergence and solution accuracy.
    \item The combination of ADAM and L-BFGS optimizers did not present any changes in simulations where the training error had already decreased and nearly converged.
\end{itemize}

Future work could focus on refining temporal causality techniques, exploring alternative collocation point generation methods, and extending the application of these advanced techniques to more complex and higher-dimensional PDEs. Despite encouraging results on tested low-dimensional benchmark problems, the observed computational cost and solution behavior indicate that PINNs remain fundamentally limited for problems with long time horizons and multiscale dynamics. For PINNs to serve as a general-purpose replacement for classical numerical methods, training cost would need to avoid exponential growth with problem complexity, convergence behavior would need to be reliable, and solution error would need to admit deterministic bounds comparable to those of established numerical solvers. 

Also, the QPINNs demonstrated several improvements, including higher accuracy, faster convergence, and fewer learnable parameters. On the other hand, there is still a major caveat: the number of circuit evaluations, especially for relevant cases, demands a prohibitive number of runs for foreseeable future quantum hardware and would require more elaborate methods of differentiation.

\subsection*{Funding}
ZC was supported by the Israel Science Foundation (ISF), Grants n. 939/23 and 2691/23, German-Israeli Project Cooperation (DIP) n. 2032991, Ollendorff-Minerva Center of the Technion n. 86160946, and the Helen Diller Quantum Center at the Technion. 2033613. 

\subsection*{Competing interests}
The authors declare that they have no known competing financial interests
or personal relationships that could have appeared to influence the work
reported in this paper.

\subsection*{Availability of data and material}
The benchmark problem definitions and the numerical results (training
histories, error metrics, runtime measurements) reported in this manuscript
are available within the QPINNACLE repository
(see \emph{Code availability} below). No proprietary or restricted datasets
were used in this study.

\subsection*{Code availability}
The complete source code of the QPINNACLE framework, including the
Jupyter notebooks, training scripts, and configuration files required to
reproduce all results in this manuscript, is openly available at
\texttt{https://github.com/zivchen9993/QPINNACLE.git} under the MIT open-source license.

\subsection*{Authors' contributions}
\textbf{Z.\ Chen}, \textbf{H.\ Chandravamsi}, and \textbf{S.\ Pisnoy} 
contributed equally to this work.
\textbf{S.\ Pisnoy} and \textbf{H.\ Chandravamsi} designed the classical
PINN framework, implemented the training pipeline and benchmark cases,
and performed the multi-GPU scaling experiments.
\textbf{Z.\ Chen} designed and implemented the hybrid quantum-classical
architecture and derived the parameter-shift complexity analysis.
\textbf{A.\ Goldgewert} contributed to
multi-GPU parallelization, high-performance computing and code review.
\textbf{G.\ Shaviner} and \textbf{B.\ Shrager} contributed to
additional bench marking the code and code review.
\textbf{S.\ H.\ Frankel} supervised the project, contributed to the conceptualization and methodology, and acquired funding. All authors discussed the results, contributed to writing the manuscript, and approved the final version.

\section*{Github repositories}
The full code repository for this work will be released upon publication. The GitHub repositories for the quantum components are listed below.
\begin{itemize}
    \item Quantum and classical solvers (toggle between quantum and classical simulations): 
    \url{https://github.com/zivchen9993/QPINNACLE.git}.
    \item The \textit{TorQ - Tensor Operations for Research of Quantum systems} library used to simulate the quantum circuits and integration to the classical parts of the network is available as a git repository at \url{https://github.com/zivchen9993/TorQ.git}, and can be installed with \textsc{pip}: \texttt{pip install torq-quantum}.
    \item The benchmarks against PennyLane were obtained using an extension to the said library, \textit{TorQ-bench}, and can be found in the git repository at \url{https://github.com/zivchen9993/TorQ-bench.git}, and can be installed with \textsc{pip}: \texttt{pip install torq-bench}.
\end{itemize}

\begin{appendix}
    \section{Code snippets}
    \subsection{Multi layer perceptron} \label{MLP_basic}
    The following class defines a simple multi-layer perceptron (MLP) with two inputs, four hidden layers (each containing 256 units), and one output feature. The network uses the hyperbolic tangent (tanh) activation function. The network can be instantiated by executing \texttt{model = simpleMLP()}; this initializes the network with random weights. To perform a forward pass through the network, you can call \texttt{model.forward(x, t)} or, more simply, \texttt{model(x, t)}.
\begin{lstlisting}[language=Python]
class simpleMLP(nn.Module):
    def __init__(self, in_dim=2, HL_dim=256, out_dim=1, activation=nn.Tanh()):
        super().__init__()
        self.u = nn.Sequential( nn.Linear(in_dim, HL_dim), activation,
                                nn.Linear(HL_dim, HL_dim), activation,
                                nn.Linear(HL_dim, HL_dim), activation,
                                nn.Linear(HL_dim, HL_dim), activation,
                                nn.Linear(HL_dim, out_dim)) 
                                
    def forward(self, x, t):
        return self.u.(torch.cat((x, t), 1))   \end{lstlisting}


    \subsection{Random Fourier features}
    Random Fourier Features (RFF), as described in section \ref{sec:RFF}, help mitigate the neural network’s spectral bias towards low-frequency features of the solution. The \texttt{RFF} class shown below takes four inputs: \texttt{in\_dim} (input layer size), \texttt{out\_dim} (size of the first hidden layer), \texttt{mean\_RFF} (mean of the Gaussian distribution), and \texttt{std\_RFF} (standard deviation of the Gaussian). The inputs are mapped to a higher-dimensional space, with the output being a summation of Fourier modes determined by a set of random values (\texttt{B}) initialized through a Gaussian distribution.
\begin{lstlisting}[language=Python]
class RFF(nn.Module):
    def __init__(self, in_dim, out_dim, mean_RFF, std_RFF):
        super().__init__()
        self.B = nn.Parameter(torch.normal(mean=mean_RFF, std=std_RFF, size=(in_dim, out_dim), requires_grad=False))
    
    def forward(self, x):
        x_map = torch.matmul(x, self.B)
        return torch.cat([torch.cos(x_map), torch.sin(x_map)], -1) \end{lstlisting}

    The code above for \texttt{RFF} class can be easily integrated into a standard MLP model by adding \texttt{RFF()} as the first element in the \texttt{nn.Sequential} within the model constructor. After that, the first linear layer following the RFF layer should be modified to \texttt{nn.Linear(2*out\_dim, out\_dim)} to account for the expanded feature dimensions from the RFF mapping.

    \subsection{Random weight factorization}
    Instead of relying on the random weights initialized by a default MLP, such as the one shown in Sec. \ref{MLP_basic}, a strategic way to improve performance is by initializing the weights through the Random Weight Factorization (RWF) method described in Sec. \ref{sec:RWF}. The weights corresponding to the RWF initialization can be computed using the following class:
\begin{lstlisting}[language=Python]
class RWF(nn.Module):
    def __init__(self, in_dim, out_dim, mean_RWF, std_RWF):
        super().__init__()
        self.V = nn.Parameter(torch.Tensor(out_dim, in_dim))
        self.s = nn.Parameter(torch.Tensor(out_dim, 1))
        
        init.xavier_normal_(self.V)
        init.normal_(self.s, mean=mean_RWF, std=std_RWF)
    
    def forward(self, x):
        W =torch.matmul(torch.diag_embed(torch.exp(self.s.squeeze(-1))), self.V)
        return F.linear(x, W)    \end{lstlisting}

The above defined \texttt{RWF} class can be used within a neural network class very easily, by simply replacing the \texttt{nn.Linear(dim\_in, dim\_out)} definitions with \texttt{RWF(dim\_in, dim\_out, mean\_RWF, std\_RWF)}.

    \subsection{Strict periodic boundary conditions}
    Strict boundary conditions eliminate the need to define boundary-condition losses, thereby reducing the computational burden during PINN training. The following network class demonstrates a \texttt{forward} function that incorporates a periodic boundary condition along the \(x\)-direction. This condition ensures periodicity with a length equal to \texttt{xf - xi}, where \texttt{xf} and \texttt{xi} represent the final and initial \(x\)-coordinates of the 1D periodic domain, respectively.
\begin{lstlisting}[language=Python]
class MLP_with_periodicBC(nn.Module):
    def __init__(self, in_dim=2, HL_dim=256, out_dim=1, activation=nn.Tanh()):
        super().__init__()
        self.u = nn.Sequential(nn.Linear(2*HL_dim, HL_dim), activation,
                               nn.Linear(HL_dim,   HL_dim), activation,
                               nn.Linear(HL_dim,   HL_dim), activation,
                               nn.Linear(HL_dim,  out_dim))
        
    def forward(self, x, t, config):
        # for periodic BC:
        x_sin = torch.sin(2*torch.pi/(config.xf-config.xi) * x)
        x_cos = torch.cos(2*torch.pi/(config.xf-config.xi) * x)
        return self.u.(torch.cat((x_sin, x_cos, t), 1))    \end{lstlisting}
    
    \subsection{Periodic activation function}
    
    The following class defines a periodic activation function which takes input \texttt{x} and transforms to \texttt{sin(w0*x)}. Where \texttt{w0} is a hyperparameter that controls the degree of high-frequency details to be resolved in the solution. The activation function can be used by simply specifying \texttt{SineActivation(w0)} in the place of \texttt{nn.tanh()}. 
    
\begin{lstlisting}[language=Python]
class SineActivation(nn.Module):
    # periodic activation function: sin(w0*x)
    def __init__(self, w0):
        super().__init__()
        self.w0 = w0

    def forward(self,x):
        return torch.sin(self.w0*x)    \end{lstlisting}
    
    \subsection{Coefficient computation function for loss balancing}
    The following Python function shows how to update the coefficients \(\lambda_{pde}\), \(\lambda_{ic}\), and \(\lambda_{bc}\) based on the network's current loss values, following the procedure outlined in Sec. \ref{sec:loss_balancing}.
    
\begin{lstlisting}[language=Python]
def update_global_weights(model, eq_loss, ic_loss, bc_loss, lambda_eq, lambda_ic, lambda_bc, epoch):
    losses = [eq_loss, ic_loss, bc_loss]
    grads_concat = []

    # ******** compute grads of loss w.r.t. model params ********
    for loss in losses:
        grads = grad(loss, model.parameters(), create_graph=True)                           # compute gradients loss w.r.t. model params
        concatenated_grads = torch.cat([g.view(-1) for g in grads if g is not None]) # concat all d(loss)/d(theta)
        grads_concat.append(concatenated_grads)

    # ******** compute L2 norms of grads and compute weights ********
    norms = torch.tensor([torch.linalg.norm(g.detach(), ord=2) for g in grads_concat])  # L2 norm of gradients
    sum_norms = torch.sum(norms)   # sum of L2 norms
    weights_updated = sum_norms/(norms + 1e-9)
    weights_updated = weights_updated.tolist()

    # ******** update weights ********
    if epoch % 1000 == 0 and epoch > 0:
        lambda_eq = 0.9*lambda_eq + 0.1*weights_updated[0]
        lambda_ic = 0.9*lambda_ic + 0.1*weights_updated[1]
        lambda_bc = 0.9*lambda_bc + 0.1*weights_updated[2]
        
    return lambda_eq, lambda_ic, lambda_bc      \end{lstlisting}

    \subsection{\textit{TorQ} library}

    In this subsection, selected excerpts of the implementation are provided to illustrate the core components of the \textit{TorQ} library. The focus is on the elements that define the user interface and the construction of variational quantum circuits, while auxiliary details are omitted for clarity.
    
    \subsubsection{QLayer}\label{subsubsec:qlayer}
    The quantum neural network, \textit{Qlayer}, used in \textit{TorQ} serves as the user's connection node to the \textit{TorQ} library and defines its API. Parts of the class that weren't discussed in this work were removed from the code snippet for better readability; the full class appears in the \textit{TorQ} repository~\cite{torq}.

\begin{lstlisting}[language=Python]
class QLayer(nn.Module):
    def __init__(self, n_qubits=3, n_layers=1, ansatz_name="strongly_entangling", config=None,
                 basis_angle_embedding='X'):
        super().__init__()
        self.n_qubits = n_qubits
        self.n_layers = n_layers
        self.config = config
        self.ansatz_name = ansatz_name
        self.basis_angle_embedding = basis_angle_embedding

        self.angle_scaling_method = getattr(self.config, 'angle_scaling_method', 'none')
        self.angle_scaling = getattr(self.config, 'angle_scaling', 1.0)

        # === ansatz selection + parameter tensor shape ===
        # build ansatz object (holds any precomputes)
        # NOTE: device of params is not known yet; we'll move precomputes lazily on first forward if needed
        self.ansatz = make_ansatz((ansatz_name, n_qubits, n_layers, device=None)

        # sigma_Z observable
        self.observable = tq.sigma_Z_like(x=self.params)  # keeps dtype/device consistent via likeable
        self._optional_backend = maybe_create_pennylane_backend(self)  # for benchmarking and testing against pennylane; if it is None, the regular forward will be used. For benchmarking, the TorQ-bench library should be used.

    def forward(self, x):
        if not torch.isfinite(self.params).all():
            raise ValueError(f"QLayer.params has NaN: {self.params}")
        if self._optional_backend is not None:
            return self._optional_backend.forward(x)
        state = tq.angle_embedding(
            x,
            angle_scaling_method=self.angle_scaling_method,
            angle_scaling=self.angle_scaling,
            basis=self.basis_angle_embedding,
        ).squeeze(-1)  # [B,2**n]

        angles_reparametrize = None
        if self.reparametrize_sin_cos:
            angles_reparametrize = torch.atan2(torch.sin(self.params), torch.cos(self.params))

        for layer in range(self.n_layers):
            if self.reparametrize_sin_cos:
                w = angles_reparametrize[layer]
            else:
                w = self.params[layer]
            U = self.ansatz.layer_op(layer, w)  # [2**n, 2**n]
            state = tq.apply_matrix(state, U)

        return tq.measure(state, self.observable)
\end{lstlisting}

    \subsubsection{Ansatz building example}\label{subsubsec:ansatz}
    The ansatz building procedure is demonstrated by the Strongly Entangling ansatz.

\begin{lstlisting}[language=Python]
class StronglyEntangling(BaseAnsatz):
    param_shape = ("per_layer", (None, 3))

    def __init__(self, n_qubits, n_layers, device=None):
        super().__init__(n_qubits, n_layers, device)
        dummy = torch.empty(1, device=device)
        # one ladder per "r" value
        self.cnots = [
            aq.get_cnot_ladder(n_qubits, r=r, x=dummy)
            for r in range(n_layers)
        ]

    def layer_op(self, layer_idx, weights):
        return aq.basic_or_strongly_single_layer(self.n, weights, self.cnots[layer_idx]).to(weights.device)
\end{lstlisting}

    \section{A short tutorial on implementing the Distributed Data Parallel approach for multi-GPU PINNs}
    Distributed Data Parallel (DDP) works by distributing the forward and backward pass computations across multiple GPUs. Each GPU maintains an independent copy of the model in its own memory, isolated from other GPUs. However, the model parameters are to be kept constant across the GPUs at all instances. This is achieved through \textit{gradient synchronization}, which will be discussed soon. 
    
    To fully leverage Distributed Data Parallel (DDP), the dataset (such as spatio-temporal collocation points or network inputs) is partitioned into smaller chunks (mini-batches), allowing each GPU to process its portion independently. This approach reduces net computational time by performing forward and backward passes on the local mini-batch, and it scales the effective dataset size linearly with the number of GPUs. DDP ensures model parameters stay synchronized across all GPUs throughout training by following a defined communication protocol. A schematic of the DDP approach and the various steps 1-7 are shown in figure \ref{fig:ddp}. 

    \begin{figure}[h!]
        \centering
        \includegraphics[width=\linewidth]{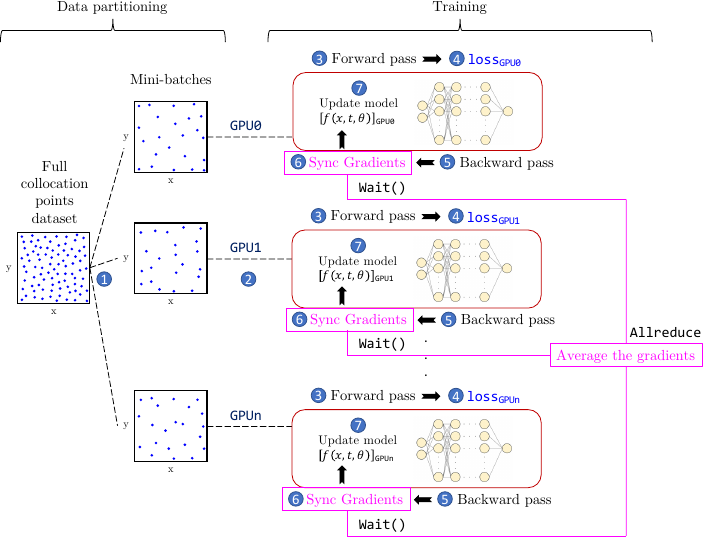}
        \caption{Overview of the Distributed Data Parallel (DDP) approach. This schematic illustrates how model parameters and gradients are synchronized across multiple GPUs to achieve parallel training. Each GPU trains a copy of the model on its local collocation points, and gradients are aggregated and averaged across nodes to update the model parameters.}
        \label{fig:ddp}
    \end{figure}
    
    A summary of key steps as depicted in Figure \ref{fig:ddp} can be enumerated as follows:

    \begin{enumerate}
        \item \textbf{Data Partitioning:} Split the dataset across multiple GPUs. Each GPU processes a different subset of the data (mini-batch).
        
        \item \textbf{Model Initialization:} Clone the model onto each GPU. All GPUs start with identical models.
        
        \item \textbf{Forward Pass (on each GPU):} Each GPU processes its own mini-batch and computes the output from its local model.
        
        \item \textbf{Compute Loss (on each GPU):} Calculate the model loss based on the local mini-batch for each GPU.
        
        \item \textbf{Backward Pass (Gradient Computation):} Each GPU calculates gradients for its mini-batch using backpropagation.
        
        \item \textbf{Gradient Synchronization:} All GPUs communicate and aggregate their gradients by averaging them across GPUs.
        
        \item \textbf{Model Update:} Each GPU updates its local model parameters based on the synchronized gradients.
        
        \item \textbf{Repeat:} Repeat steps 3-7 for all mini-batches across all GPUs until training is complete.
    \end{enumerate}

    Figure \ref{fig:ddp} illustrates the same procedure.

    \subsection{\texorpdfstring{From Single-GPU $\rightarrow$ Multi-GPU DDP PINNs}{From Single-GPU -> Multi-GPU DDP PINNs}}
    Fundamentally, implementing DDP is simple and requires only minor modifications to a single GPU code. One thing to keep in mind is that DDP doesn’t work with Jupyter notebooks (".ipynb" files aren’t supported). For a full multi-GPU example, check out the lid-driven cavity case in Module 6 of the QPINNACLE repository. The extra libraries to be imported to work with DDP are: 

    \begin{lstlisting}[language=Python]
import torch.distributed as dist
import torch.multiprocessing as mp
from torch.utils.data.distributed import DistributedSampler
from torch.nn.parallel import DistributedDataParallel as DDP
from torch.distributed import init_process_group, destroy_process_group \end{lstlisting}
    
    Starting with the single-GPU code available in Module 6 of QPINNACLE, \texttt{ldc\_singleGPU.py}, the following is a step-by-step procedure for adapting it to use DDP.

    \subsubsection*{Step 1/4: Preparing the mini-batches}
    The collocation points ($\mathbf{x},t$) are sliced into mini-batches of roughly equal size based on the number of GPUs available, i.e., \texttt{world\_size}. This is done as follows using a few lines of code: 
    
    \begin{lstlisting}[language=Python]
# obtain the full set of collocation points
x_points, y_points, t_points, x_bc, y_bc = Generate_Collocation_Points()

# obtain the size of data set
num_colloc = x_points.view(-1,1).size(0)                        

# determine mini-batch length of interior collocation points
slice_size = config.num_colloc//world_size

# starting index of the sliced mini-batch on the corresponding rank
slice_start = rank*slice_size 

# send to trainer a clone of initialized model and corresponding inputs
trainer = Trainer(model,x_points[slice_start:slice_start+slice_size],
                        y_points[slice_start:slice_start+slice_size],
                        t_points[slice_start:slice_start+slice_size],
                        x_bc,y_bc, world_size, optimizer, scheduler, rank)  \end{lstlisting}
    
    The first step is to determine the size of each mini-batch by dividing the total number of collocation points (\texttt{num\_colloc}) by the total number of GPUs (\texttt{world\_size}). Next, to find the starting index of the mini-batch corresponding to the local rank, multiply the GPU ID (rank) by the mini-batch size. To then provide this chunk to the Trainer class (which will be discussed in the next step), the points, which are essentially a list, are sliced from \texttt{slice\_start} to \texttt{slice\_start + slice\_size}.
    
    \textit{Note:} For better training robustness, the collocation points corresponding to the domain boundaries (\texttt{x\_bc}, \texttt{y\_bc}) are not divided into multiple batches. The same (\texttt{x\_bc}, \texttt{y\_bc}) dataset is used on all GPUs during training. Since the boundary collocation points are often significantly fewer than the interior collocation points, this does not create a significant computational overhead.
    

    \subsubsection*{Step 2/4: Creating the trainer class}
    We need to add a new function called \texttt{ddp\_setup}, to initialize the Distributed Data Parallel (DDP) environment, set up the process group, and specify the GPU to be used for each process. The function is a standard and can be written as follows:
    
    \begin{lstlisting}[language=Python, label={lst:ddp_setup}]
def ddp_setup(rank, world_size):
    """
    Args:
        rank: Unique identifier of each process
        world_size: Total number of processes
    """
    # Set environment variables for DDP communication
    os.environ["MASTER_ADDR"] = "localhost"
    os.environ["MASTER_PORT"] = "12355"
    os.environ["NCCL_TIMEOUT"] = "300"

    # Initialize the process group for DDP
    init_process_group(backend="nccl", rank=rank, world_size=world_size)

    # Set the device to be used by the current process
    torch.cuda.set_device(rank) \end{lstlisting}
    
    This function takes two arguments: 1) \texttt{rank} - the current GPU, and 2) \texttt{world\_size} - the total number of GPUs in the server. Next, a new class called \texttt{Trainer} is created to wrap the training loop into a single callable unit, like such: 
    
    \begin{lstlisting}[language=Python]
class Trainer:
    def __init__(self, model: torch.nn.Module, x, y, x_bc, y_bc, t, 
                 world_size, optimizer: torch.optim.Optimizer, 
                 scheduler, gpu_id: int) -> None:
        self.gpu_id         = gpu_id
        self.model          = model.to(gpu_id)
        self.x              = x
        self.y              = y
        self.t              = t
        self.x_bc           = x_bc
        self.y_bc           = y_bc
        self.scheduler      = scheduler
        self.optimizer      = optimizer
        self.save_every     = save_every
        self.optim_lbfgs    = optim_lbfgs
        self.model          = DDP(model, device_ids=[gpu_id])
        
    def train(self, max_epochs:int): # Training loop
        dist.barrier()
        for epoch in range(max_epochs):
            # training goes here
            dist.barrier()
        dist.barrier() \end{lstlisting}
    
    The training loop from the single-GPU code is moved into this class at \texttt{trainer.train()}. The parameters that are passed into the trainer class are: 
    \begin{enumerate}
        \item the model itself,
        \item the training data, which has already been broken up as described in step 1/4,
        \item the world size, which is just how many GPUs are in the server,
        \item the optimizer,
        \item the scheduler (if one is in use),
        \item the \texttt{gpu\_id}  (\texttt{= rank}), which is an identifier, the class can use to know on which GPU it is running. 
    \end{enumerate}
    
    In the \texttt{\_\_init\_\_()} constructor above, the model is wrapped with DDP on line 16 by creating a DDP object that accepts the original PyTorch model and the GPU ID. This allows the model to communicate with other copies of itself on other GPUs. Additionally, in the \texttt{train} function, the \texttt{dist.barrier()} command is called before, after, and during the training loop to force the GPUs to wait for each other.
    
Within the training loop, two modifications are needed to ensure that all data is pushed to the correct GPU.
\begin{enumerate}
    \item \texttt{.cuda() / .to(device)}
    \begin{enumerate}
        \item In a normal single-GPU PyTorch setup, to push a variable to the GPU, all that needs to be done is to call \texttt{.cuda()}:
            \begin{lstlisting}[language=Python]
x.cuda() \end{lstlisting}
        However, this does not work in a DDP setup because there are multiple CUDA devices. Therefore, a small change must be made:
        \begin{lstlisting}[language=Python]
x.to(self.gpu_id) \end{lstlisting}
        This pushes the variable in question to the current GPU instead of the first available GPU. This change must be applied at every point where the program uses the \texttt{.cuda()} or \texttt{.to(device)} command.
    \end{enumerate}

    \item Accessing model functions
    \begin{enumerate}
        \item Generally, accessing functions within the model is just like accessing functions of any object in Python:
        \begin{lstlisting}[language=Python]
model.compute_loss(x, y) \end{lstlisting}
        \item However, since the model has been wrapped by DDP, accessing functions and parameters this way no longer works. Instead, use:
        \begin{lstlisting}[language=Python]
model.module.compute_loss(x, y) \end{lstlisting}
    \end{enumerate}
\end{enumerate}

    \subsubsection*{Step 3/4: Creating the main function and argument parser}

    Two functions must be created to run the code: the main function and the argument parser. \textit{\textit{Main Function:}} This function contains the entire computation sequence necessary for training the PINNs network. By calling this function, all the steps are executed in sequence to train the network. It begins with preprocessing tasks, including generating and slicing the training points, and then uses the \texttt{Trainer} class to carry out the training process. Finally, the function instantiates the trained network and plots the results. The function follows this general structure:
    \begin{lstlisting}[language=Python]
def main(rank, world_size, total_epochs):
    ddp_setup(rank, world_size)     # initializes DDP
    torch.cuda.set_device(rank)     # tells DDP which GPU is the current GPU
    torch.cuda.empty_cache()        # clears memory

    # preprocessing steps
    x_points, y_points, t_points, x_bc, y_bc = Generate_Collocation_Points()
    # <slice data here - as explained in step 1/4>
    model = QPINNACLE()
    optimizer = torch.optim.Adam(model.parameters(), lr=0.001)
    scheduler = ExponentialLR(optimizer, gamma=config.lr_decay)

    # initialize the trainer
    trainer = Trainer(model, x_sliced, y_sliced, x_bc, y_bc, 
                      world_size, optimizer,scheduler, rank)

    # train the model 
    trainer.train(total_epochs) 

    # <instance the trained network>
    # <plot the results>  \end{lstlisting}
    
    \textit{\textit{Argument parser:}} The other piece of code that must be written is a function that tells Python to run the main function. Here is its form: \begin{lstlisting}[language=Python]
if __name__ == "__main__":
    import argparse
    parser = argparse.ArgumentParser(description='simple distributed training job')
    parser.add_argument('total_epochs', type=int, help='Total epochs to train the model') #Controls the number of epochs
    args = parser.parse_args()
    
    world_size = torch.cuda.device_count()
    mp.spawn(main, args=(world_size, args.total_epochs), nprocs=world_size) \end{lstlisting}
    
    This condition contains a few elements. Mainly, it adds an argument for the number of epochs for training, counts the number of GPUs in the system, and pushes those numbers to the main function using \texttt{mp.spawn}, PyTorch's method for creating multiple processes.

    \subsubsection*{Step 4/4: The training loop}
    One advantage of using DDP is that the training function remains mostly the same, with a few key differences to maintain data synchronization across GPUs. First, the \texttt{dist.barrier()} command. This command forces the GPUs to update in the proper order. It is placed at the beginning and end of the training loop, as such: 
    
    \begin{lstlisting}[language=Python]
def train(self, max_epochs:int): #Training loop
    dist.barrier()
    for epoch in range(max_epochs):
        #training goes here
        dist.barrier()
    dist.barrier() \end{lstlisting}
    
    Second, \texttt{all\_reduce}. This command forces data synchronization across all GPUs in the system and reduces the inability to converge due to incorrect gradient values. Changing the code is very simple: 
    
    \begin{lstlisting}[language=Python]
reduced_loss = loss.clone()
dist.all_reduce(reduced_loss, op=dist.ReduceOp.SUM)
reduced_loss /= self.world_size \end{lstlisting}
    
    Then, perform back-propagation on the reduced values: 
    
    \begin{lstlisting}[language=Python]
self.optimizer.zero_grad()
reduced_loss.backward()
L_eq[0], L_ic[0], L_total[0] = reduced_eq_loss.item(), reduced_ic_loss.item(), reduced_loss.item() \end{lstlisting}
    
    One thing to ensure is that each value is pushed to the correct GPU. This can be accomplished simply by passing the GPU ID as an argument to each loss function.

\end{appendix}

\bibliographystyle{unsrt}
\bibliography{ML_bibilography}

@article{sitzmann2020implicit,
  title={Implicit neural representations with periodic activation functions},
  author={Sitzmann, Vincent and Martel, Julien and Bergman, Alexander and Lindell, David and Wetzstein, Gordon},
  journal={Advances in neural information processing systems},
  volume={33},
  pages={7462--7473},
  year={2020}
}

@article{karniadakis2021piml,
title        = {Physics-informed machine learning},
author       = {Karniadakis, George Em and Kevrekidis, Ioannis G and Lu, Lu and Perdikaris, Paris and Wang, Sifan and Yang, Liu},
journal      = {Nature Reviews Physics},
volume       = {3},
pages        = {422--440},
year         = {2021},
doi          = {10.1038/s42254-021-00314-5}
}

@article{lu2021deepxde,
title        = {DeepXDE: A deep learning library for solving differential equations},
author       = {Lu, Lu and Meng, Xuhui and Mao, Zhiping and Karniadakis, George Em},
journal      = {SIAM Review},
volume       = {63},
number       = {1},
pages        = {208--228},
year         = {2021},
doi          = {10.1137/19M1274067}
}

@article{wang2022ntk,
title        = {When and why {PINNs} fail to train: A neural tangent kernel perspective},
author       = {Wang, Sifan and Yu, Xinling and Perdikaris, Paris},
journal      = {Journal of Computational Physics},
volume       = {449},
pages        = {110768},
year         = {2022},
doi          = {10.1016/j.jcp.2021.110768}
}

@article{wang2024causal,
title        = {Respecting causality for training physics-informed neural networks},
author       = {Wang, Sifan and Sankaran, Shyam and Perdikaris, Paris},
journal      = {Computer Methods in Applied Mechanics and Engineering},
volume       = {421},
pages        = {116813},
year         = {2024},
doi          = {10.1016/j.cma.2024.116813}
}

@misc{wang2023expertsguide,
title        = {An Expert's Guide to Training Physics-informed Neural Networks},
author       = {Wang, Sifan and Sankaran, Shyam and Wang, Hanwen and Perdikaris, Paris},
year         = {2023},
eprint       = {2308.08468},
archivePrefix= {arXiv},
primaryClass = {cs.LG},
doi          = {10.48550/arXiv.2308.08468}
}

@inproceedings{krishnapriyan2021failuremodes,
title        = {Characterizing possible failure modes in physics-informed neural networks},
author       = {Krishnapriyan, Aditi S and Gholami, Amir and Zhe, Shandian and Kirby, Robert M and Mahoney, Michael W},
booktitle    = {Advances in Neural Information Processing Systems},
volume       = {34},
pages        = {26548--26560},
year         = {2021}
}

@inproceedings{rahaman2019spectralbias,
title        = {On the spectral bias of neural networks},
author       = {Rahaman, Nasim and Baratin, Aristide and Arpit, Devansh and Draxler, Felix and Lin, Min and Hamprecht, Fred and Bengio, Yoshua and Courville, Aaron},
booktitle    = {Proceedings of the 36th International Conference on Machine Learning},
series       = {Proceedings of Machine Learning Research},
volume       = {97},
pages        = {5301--5310},
year         = {2019},
publisher    = {PMLR}
}

@misc{tancik2020fourierfeatures,
title        = {Fourier Features Let Networks Learn High Frequency Functions in Low Dimensional Domains},
author       = {Tancik, Matthew and Srinivasan, Pratul P. and Mildenhall, Ben and Fridovich-Keil, Sara and Raghavan, Nithin and Singhal, Utkarsh and Ramamoorthi, Ravi and Barron, Jonathan T. and Ng, Ren},
year         = {2020},
eprint       = {2006.10739},
archivePrefix= {arXiv},
primaryClass = {cs.CV}
}

@article{salimans2016weightnorm,
  title={Weight normalization: A simple reparameterization to accelerate training of deep neural networks},
  author={Salimans, Tim and Kingma, Durk P},
  journal={Advances in neural information processing systems},
  volume={29},
  year={2016}
}

@inproceedings{bengio2009curriculum,
  title={Curriculum learning},
  author={Bengio, Yoshua and Louradour, J{\'e}r{\^o}me and Collobert, Ronan and Weston, Jason},
  booktitle={Proceedings of the 26th annual international conference on machine learning},
  pages={41--48},
  year={2009}
}

@article{wu2023sampling,
title        = {A comprehensive study of non-adaptive and residual-based adaptive sampling for physics-informed neural networks},
author       = {Wu, Chenxi and Zhu, Min and Tan, Qinyang and Kartha, Yadhu and Lu, Lu},
journal      = {Computer Methods in Applied Mechanics and Engineering},
volume       = {403},
pages        = {115671},
year         = {2023},
doi          = {10.1016/j.cma.2022.115671}
}

@article{jagtap2020xpinns,
title        = {Extended physics-informed neural networks ({XPINNs}): A generalized space-time domain decomposition based deep learning framework for nonlinear partial differential equations},
author       = {Jagtap, Ameya D. and Karniadakis, George Em},
journal      = {Communications in Computational Physics},
volume       = {28},
number       = {5},
pages        = {2002--2041},
year         = {2020},
doi          = {10.4208/cicp.OA-2020-0164}
}

@misc{kingma2014adam,
title        = {Adam: A method for stochastic optimization},
author       = {Kingma, Diederik P. and Ba, Jimmy},
year         = {2014},
eprint       = {1412.6980},
archivePrefix= {arXiv},
primaryClass = {cs.LG}
}

@article{liu1989lbfgs,
title        = {On the limited memory {BFGS} method for large scale optimization},
author       = {Liu, Dong C. and Nocedal, Jorge},
journal      = {Mathematical Programming},
volume       = {45},
number       = {1},
pages        = {503--528},
year         = {1989},
doi          = {10.1007/BF01589116}
}

@misc{singh2021nysnewton,
title        = {{Nys-Newton}: Nystr{"o}m-Approximated Curvature for Stochastic Optimization},
author       = {Singh, Dinesh and Tankaria, Hardik and Yamada, Makoto},
year         = {2021},
eprint       = {2110.08577},
archivePrefix= {arXiv},
primaryClass = {cs.LG}
}

@inproceedings{rathore2024pinnlosslandscape,
title        = {Challenges in Training {PINNs}: A Loss Landscape Perspective},
author       = {Rathore, Pratik and Lei, Weimu and Frangella, Zachary and Lu, Lu and Udell, Madeleine},
booktitle    = {Proceedings of the 41st International Conference on Machine Learning},
series       = {Proceedings of Machine Learning Research},
volume       = {235},
pages        = {42159--42191},
year         = {2024},
publisher    = {PMLR}
}

@misc{micikevicius2018mixedprecision,
title        = {Mixed Precision Training},
author       = {Micikevicius, Paulius and Narang, Sharan and Alben, Jonah and Diamos, Gregory and Elsen, Erich and Garcia, David and Ginsburg, Boris and Houston, Michael and Kuchaiev, Oleksii and Venkatesh, Ganesh and Wu, Hao},
year         = {2018},
eprint       = {1710.03740},
archivePrefix= {arXiv},
primaryClass = {cs.LG}
}

@article{ghia1982highre,
title        = {High-{Re} solutions for incompressible flow using the Navier-Stokes equations and a multigrid method},
author       = {Ghia, U. and Ghia, K. N. and Shin, C. T.},
journal      = {Journal of Computational Physics},
volume       = {48},
number       = {3},
pages        = {387--411},
year         = {1982},
doi          = {10.1016/0021-9991(82)90058-4}
}

@article{wang2025simulating,
  title={Simulating three-dimensional turbulence with physics-informed neural networks},
  author={Wang, Sifan and Sankaran, Shyam and Fan, Xiantao and Stinis, Panos and Perdikaris, Paris},
  journal={arXiv preprint arXiv:2507.08972},
  year={2025}
}

@article{shaviner2025pinns,
  title={PINNs for solving unsteady Maxwell’s equations: convergence issues and comparative assessment with compact schemes},
  author={Shaviner, Gal G and Chandravamsi, Hemanth and Pisnoy, Shimon and Chen, Ziv and Frankel, Steven H},
  journal={Neural Computing and Applications},
  volume={37},
  number={29},
  pages={24103--24122},
  year={2025},
  publisher={Springer}
}

@article{chandravamsi2025spectral,
  title={Spectral Bottleneck in Sinusoidal Representation Networks: Noise is All You Need},
  author={Chandravamsi, Hemanth and Shenoy, Dhanush V and Zinn, Itay and Chen, Ziv and Pisnoy, Shimon and Frankel, Steven H},
  journal={arXiv preprint arXiv:2509.09719},
  year={2025}
}

@article{chen2025quantum,
  title={Quantum Physics-Informed Neural Networks for Maxwell's Equations: Circuit Design," Black Hole" Barren Plateaus Mitigation, and GPU Acceleration},
  author={Chen, Ziv and Shaviner, Gal G and Chandravamsi, Hemanth and Pisnoy, Shimon and Frankel, Steven H and Pereg, Uzi},
  journal={arXiv preprint arXiv:2506.23246},
  year={2025}
}

@article{arzani2021nearwall,
  title   = {Uncovering near-wall blood flow from sparse data with physics-informed neural networks},
  author  = {Arzani, Amirhossein and Wang, Jian Xun and D'Souza, Roshan M.},
  journal = {Physics of Fluids},
  volume  = {33},
  number  = {7},
  pages   = {071905},
  year    = {2021},
  doi     = {10.1063/5.0055600}
}

@article{raissi2019pinns,
  title   = {Physics-informed neural networks: A deep learning framework for solving forward and inverse problems involving nonlinear partial differential equations},
  author  = {Raissi, Maziar and Perdikaris, Paris and Karniadakis, George E.},
  journal = {Journal of Computational Physics},
  volume  = {378},
  pages   = {686--707},
  year    = {2019},
  doi     = {10.1016/j.jcp.2018.10.045}
}

@article{sod1978shock,
  title   = {A survey of several finite difference methods for systems of nonlinear hyperbolic conservation laws},
  author  = {Sod, Gary A.},
  journal = {Journal of Computational Physics},
  volume  = {27},
  number  = {1},
  pages   = {1--31},
  year    = {1978},
  doi     = {10.1016/0021-9991(78)90023-2}
}

@book{toro2009riemann,
  title     = {Riemann Solvers and Numerical Methods for Fluid Dynamics},
  author    = {Toro, Eleuterio F.},
  publisher = {Springer},
  edition   = {3rd},
  year      = {2009},
  doi       = {10.1007/b79761}
}

@inproceedings{rahimi2007random,
  title={Random features for large-scale kernel machines},
  author={Rahimi, Ali and Recht, Benjamin},
  booktitle={Advances in neural information processing systems},
  pages={1177--1184},
  year={2007}
}

@article{wang2022random,
  title={Random weight factorization improves the training of continuous neural representations},
  author={Wang, Sifan and Wang, Hanwen and Seidman, Jacob H and Perdikaris, Paris},
  journal={arXiv preprint arXiv:2210.01274},
  year={2022}
}

@article{dong2021,
  author    = {Dong, S. and Ni, N.},
  title     = {A method for representing periodic functions and enforcing exactly periodic boundary conditions with deep neural networks},
  journal   = {Journal of Computational Physics},
  volume    = {435},
  pages     = {110242},
  year      = {2021},
  doi       = {10.1016/j.jcp.2021.110242},
  url       = {https://doi.org/10.1016/j.jcp.2021.110242}
}

@article{xiang2022self,
  title={Self-adaptive loss balanced physics-informed neural networks},
  author={Xiang, Zixue and Peng, Wei and Liu, Xu and Yao, Wen},
  journal={Neurocomputing},
  volume={496},
  pages={11--34},
  year={2022},
  publisher={Elsevier}
}

@article{wang2021understanding,
  title={Understanding and mitigating gradient flow pathologies in physics-informed neural networks},
  author={Wang, Sifan and Teng, Yujun and Perdikaris, Paris},
  journal={SIAM Journal on Scientific Computing},
  volume={43},
  number={5},
  pages={A3055--A3081},
  year={2021},
  publisher={SIAM}
}

@article{gu2023nysnewton,
  author  = {Shaobo Gu and Xiao Wang and Ziyan Wang and Jinchao Xu},
  title   = {NysNewton: Fast Neural Network Training with Nyström Approximation},
  journal = {arXiv preprint},
  year    = {2023},
  eprint  = {2307.10697},
  archivePrefix = {arXiv},
  primaryClass = {cs.LG}
}

@inproceedings{liu2024finer,
  title={Finer: Flexible spectral-bias tuning in implicit neural representation by variable-periodic activation functions},
  author={Liu, Zhen and Zhu, Hao and Zhang, Qi and Fu, Jingde and Deng, Weibing and Ma, Zhan and Guo, Yanwen and Cao, Xun},
  booktitle={Proceedings of the IEEE/CVF Conference on Computer Vision and Pattern Recognition},
  pages={2713--2722},
  year={2024}
}

@book{nocedal2006numerical,
  title={Numerical optimization},
  author={Nocedal, Jorge and Wright, Stephen J},
  year={2006},
  publisher={Springer}
}

@article{wang2024respecting,
  title={Respecting causality for training physics-informed neural networks},
  author={Wang, Sifan and Sankaran, Shyam and Perdikaris, Paris},
  journal={Computer Methods in Applied Mechanics and Engineering},
  volume={421},
  pages={116762},
  year={2024},
  publisher={Elsevier},
  doi={10.1016/j.cma.2024.116762},
  note={Preprint originally released in 2022 as arXiv:2203.07404}
}

@article{penwarden2023unified,
  title={A unified scalable framework for causal sweeping strategies in physics-informed neural networks},
  author={Penwarden, Michael and Zhe, Shandian and Narayan, Akil and Kirby, Robert M.},
  journal={Journal of Computational Physics},
  volume={493},
  pages={112461},
  year={2023},
  publisher={Elsevier},
  doi={10.1016/j.jcp.2023.112461}
}

@article{wang2025discovery,
  title={Discovery of unstable singularities},
  author={Wang, Yongji and Bennani, Mehdi and Martens, James and Racani{\`e}re, S{\'e}bastien and Blackwell, Sam and Matthews, Alex and Nikolov, Stanislav and Cao-Labora, Gonzalo and Park, Daniel S and Arjovsky, Martin and others},
  journal={arXiv preprint arXiv:2509.14185},
  year={2025}
}

@article{sun2020surrogate,
  title={Surrogate modeling for fluid flows based on physics-constrained deep learning without simulation data},
  author={Sun, Luning and Gao, Han and Pan, Shaowu and Wang, Jian-Xun},
  journal={Computer Methods in Applied Mechanics and Engineering},
  volume={361},
  pages={112732},
  year={2020},
  publisher={Elsevier}
}

@article{wang2025physics,
  title={Physics-guided deep learning for dynamical systems: A survey},
  author={Wang, Rui and Yu, Rose},
  journal={ACM Computing Surveys},
  volume={58},
  number={5},
  pages={1--31},
  year={2025},
  publisher={ACM New York, NY}
}

@article{cai2021physics,
  title={Physics-informed neural networks for heat transfer problems},
  author={Cai, Shengze and Wang, Zhicheng and Wang, Sifan and Perdikaris, Paris and Karniadakis, George Em},
  journal={Journal of Heat Transfer},
  volume={143},
  number={6},
  pages={060801},
  year={2021},
  publisher={American Society of Mechanical Engineers}
}

@article{du2023state,
  title={State estimation in minimal turbulent channel flow: A comparative study of 4DVar and PINN},
  author={Du, Yifan and Wang, Mengze and Zaki, Tamer A},
  journal={International Journal of Heat and Fluid Flow},
  volume={99},
  pages={109073},
  year={2023},
  publisher={Elsevier}
}

@article{lagaris1998artificial,
  title={Artificial neural networks for solving ordinary and partial differential equations},
  author={Lagaris, Isaac E and Likas, Aristidis and Fotiadis, Dimitrios I},
  journal={IEEE transactions on neural networks},
  volume={9},
  number={5},
  pages={987--1000},
  year={1998},
  publisher={IEEE}
}

@article{wang2024piratenets,
  title={Piratenets: Physics-informed deep learning with residual adaptive networks},
  author={Wang, Sifan and Li, Bowen and Chen, Yuhan and Perdikaris, Paris},
  journal={Journal of Machine Learning Research},
  volume={25},
  number={402},
  pages={1--51},
  year={2024}
}

@article{penwarden2023causal,
  title   = {A unified scalable framework for causal sweeping strategies for physics-informed neural networks and their temporal decompositions},
  author  = {Penwarden, Michael and Jagtap, Ameya D. and Zhe, Shandian and Karniadakis, George Em and Kirby, Robert M.},
  journal = {Journal of Computational Physics},
  volume  = {481},
  pages   = {112001},
  year    = {2023},
  publisher = {Elsevier}
}

@article{roy2024adaptive,
  title={Adaptive interface-PINNs (AdaI-PINNs): An efficient physics-informed neural networks framework for interface problems},
  author={Roy, Sumanta and Annavarapu, Chandrasekhar and Roy, Pratanu and Sarma, Antareep Kumar},
  journal={arXiv preprint arXiv:2406.04626},
  year={2024}
}

@article{berrone2023enforcing,
  title={Enforcing Dirichlet boundary conditions in physics-informed neural networks and variational physics-informed neural networks},
  author={Berrone, Stefano and Canuto, Claudio and Pintore, Moreno and Sukumar, Natarajan},
  journal={Heliyon},
  volume={9},
  number={8},
  year={2023},
  publisher={Elsevier}
}

@article{lu2021physics,
  title={Physics-informed neural networks with hard constraints for inverse design},
  author={Lu, Lu and Pestourie, Raphael and Yao, Wenjie and Wang, Zhicheng and Verdugo, Francesc and Johnson, Steven G},
  journal={SIAM Journal on Scientific Computing},
  volume={43},
  number={6},
  pages={B1105--B1132},
  year={2021},
  publisher={SIAM}
}

@article{mcclenny2023self,
  title={Self-adaptive physics-informed neural networks},
  author={McClenny, Levi D and Braga-Neto, Ulisses M},
  journal={Journal of Computational Physics},
  volume={474},
  pages={111722},
  year={2023},
  publisher={Elsevier}
}

@inproceedings{chen2018gradnorm,
  title={Gradnorm: Gradient normalization for adaptive loss balancing in deep multitask networks},
  author={Chen, Zhao and Badrinarayanan, Vijay and Lee, Chen-Yu and Rabinovich, Andrew},
  booktitle={International conference on machine learning},
  pages={794--803},
  year={2018},
  organization={PMLR}
}

@article{mao2023physics,
  title={Physics-informed neural networks with residual/gradient-based adaptive sampling methods for solving partial differential equations with sharp solutions},
  author={Mao, Zhiping and Meng, Xuhui},
  journal={Applied Mathematics and Mechanics},
  volume={44},
  number={7},
  pages={1069--1084},
  year={2023},
  publisher={Springer}
}

@article{bonfanti2024generalization,
  title={On the generalization of pinns outside the training domain and the hyperparameters influencing it},
  author={Bonfanti, Andrea and Santana, Roberto and Ellero, Marco and Gholami, Babak},
  journal={Neural Computing and Applications},
  volume={36},
  number={36},
  pages={22677--22696},
  year={2024},
  publisher={Springer}
}

@article{toscano2025pinns,
  title={From pinns to pikans: Recent advances in physics-informed machine learning},
  author={Toscano, Juan Diego and Oommen, Vivek and Varghese, Alan John and Zou, Zongren and Ahmadi Daryakenari, Nazanin and Wu, Chenxi and Karniadakis, George Em},
  journal={Machine Learning for Computational Science and Engineering},
  volume={1},
  number={1},
  pages={1--43},
  year={2025},
  publisher={Springer}
}

@article{norambuena2024physics,
  title={Physics-informed neural networks for quantum control},
  author={Norambuena, Ariel and Mattheakis, Marios and Gonz{\'a}lez, Francisco J and Coto, Ra{\'u}l},
  journal={Physical Review Letters},
  volume={132},
  number={1},
  pages={010801},
  year={2024},
  publisher={APS}
}

@article{trahan2024quantum,
  title={Quantum physics-informed neural networks},
  author={Trahan, Corey and Loveland, Mark and Dent, Samuel},
  journal={Entropy},
  volume={26},
  number={8},
  pages={649},
  year={2024},
  publisher={MDPI}
}

@article{mitarai2018quantum,
  title={Quantum circuit learning},
  author={Mitarai, Kosuke and Negoro, Makoto and Kitagawa, Masahiro and Fujii, Keisuke},
  journal={Physical Review A},
  volume={98},
  number={3},
  pages={032309},
  year={2018},
  publisher={APS}
}

@article{schuld2019evaluating,
  title={Evaluating analytic gradients on quantum hardware},
  author={Schuld, Maria and Bergholm, Ville and Gogolin, Christian and Izaac, Josh and Killoran, Nathan},
  journal={Physical Review A},
  volume={99},
  number={3},
  pages={032331},
  year={2019},
  publisher={APS}
}

@article{suresh1997accurate,
  title={Accurate monotonicity-preserving schemes with Runge--Kutta time stepping},
  author={Suresh, Ambady and Huynh, Hung T},
  journal={Journal of Computational Physics},
  volume={136},
  number={1},
  pages={83--99},
  year={1997},
  publisher={Elsevier}
}

@article{schulz1993classification,
  title={Classification of the Riemann problem for two-dimensional gas dynamics},
  author={Schulz-Rinne, Carsten W},
  journal={SIAM journal on mathematical analysis},
  volume={24},
  number={1},
  pages={76--88},
  year={1993},
  publisher={SIAM}
}

@article{kurganov2001semidiscrete,
  title={Semidiscrete central-upwind schemes for hyperbolic conservation laws and Hamilton--Jacobi equations},
  author={Kurganov, Alexander and Noelle, Sebastian and Petrova, Guergana},
  journal={SIAM Journal on Scientific Computing},
  volume={23},
  number={3},
  pages={707--740},
  year={2001},
  publisher={SIAM}
}

@misc{torq,
    key = {ZZZ},
    note = {The quantum simulation library we created, TorQ - Tensor Operations for Research of Quantum systems, is available at \url{https://github.com/zivchen9993/TorQ.git}},
}

@article{lloyd2013quantum,
  title={Quantum algorithms for supervised and unsupervised machine learning},
  author={Lloyd, Seth and Mohseni, Masoud and Rebentrost, Patrick},
  journal={arXiv preprint arXiv:1307.0411},
  year={2013}
}

@book{nielsen2010quantum,
  title={Quantum computation and quantum information},
  author={Nielsen, Michael A and Chuang, Isaac L},
  year={2010},
  publisher={Cambridge university press}
}

@article{james2001measurement,
  title={Measurement of qubits},
  author={James, Daniel FV and Kwiat, Paul G and Munro, William J and White, Andrew G},
  journal={Physical Review A},
  volume={64},
  number={5},
  pages={052312},
  year={2001},
  publisher={APS}
}

@article{micikevicius2017mixed,
  title={Mixed precision training},
  author={Micikevicius, Paulius and Narang, Sharan and Alben, Jonah and Diamos, Gregory and Elsen, Erich and Garcia, David and Ginsburg, Boris and Houston, Michael and Kuchaiev, Oleksii and Venkatesh, Ganesh and others},
  journal={arXiv preprint arXiv:1710.03740},
  year={2017}
}

@article{xue2022novel,
  title={A novel automatic mixed precision approach for physics informed training},
  author={Xue, Jinze and Subramaniam, Akshay and Hoemmen, Mark},
  journal={Advances in Neural Information Processing Systems},
  year={2022}
}

@article{shoeybi2019megatron,
  title={Megatron-lm: Training multi-billion parameter language models using model parallelism},
  author={Shoeybi, Mohammad and Patwary, Mostofa and Puri, Raul and LeGresley, Patrick and Casper, Jared and Catanzaro, Bryan},
  journal={arXiv preprint arXiv:1909.08053},
  year={2019}
}

@inproceedings{narayanan2021efficient,
  title={Efficient large-scale language model training on gpu clusters using megatron-lm},
  author={Narayanan, Deepak and Shoeybi, Mohammad and Casper, Jared and LeGresley, Patrick and Patwary, Mostofa and Korthikanti, Vijay and Vainbrand, Dmitri and Kashinkunti, Prethvi and Bernauer, Julie and Catanzaro, Bryan and others},
  booktitle={Proceedings of the international conference for high performance computing, networking, storage and analysis},
  pages={1--15},
  year={2021}
}

@article{huang2019gpipe,
  title={Gpipe: Efficient training of giant neural networks using pipeline parallelism},
  author={Huang, Yanping and Cheng, Youlong and Bapna, Ankur and Firat, Orhan and Chen, Dehao and Chen, Mia and Lee, HyoukJoong and Ngiam, Jiquan and Le, Quoc V and Wu, Yonghui and others},
  journal={Advances in neural information processing systems},
  volume={32},
  year={2019}
}

@article{fedus2022switch,
  title={Switch transformers: Scaling to trillion parameter models with simple and efficient sparsity},
  author={Fedus, William and Zoph, Barret and Shazeer, Noam},
  journal={Journal of Machine Learning Research},
  volume={23},
  number={120},
  pages={1--39},
  year={2022}
}

@article{lepikhin2020gshard,
  title={Gshard: Scaling giant models with conditional computation and automatic sharding},
  author={Lepikhin, Dmitry and Lee, HyoukJoong and Xu, Yuanzhong and Chen, Dehao and Firat, Orhan and Huang, Yanping and Krikun, Maxim and Shazeer, Noam and Chen, Zhifeng},
  journal={arXiv preprint arXiv:2006.16668},
  year={2020}
}

@incollection{zaki2025data,
  title={Data assimilation and flow estimation},
  author={Zaki, Tamer A and Wang, Mengze},
  booktitle={Data Driven Analysis and Modeling of Turbulent Flows},
  pages={129--181},
  year={2025},
  publisher={Elsevier}
}

@article{du2021evolutional,
  title={Evolutional deep neural network},
  author={Du, Yifan and Zaki, Tamer A},
  journal={Physical Review E},
  volume={104},
  number={4},
  pages={045303},
  year={2021},
  publisher={APS}
}

@article{paszke2019pytorch,
  title={Pytorch: An imperative style, high-performance deep learning library},
  author={Paszke, Adam and Gross, Sam and Massa, Francisco and Lerer, Adam and Bradbury, James and Chanan, Gregory and Killeen, Trevor and Lin, Zeming and Gimelshein, Natalia and Antiga, Luca and others},
  journal={Advances in neural information processing systems},
  volume={32},
  year={2019}
}

@inproceedings{hennigh2021nvidia,
  title={NVIDIA SimNet™: An AI-accelerated multi-physics simulation framework},
  author={Hennigh, Oliver and Narasimhan, Susheela and Nabian, Mohammad Amin and Subramaniam, Akshay and Tangsali, Kaustubh and Fang, Zhiwei and Rietmann, Max and Byeon, Wonmin and Choudhry, Sanjay},
  booktitle={International conference on computational science},
  pages={447--461},
  year={2021},
  organization={Springer}
}

@article{haghighat2021sciann,
  title={SciANN: A Keras/TensorFlow wrapper for scientific computations and physics-informed deep learning using artificial neural networks},
  author={Haghighat, Ehsan and Juanes, Ruben},
  journal={Computer Methods in Applied Mechanics and Engineering},
  volume={373},
  pages={113552},
  year={2021},
  publisher={Elsevier}
}

@article{hao2024pinnacle,
  title={Pinnacle: A comprehensive benchmark of physics-informed neural networks for solving pdes},
  author={Hao, Zhongkai and Yao, Jiachen and Su, Chang and Su, Hang and Wang, Ziao and Lu, Fanzhi and Xia, Zeyu and Zhang, Yichi and Liu, Songming and Lu, Lu and others},
  journal={Advances in Neural Information Processing Systems},
  volume={37},
  pages={76721--76774},
  year={2024}
}

\end{document}